\newtheorem{theorem}{Theorem}[section]
\newtheorem{proposition}[theorem]{Proposition}
\newtheorem{problem}{Problem}[section]
\newtheorem{proof}{Proof}[section]
\newtheorem{remark}{Remark}[section]
\newtheorem{acknowledgements}{Acknowledgements}[section]
\newcommand{\prox}{\ensuremath{\text{\rm prox}}}
\newcommand{\abs}{\ensuremath{\text{\rm abs}}}
\newcommand{\sign}{\ensuremath{\text{\rm sign}}}
\newcommand{\RR}{\mathbb{R}}
\newsavebox{\results}
\begin{document}

\title{Sparse hierarchical interaction learning with epigraphical projection\thanks{This is a pre-print of an article published in Journal of Signal Processing Systems. The final authenticated version is available online at: https://doi.org/10.1007/s11265-019-01478-1. Corresponding author N. Pustelnik \textrm{nelly.pustelnik@ens-lyon.fr}.}
}


\author{Mingyuan Jiu  $^\ddag$\thanks{School of Information Engineering, Zhengzhou University, Zhengzhou, 450001, China. This work was supported by a grant from the National Natural Science Foundation of China (No.~61806180)} \and
        Nelly Pustelnik \thanks{Univ Lyon, Ens de Lyon, Univ Lyon 1, CNRS, Laboratoire de Physique, Lyon, 69342, France}\and
	Stefan Janaqi  \thanks{TOTAL SA, Direction Scientifique, 92069 Paris La D\'efense, France}\and
	M\'eriam Chebre  $^\S$\and
        Lin Qi  $^\dag$\and 
	Philippe Ricoux  $^\S$
}



\date{26 August 2019}

\maketitle

\begin{abstract}
This work focuses on learning optimization problems with quadratical interactions between variables, which go beyond the additive models of traditional linear learning. We investigate more specifically two different methods encountered in the literature to deal with this problem: ``hierNet" and structured-sparsity regularization, and study their connections. We propose a primal-dual proximal algorithm based on an epigraphical projection to optimize a general formulation of these learning problems. The experimental setting first highlights the improvement of the proposed procedure compared to state-of-the-art methods based on fast iterative shrinkage-thresholding algorithm (i.e.~FISTA) or alternating direction method of multipliers (i.e.~ADMM), and then, using the proposed flexible optimization framework,  we provide fair comparisons between the different hierarchical penalizations and their improvement over the standard $\ell_1$-norm penalization. 
The experiments are conducted both on synthetic and real data, and they clearly show that the proposed primal-dual proximal algorithm based on epigraphical projection is efficient and effective to solve and investigate the problem of  hierarchical interaction learning.
\end{abstract}

\section{Introduction} \label{sec:intro}

Learning interactions between features is of great interest in data processing.  In this work we focus on quadratic interaction effects between the features that extend linear models. Our work focuses both on the \textit{regression problem} and \textit{multiclass SVM}, which are still subjects of active research when small dataset are involved (especially in medical~\cite{SPILKA:2017:A} or physics~\cite{PASCAL:2018:A} applications).\\

\noindent \textbf{Feature selection} -- Our work follows the line of feature selection methods~\cite{Bach_F_2012_j-ftml_optimization_sip,Blondel2013, Chakraborty2015, Laporte2014, Chierchia2015_a,Xu2017}, which aim to remove the redundant features and only keep the informative ones. The relevant features are usually correlated and belong to the same group. 
In this work, additionally to perform feature selection, we learn the interactions between the selected features by making some hierarchy constraints, where the interactions may occur either between the selected features or if only one of both features is active.
\\

\noindent \textbf{Knowledge from the interactions} -- Features interactions can provide us a new knowledge about the correlation between the subjects. For instance, we can refer to gene interactions aiming to highlight relevant regulatory relationships for genes~\cite{Pirayre_A_2015_j-bmb-bioinformatics_bra_cbr} or as mentioned in \cite{Bien2013} ``the co-occurrence of two symptoms may lead a doctor to be confident that a patient has a certain disease, whereas the presence of either symptom without the other would provide only a moderate indication of that disease''. \\

\noindent \textbf{Better discrimination} --  Dealing with interactions allows us to increase the feature space  in order to provide a better discrimination in the learning process~\cite{Anderson1975}. The integration of quadratic interactions in the learning problem is not new, for example, discriminant quadratic learning \cite{Anderson1975, Randles1978, Witten2009} learns the covariance matrix in the quadratic term to improve the discrimination ability.  {We can also refer to efficient multilevel procedures~\cite{Agarwal_A_2014,HaoFengZhang2018} starting with fast learning algorithm focus on the linear model and then adding higher-order interaction features, possibly using the learned weights as a guide. }\\

\noindent \textbf{Dimensionality challenge and sparsity} -- One major challenge  when dealing with quadratic interaction learning is that the interaction number quadratically increases with the feature size, for example,  {1000 features will have about one million possible interactions}, this therefore results in an overfitting problem due to insufficient observation samples in most of the regression problems and for some classification ones. To overcome this weakness, the linear weights and the quadratic interactions can be assumed to be sparse because most features would not contribute to the decision. Sparsity-based regularization is known to be mainly suitable when the feature size is larger than the training samples. Various sparsity regularizations have been proposed and extensively studied in the additive model context, for instance, involving $\ell_0$-pseudo-norm \cite{Weston2003}, $\ell_1$-norm \cite{Tibshirani1996}, $\ell_\infty$-norm \cite{Zou2008},  or structural sparsity norm \cite{Bach2012a}. See \cite{Bach_F_2012_j-ftml_optimization_sip} for an exhaustive list of sparse-based regularization in learning and \cite{Gui2016} to refer to structural sparsity. The formalism we consider in this work follows ideas proposed in  \cite{Zhao_P_2009_j-ann-stat_com_apf, Bien2013,Haris2014,Lim2015,She_Y_2016_j-am-stat-ass_gro_res}.  A detailed comparison with these state-of-the-art methods is provided in next section. \\

\noindent \textbf{Contributions} --  Quadratic interaction learning imposes a specific design of the penalization term (this will be formally described in Section~\ref{sec:model}).  Several choices of penalizations have already been proposed in the literature, but each of them relies on a different algorithmic strategy, leading to unfair numerical comparisons. Indeed, do the numerical differences in the learning performance come from the penalization choice or from the algorithmic schemes ? For instance, it is well known that the number of inner iterations (as proposed in \cite{Jenatton2011b}) is often a parameter tricky to adjust, or that the algorithmic parameters such as the step-size may impact a lot the convergence speed \cite{Haris2014}. For all these reasons, this work is dedicated to provide  a common algorithmic scheme without inner iterations aiming to handle with the most recents state-of-the-art penalizations. Our detailed contributions are listed below:
\begin{itemize}
 \item  {Propose a new algorithmic scheme relying of primal-dual algorithm and derive new closed form of epigraphical projections in order to solve the unifying minimization problem (cf. Problem~\ref{pb:main}) proposed in \cite{She_Y_2016_j-am-stat-ass_gro_res}  when $q=+\infty$ and $z(\cdot) = \Vert \cdot \Vert_r$ with $r=\{1,+\infty\}$;} 
 \item  {Compare the proposed algorithms to the state-of-the-art methods (FISTA and ADMM), showing a unique and efficient algorithmic scheme for weak and strong formulation and  for $r=\{1,+\infty\}$ allowing us to avoid inner iterations.}
 \item Provide the counterpart of the regression minimization Problem~\ref{pb:main} for classification task. 
 \item  {Conduct extensive experiments in the regression and classification framework on the simulated data, on two disease applications (i.e.~HIV and Parkinson) to analyze the correlations between the factors for the diseases, validating the effectiveness and efficiency of the proposed algorithms.}
\end{itemize}
This work improves over our preliminary contribution \cite{Jiumlsp2018}  where we restricted our study to classification (cf. section~\ref{sec:multiclassSVM} of this work). The algorithm was  derived for $r=1$, relying only on Proposition~\ref{prop:epil1} and for which the proof was not provided. The experiments were conducted on face classification while in this work we focus on synthetic data and diseases applications. \\

\noindent \textbf{Outline} --This paper is organized as follows: Section~\ref{sec:model} firstly describes the formal problem in the context of regression and refers to related works. Section~\ref{sec:relatedwork} derives an epigraphical writing of the objective function. Section~\ref{sec:frdual} presents the primal-dual proximal algorithm iterations and the convergence guarantees. Its specification to our minimization problem with the hierarchical regularization term is specified and new epigraphical projections are provided. Section~\ref{sec:multiclassSVM} provides the objective function and the associated algorithm in the context of multiclass SVM with hierarchical interactions.
Section~\ref{sec:experiments} evaluates the performance of the proposed strategy compared to FISTA and ADMM formulations proposed in \cite{Jenatton2011b,Bien2013} both on synthetic data and real applications. Finally Section~\ref{sec:conclusion} gives our conclusion.

\section{Regression model} \label{sec:model}
The regression training set is denoted $\mathcal{R} = \{ \big( {y}_\ell, \phi({\bf x}_\ell) \big) \in \mathbb{R} \times \mathbb{R}^{N}\,\vert\, \ell\in \{1,\ldots,L\}\}$,  {where ${\bf x}_\ell$ can denote either data without structure or with structure such as a signal, an image or a graph of size $M$. Then $\phi({\bf x}_\ell)$ denotes the coefficients associated with the data ${\bf x}_\ell$, e.g time-frequency coefficients~\cite{Flandrin_P_1999}, scattering coefficients~\cite{Bruna_J_2013} or CNN coefficients~\cite{LeCun_Y_1995}, where generally $N\gg M$.} 

Several minimization strategies have been provided in the literature to jointly select discriminating features among the features $\phi({\bf x}_\ell)$ and identify meaningful interactions between these features. To clarify the state-of-the-art contributions in this context, we propose to recall a general minimization problem derived in \cite{She_Y_2016_j-am-stat-ass_gro_res}:
\begin{problem} Let $\mathcal{R} = \{ \big( {y}_\ell, \phi({\bf x}_\ell) \big) \in \mathbb{R} \times \mathbb{R}^{N}\,\vert\, \ell\in \{1,\ldots,L\}\}$ be the training regression dataset. We aim to estimate the regression weight vector $\widehat{\mathbf{v}}$ and the interaction matrix $\widehat{\mathbf{\Theta}}$ solving:
\begin{equation}
(\widehat{\mathbf{v}}, \widehat{\mathbf{\Theta}}) \in \underset{\substack{\mathbf{v}\in \mathbb{R}^N\\\Theta\in \mathbb{R}^{N\times N}}}{\textrm{arg min}}\; \frac{1}{2} \sum_{\ell=1}^L  \big( y_\ell - \phi({\bf x}_\ell)^\top \mathbf{v} - \phi({\bf x}_\ell)^\top \mathbf{\Theta} \phi({\bf x}_\ell) \big)^2 + \Omega(\mathbf{v},  \mathbf{\Theta} ) \label{equa:hier0}
\end{equation}
where, for every $\mathbf{v} = \big(v^{(i)}\big)_{1\leq i \leq N}$  and $ \mathbf{\Theta}  = \big(\Theta^{(i,j)}\big)_{1\leq i \leq N, 1\leq j \leq N}$,
\begin{equation} 
\Omega(\mathbf{v}, \Theta) = \frac{\lambda}{2} \Vert \Theta \Vert_1 +  \lambda \sum_{i=1}^N \Vert \big(v^{(i)}, z\big(\Theta^{(i,\cdot)}\big)\big)\Vert_q  + \iota_C(\Theta) 
 \label{equa:gencons}
\end{equation}
with $z \colon \mathbb{R}^N \to \mathbb{R}$, $1\leq q \leq +\infty$, and $\iota_C$ denotes the indicator function\footnote{For every $x\in \mathcal{H} $, $\iota_C(x) = 0$ if $x\in C$ and $+\infty$ otherwise.} of a closed convex set $C\subset \mathbb{R}^{N\times N}$.
\label{pb:main}
\end{problem}

\subsection{Choice for the constraint $C$: Weak/strong hierarchy}
The hierarchy penalization $\Omega$ creates the relationship between $\mathbf{v}$ and $ \mathbf{\Theta} $, as shown in Fig.~\ref{fig:interactiondemo}. 
Two types of  hierarchy constraints have been defined in ``hierNet" \cite{Bien2013}: (i) \textit{weak hierarchy} where the interactions happen (i.e., $\Theta^{(i,j)}\neq 0$)  {only} if one of the associated  features in $\mathbf{v}$ is non-zero (i.e., $v^{(i)}\neq 0$ or $v^{(j)}\neq 0$) and (ii) \textit{strong hierarchy} when both associated features are non-zero.   These two types of hierarchy are imposed by means of the closed convex set $C\subset \mathbb{R}^{N\times N}$. The weak hierarchy is obtained with  $C= \mathbb{R}^{N\times N}$ and strong hierarchy by imposing a symmetric structure for the matrix of interactions using $C= S =  \{ \mathbf{\Theta}  \in \mathbb{R}^{N\times N} \,\vert \,  \mathbf{\Theta}  =  \mathbf{\Theta} ^\top\}$.

\begin{figure}[tbp]
\centering
\includegraphics[scale=0.4]{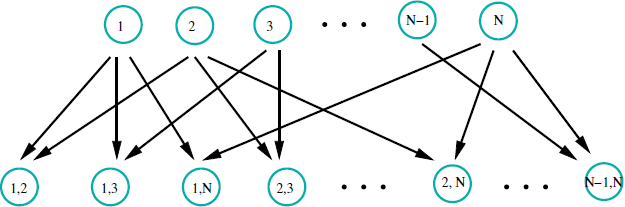}
\caption{Hierarchical feature interaction. The first row is $N$ additive features, and the second row is the interactions between pairs of features.} \label{fig:interactiondemo}
\end{figure}
\subsection{Positioning of Problem~\ref{pb:main} w.r.t state-of-the-art} 
Most of the state-of-the-art penalizations may be interpreted as a specific case of Problem~\ref{pb:main}, for instance:
\begin{itemize}
\item \cite{Jenatton2011b} :  $z(\cdot) =  (\cdot)^\top$,  $q=\{2,+\infty\}$,
\item \cite{Bien2013,Haris2014} : $z(\cdot)  = \Vert \cdot \Vert_1$ and $q=+\infty$,
\item \cite{She_Y_2016_j-am-stat-ass_gro_res} : $z(\cdot) =  (\cdot)^\top$ and $q=\{2\}$.
\end{itemize}
Note that \cite{Jenatton2011b} focuses on weak hierarchy using $C = \mathbb{R}^{N\times N}$. \cite{Bien2013} is encountered under the name ``hierNet", while \cite{Haris2014}  is named ``FAMILY" and \cite{She_Y_2016_j-am-stat-ass_gro_res} is ``Type-A GRESH". The latent overlapping group lasso formulation of \cite{Bien2013} is known as ``glinternet" \cite{Lim2015}.

From the algorithmic point of view several strategies can be encountered. In \cite{Jenatton2011b} and \cite{She_Y_2016_j-am-stat-ass_gro_res}, the iterations are derived from FISTA \cite{Beck_A_2009_j-siam-is_fast_istalip}. This procedure appears to be very efficient when $C = \mathbb{R}^{N\times N}$ while for $C=S$ it requires inner iterations based on Dykstra's algorithm that significantly slows the convergence \cite{Chaux_C_2009_siims_Nested_iafccirp}. On the other hand, \cite{Bien2013} and \cite{Haris2014} resort to ADMM to deal with $C=S$. 

Using recent developments of non-smooth convex optimization, the objective of this work is to provide a unified framework and an efficient algorithmic strategy based on primal-dual proximal algorithms and epigraphical projection to solve an epigraphical relaxation of~\eqref{equa:gencons} when $C = \mathbb{R}^{N\times N}$ or $C=S$, $z(\cdot)  = \Vert \cdot \Vert_r$ with $r=\{1,+\infty\}$  and $q=+\infty$.
The algorithmic procedure does not resort to inner iterations, thus leading to a faster implementation.\footnote{Matlab codes will be made available at the time of the publication.}

\section{Epigraphical formulation} \label{sec:relatedwork}

 {In order to handle with complex optimization problems, the two major solutions encountered in the literature are either to formulate it in the dual or to increase the dimensionality of the problem. Dealing with an epigraphical formulation belongs to this second class and it is mainly adapted to nonlinear constraints. We recall that the epigraph of a function $f\colon \mathcal{H}\to ]-\infty,+\infty]$ is defined as: $\mathrm{epi} f = \{ (x, \zeta)\in \mathcal{H} \times \mathbb{R}\,\vert\, f(x)\leq \zeta \}$, the projection of a point to this epigraph is called epigraphical projection. The utility of such epigraphical formulation can be illustrated when one wants to deal with a constraint of the form
$\Vert x\Vert_1\leq \eta$, for every $x = (x_i)_{i\in \mathbb{I}\subset \mathbb{N}}$ and $\eta>0$ is known, whose projection does not have a closed form expression. An alternative is then to replace this constraint with these two constraints: $\vert x_i\vert \leq \zeta_i$ and $\sum_{i\in \mathbb{I}} \zeta_i = \eta$ where the first one denotes an epigraphical constraint and the second one denotes an hyperplane constraint, both having a closed form expression for the associated projection~\cite{Chierchia2015}. }

The resolution of Problem~\ref{pb:main} is difficult, since it involves non-smooth convex functions and symmetry constraints. In \cite{Jenatton2011b}, Jenatton et al. solve the problem for $r=\infty$ in the weak hierarchy formulation by means of a proximal algorithm. However, their algorithm is not well adapted to strong hierarchy. In \cite{Bien2013}, the authors reformulate Problem~\ref{pb:main} when $r=1$ under an epigraphical formulation  which represents the problem as a set of hard constraints. This reformulation helps in the interpretation and it highlights a reduction in the shrinkage of certain main effects and an increase in the shrinkage of certain interactions. Its second advantage is to help in the design of the ADMM algorithmic scheme. 
The following proposition gives the epigraphical formulation of the minimization problem considered in this work.

\begin{proposition} \label{prop:epi}The minimization problem
\begin{equation}
\label{eq:regform}
 \underset{\mathbf{v}, \mathbf{\Theta} }{\textrm{minimize}}  \; \frac{1}{2} \sum_{\ell=1}^L  \big( y_\ell - \phi^\top({\bf x}_\ell) \mathbf{v} - \phi^\top({\bf x}_\ell) \mathbf{\Theta} \phi({\bf x}_\ell) \big)^2  + \Omega(\mathbf{v},  \mathbf{\Theta} )
\end{equation}
where, for every $\mathbf{v} = \big(v^{(i)}\big)_{1\leq i \leq N}$  and $ \mathbf{\Theta}  = \big(\Theta^{(i,j)}\big)_{1\leq i \leq N, 1\leq j \leq N}$,
\begin{equation} 
\Omega(\mathbf{v}, \Theta) = \frac{\lambda}{2} \Vert \mathbf{\Theta} \Vert_1 +  \lambda \sum_{i=1}^N \max \{v^{(i)},  \Vert \Theta^{(i,\cdot)}\Vert_r\}  + \iota_C(\mathbf{\Theta}) 
 \label{equa:gencons}
\end{equation}
can be written in an epigraphical framework as
\begin{multline}
\label{eq:equivform}
 \underset{(\mathbf{v}^+, \mathbf{v}^-,\mathbf{\Theta})\in O\times O \times C }{\textrm{minimize}}  \; \frac{1}{2} \sum_{\ell=1}^L \big( y_\ell  -   \phi^\top({\bf x}_\ell) (\mathbf{v}^+ - \mathbf{v}^- ) - \phi^\top({\bf x}_\ell) \mathbf{\Theta} \phi({\bf x}_\ell) \big)^2 \\+  \frac{\lambda}{2} \Vert  \mathbf{\Theta}  \Vert_1 +  \lambda 1^\top(\mathbf{v}^+ +\mathbf{v}^-) +\sum_{i=1}^N \iota_{E_r}\big(v^{+(i)},v^{-(i)}, \Theta^{(i,\cdot)}\big)
\end{multline}
where $O$ denotes the positive orthant, $\mathbf{v} = \mathbf{v}^+ - \mathbf{v}^-$ and  ${E_r}= \{(\omega^+,\omega^-,  {\bf u}) \in \mathbb{R}\times\mathbb{R}\times\mathbb{R}^N\,\vert \, \Vert {\bf u} \Vert_r \leq \omega^++\omega^-\}$ with $r=\{1,+\infty\}$.
\end{proposition}
\begin{proof}
  {The proof is given in Appendix A.} 
\end{proof}

The algorithmic strategy designed in next section will focus on this epigraphical formulation \eqref{eq:equivform}. 

\section{Primal-dual proximal algorithm} \label{sec:frdual}
Proximal algorithms are derived from two main frameworks that are the forward-backward scheme and the Douglas-Rachford iterations (both being deduced from the Krasnoselskii-Mann scheme) \cite{BauschkeCombettes2011}. FISTA can be presented as an accelerated version of forward-backward iterations \cite{Beck_A_2009_j-siam-is_fast_istalip,Chambolle_A_2015_jota_con_ifa} while ADMM can be viewed as a  {Douglas-Rachford procedure} in the dual \cite{Setzer_S_2009_ssvm_spl_bad}. 
In the literature dedicated to sparse regression the most encountered algorithm is FISTA when dealing with a sum of two convex functions where one is differentiable with a Lipschitz gradient. When the criterion involves more than two functions, typically an additional constraint such as the constraint $S$ defined previously, is added and most of the works derive an ADMM procedure or compute the proximity operator by means of inner iterations which is often known to leading to an approximate solution even if global convergence can be obtained in specific cases \cite{Chaux_C_2009_siims_Nested_iafccirp}. Another class of algorithmic procedures allowing to minimize a 
criterion with more than two functions, possibly including a differentiable function with a Lipschitz gradient, is the class of primal-dual proximal approaches~\cite{Chambolle2011a,Komodakis_N_2015,Condat_L_2012,Vu2013}. 

Several primal-dual proximal schemes have been derived but one of the most popular is based on forward-backward iterations \cite{Condat_L_2012,Vu2013} in order to estimate
\begin{equation}
\label{eq:primal}
\widehat{\mathbf{w}}\in \underset{\mathbf{w} \in \mathcal{H}}{\textrm{Argmin}} \; f(\mathbf{w})  + g(\mathbf{w})  + h(H\mathbf{w}),
\end{equation}
\noindent where $H\colon \mathcal{H} \to\mathcal{G}$ denotes a bounded linear operator, $\mathcal{H}$ and $\mathcal{G}$ being two Hilbert spaces, $f\colon\mathcal{H}\to ]-\infty,+\infty]$, $g\colon\mathcal{H}\to ]-\infty,+\infty]$ and $h\colon\mathcal{G}\to ]-\infty,+\infty]$ denote convex, l.s.c and proper functions.  {We additionally assume that $f$ is differentiable and $\nabla f$ has a Lipschitz constant denoted as $\beta>0$}. 
Iterations are summarized in Algorithm~\ref{algo:primaldual} involving the proximity operator defined as $$(\forall \mathbf{w}\in \mathcal{H}) \quad \prox_g(\mathbf{w}) = \arg \min_{\mathbf{u}\in \mathcal{H}}  {\frac{1}{2}}\Vert \mathbf{u} - \mathbf{w} \Vert_2^2 + g(\mathbf{u})$$ and $h^{*}$ denotes the Fenchel-Rockafellar conjugate  of function $h$. The proximity operator of the conjugate can be computed according to the Moreau identity that is $\textrm{prox}_{\sigma h^{*}}(\mathbf{w}) = \mathbf{w} - \sigma \textrm{prox}_{h/\sigma}(\mathbf{w}/\sigma)$ for $\sigma>0$.  {The sequence $(\mathbf{w}^{[k+1]})_{k\in \mathbb{N}}$ converges to a minimizer $\widehat{\mathbf{w}}$ of \eqref{eq:primal}. Moreover, the sequence $(\mathbf{u}^{[k+1]})_{k\in \mathbb{N}}$ converge to a minimizer of the dual formulation of \eqref{eq:primal} that is 
$$
\widehat{\mathbf{u}}\in \underset{\mathbf{u} \in \mathcal{G}}{\textrm{Argmin}} \; (f+ g)^*(\mathbf{-H^*u})  + h^*(\mathbf{u}).
$$}

This algorithmic scheme can be extended for dealing with more than three functions by setting  $h(Hx) = \sum_{k=1}^K h_k(H_k x)$ involving the computation of $\textrm{prox}_{\sigma h_k^{*}} $. The interest of this scheme compared to ADMM is twofold: it first makes the possibility to deal with the gradient of the differentiable function and secondly it avoids  to invert $\sum_k H_k^* H_k$.
 
\begin{algorithm}[H] 
\small
\caption{Primal-dual splitting algorithm} \label{algo:primaldual}
Parameter settings: Set $\tau>0$, $\sigma>0$,  {such that $\frac{1}{\tau} - \sigma\Vert H \Vert^2 \geq \frac{\beta}{2}$}\\
Initialization:  $(\mathbf{w}^{[0]}, \mathbf{u}^{[0]})\in \mathcal{H}\times \mathcal{G}$ \\
For $k=0, 1, \ldots$ 
\begin{equation}
 \left\lfloor \begin{array}{l}  
\mathbf{w}^{[k+1]} = \textrm{prox}_{\tau g} \big(\mathbf{w}^{[k]} - \tau \nabla f(\mathbf{w}^{[k]}) - \tau H^{*} \mathbf{u}^{[k]} \big) \\
\mathbf{u}^{[k+1]} = \textrm{prox}_{\sigma h^{*}} \big( \mathbf{u}^{[k]} + \sigma H (2\mathbf{w}^{[k+1]} - \mathbf{w}^{[k]}) \big)
\end{array}\right. \nonumber
\end{equation}
\end{algorithm}

\subsection{Specificity for minimizing \eqref{eq:equivform}} 

We first provide the iterations of Algorithm~\ref{algo:primaldual} specified to the minimization of~\eqref{eq:equivform}. By setting $\mathbf{w} = (\mathbf{v}^+,\mathbf{v}^-, {\bf{\Theta}}) \in \mathbb{R}^N \times\mathbb{R}^N \times \mathbb{R}^{N\times N}$, for weak hierarchy, we can split it as follows:
\begin{equation} \label{equa:weakgeneraldecompo}
\begin{cases}
f(\mathbf{w}) = \frac{1}{2} \sum_{\ell=1}^L  \big( y_\ell - \phi^\top({\bf x}_\ell) (\mathbf{v}^+ - \mathbf{v}^-) - \phi^\top({\bf x}_\ell) \mathbf{\Theta} \phi({\bf x}_\ell) \big)^2 + \lambda 1^\top(\mathbf{v}^+ + \mathbf{v}^-),\\
g(\mathbf{w}) = \iota_{O}(\mathbf{v}^+)  +\iota_{O}(\mathbf{v}^-) +  \frac{\lambda}{2} \Vert \mathbf{\Theta} \Vert_1,\\
h(\mathbf{w}) = \sum_{i=1}^N \iota_{E_r}\big(v^{+(i)},v^{-(i)}, \Theta^{(i,\cdot)}\big),
\end{cases}
\end{equation}
and for strong hierarchy:
\begin{equation} \label{equa:generaldecompo}
\begin{cases}
f(\mathbf{w}) = \frac{1}{2} \sum_{\ell=1}^L  \big( y_\ell - \phi^\top({\bf x}_\ell) (\mathbf{v}^+ - \mathbf{v}^-) - \phi^\top({\bf x}_\ell) \mathbf{\Theta} \phi({\bf x}_\ell)\big)^2 + \lambda 1^\top(\mathbf{v}^+ + \mathbf{v}^-),\\
g(\mathbf{w}) = \iota_{O}(\mathbf{v}^+)  +\iota_{O}(\mathbf{v}^-) + \iota_{S}(\mathbf{\Theta}), \\
h_1(\mathbf{w}) = \frac{\lambda}{2} \Vert \mathbf{\Theta} \Vert_1, \\
h_2(\mathbf{w}) = \sum_{i=1}^N \iota_{E_r}\big(v^{+(i)},v^{-(i)}, \Theta^{(i,\cdot)}\big).
\end{cases}
\end{equation}
The respective iterations are summarized in Algorithm~\ref{algo:weakprimaldualspecif} and \ref{algo:primaldualspecif}. The projection onto the positive orthant and on $S$ have well known closed form expressions \cite{BauschkeCombettes2011} that are:
$$
\begin{cases}
P_O(\cdot) = \max\{0, \cdot\},\\
P_S(\mathbf{\Theta}) = \frac{\mathbf{\Theta} + \mathbf{\Theta}^\top}{2}.
\end{cases}
$$
The difficulty comes from the computation of $P_{E_r}$ whose expressions are provided in the next section.

\begin{algorithm*}
\small
\caption{-- Weak-PD-$\ell_r$ -- Primal-dual splitting algorithm to solve \eqref{eq:equivform}  when $C = \mathbb{R}^{N\times N}$} \label{algo:weakprimaldualspecif}
Parameter settings: Set $\beta = 2\sum_{\ell=1}^L \vert \phi(\mathbf{x}_{\ell}) \vert^2 + \sum_{\ell=1}^L \vert \phi(\mathbf{x}_{\ell})^{\top} \phi(\mathbf{x}_{\ell})\vert^2$, $\tau>0$, $\sigma>0$,  {such that $\frac{1}{\tau} - \sigma\geq \frac{\beta}{2}$ }. \\
Initialization: $ (\mathbf{v}^{+[0]}, \mathbf{v}^{-[0]}, \mathbf{\Theta}^{[0]}, \mathbf{s}^{+[0]}, \mathbf{s}^{-[0]}, \mathbf{\Lambda}^{[0]})\in \mathbb{R}^N \times\mathbb{R}^N \times\mathbb{R}^{N \times N} \times \mathbb{R}^N \times \mathbb{R}^N \times \mathbb{R}^{N \times N} $. \\
For $k=0, 1, \ldots$ 
\begin{equation}
 \left\lfloor \begin{array}{l}  
b_\ell = y_\ell  -  \phi^\top({\bf x}_\ell) (\mathbf{v}^{+[k]} - \mathbf{v}^{-[k]})- \phi^\top({\bf x}_\ell) \mathbf{\Theta}^{[k]} \phi({\bf x}_\ell)\\
\mathbf{v}^{+[k+1]} = P_{O} \big(\mathbf{v}^{+[k]} + \tau \sum_\ell \phi({\bf x}_\ell)  b_\ell - \tau \lambda - \tau \mathbf{s}^{+[k]}  \big) \\
\mathbf{v}^{-[k+1]} = P_{O} \big(\mathbf{v}^{-[k]} - \tau \sum_\ell \phi({\bf x}_\ell)  b_\ell - \tau \lambda - \tau \mathbf{s}^{-[k]}\big) \\
\mathbf{\Theta}^{[k+1]} = \textrm{ {prox}}_{\frac{\tau \lambda}{2} \Vert \cdot \Vert_1} \big(\mathbf{\Theta}^{[k]} + \tau \big(\sum_\ell \phi^{(i)}({\bf x}_\ell)  \phi^{(j)}({\bf x}_\ell)  b_\ell\big)_{1\leq i\leq N,1\leq j\leq N}  - \tau \mathbf{\Lambda}^{[k]} \big) \\
\textrm{For $i=1,\ldots,N$ }\\
\lfloor\;\;\; (s^{+(i)[k+1]},s^{-(i)[k+1]},\Lambda^{(i,\cdot)[k+1]})  \\
 \qquad\;\; = \textrm{prox}_{\sigma \iota_{E_r}^{*}} \big(  s^{+(i)[k]}  + \sigma  (2{v}^{+[k+1]} - {v}^{+[k]}),  s^{-(i)[k]}  + \sigma  (2{v}^{-(i)[k+1]} - {v}^{-(i)[k]}), \\
 \qquad\;\; \Lambda^{+(i,\cdot)[k]} + \sigma  (2\Theta^{(i,\cdot)[k+1]} - \Theta^{(i,\cdot)[k]})\big)
\end{array}\right. \nonumber
\end{equation}
\end{algorithm*}

\begin{algorithm*}
\small
\caption{-- Strong-PD-$\ell_r$ --  Primal-dual splitting algorithm to solve \eqref{eq:equivform}  when $C = S$} \label{algo:primaldualspecif}
Parameter settings: Set $\beta = 2\sum_{\ell=1}^L \vert \phi(\mathbf{x}_{\ell}) \vert^2 + \sum_{\ell=1}^L \vert \phi(\mathbf{x}_{\ell})^{\top} \phi(\mathbf{x}_{\ell})\vert^2$, $\tau>0$, $\sigma>0$,  {such that $\frac{1}{\tau} - 2\sigma \geq \frac{\beta}{2}$ }.  \\
Initialization: $ (\mathbf{v}^{+[0]}, \mathbf{v}^{-[0]}, \mathbf{\Theta}^{[0]}, \mathbf{s}^{+[0]}, \mathbf{s}^{-[0]}, \mathbf{\Lambda}_1^{[0]}, \mathbf{\Lambda}_2^{[0]})\in \mathbb{R}^N \times\mathbb{R}^N \times\mathbb{R}^{N \times N} \times \mathbb{R}^N \times \mathbb{R}^N \times \mathbb{R}^{N \times N} \times \mathbb{R}^{N \times N} $. \\
For $k=0, 1, \ldots$ 
\begin{equation}
 \left\lfloor \begin{array}{l}  
b_\ell = y_\ell  -   \phi^\top({\bf x}_\ell) (\mathbf{v}^{+[k]} - \mathbf{v}^{-[k]})- \phi^\top({\bf x}_\ell) \mathbf{\Theta}^{[k]} \phi({\bf x}_\ell)\\
\mathbf{v}^{+[k+1]} = P_{O} \big(\mathbf{v}^{+[k]} + \tau \sum_\ell \phi({\bf x}_\ell)  b_\ell - \tau \lambda - \tau \mathbf{s}^{+[k]}  \big) \\
\mathbf{v}^{-[k+1]} = P_{O} \big(\mathbf{v}^{-[k]} - \tau \sum_\ell \phi({\bf x}_\ell)  b_\ell - \tau \lambda  - \tau \mathbf{s}^{-[k]}\big) \\
\mathbf{\Theta}^{[k+1]} = P_{S} \big(\mathbf{\Theta}^{[k]} + \tau \big(\sum_\ell \phi^{(i)}({\bf x}_\ell) \phi^{(j)}({\bf x}_\ell)  b_\ell\big)_{1\leq i\leq N,1\leq j\leq N}  - \tau (\mathbf{\Lambda}_1^{[k]} + \mathbf{\Lambda}_2^{[k]} )\big) \\
\mathbf{\Lambda}_1^{+[k+1]}  = \textrm{prox}_{\frac{\sigma \lambda}{2} \Vert \cdot \Vert_1^{*} } \big( \mathbf{\Lambda}_1^{+[k]}  + \sigma  (2\mathbf{\Theta}^{[k+1]} - \mathbf{\Theta}^{[k]}) \big)\\
\textrm{For $i=1,\ldots,N$ }\\
\lfloor\;\;\; (s^{+(i)[k+1]},s^{-(i)[k+1]},\Lambda_2^{(i,\cdot)[k+1]})  \\
 \qquad\;\; = \textrm{prox}_{\sigma \iota_{E_r}^{*}} \big(  s^{+(i)[k]}  + \sigma  (2{v}^{+[k+1]} - {v}^{+[k]}),  s^{-(i)[k]}  + \sigma  (2{v}^{-(i)[k+1]} - {v}^{-(i)[k]}), \\
 \qquad\;\; \Lambda_2^{+(i,\cdot)[k]}  + \sigma  (2\Theta^{(i,\cdot)[k+1]} - \Theta^{(i,\cdot)[k]})\big)
\end{array}\right. \nonumber
\end{equation}
\end{algorithm*}

\subsection{New epigraphical projection} \label{sec:newepiprjection}
We first focus on the derivation of $P_{E_\infty}$.
\newpage
\begin{proposition} \label{propo:inftyprojection}
Let $\mathbf{u} = (u^{(i)})_{1\leq i \leq N}$. The projection of $(\omega^+, \omega^-, \mathbf{u}) \in \mathbb{R} \times \mathbb{R} \times \mathbb{R}^{N} $ on the epigraphic set $E_\infty$ reads
$$
(\eta^+,\eta^-, \mathbf{p})  =P_{E_\infty}(\omega^+, \omega^-, \mathbf{u}) 
$$
with 
\begin{equation}	
\begin{cases}
 {\eta}^- = \frac{\omega^- - (N - \bar{n}+1)(\omega^+ - \omega^-) + \sum_{i=\bar{n}}^N  \mathbf{\nu}^{(i)}}{1+2(N-\bar{n}+1)}\\
 {\eta}^+ = {\eta}^- + \omega^+ - \omega^- \\
 {\mathbf{p}} = P_{[-{\eta}^+ - {\eta}^-,  {\eta}^+ +{\eta}^-]}(\mathbf{u})
\end{cases}
\end{equation}
\noindent where $(\mathbf{\nu}^{(1)},\ldots,\mathbf{\nu}^{(N)})$ is an ordered version of $(\vert u^{(i)}\vert)_{1\leq i \leq N}$ in an ascending order and with $\mathbf{\nu}^{(0)}=-\infty$, and $\bar{n}\in \{1,\ldots,N\}$ such that $\mathbf{\nu}^{(\bar{n}-1)} < \eta^+ + \eta^- \leq \mathbf{\nu}^{(\bar{n})}$.
\end{proposition}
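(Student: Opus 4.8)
The projection $(\eta^+,\eta^-,\mathbf{p})=P_{E_\infty}(\omega^+,\omega^-,\mathbf{u})$ is, by definition, the unique minimizer of the squared distance $(\eta^+-\omega^+)^2+(\eta^--\omega^-)^2+\|\mathbf{p}-\mathbf{u}\|_2^2$ over the closed convex set $\{\|\mathbf{p}\|_\infty\le \eta^++\eta^-\}$. The plan is to exploit that the constraint couples the three blocks only through the scalar $t:=\eta^++\eta^-$, and to minimize in three nested stages: first over $\mathbf{p}$ with $t$ fixed, then over the pair $(\eta^+,\eta^-)$ with $t$ fixed, and finally over $t\ge 0$ (the values $t<0$ being infeasible).

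For Stage one, with $t\ge 0$ fixed, minimizing $\|\mathbf{p}-\mathbf{u}\|_2^2$ subject to $\|\mathbf{p}\|_\infty\le t$ is the projection onto the $\ell_\infty$-ball of radius $t$; it is separable and equals the componentwise clipping $\mathbf{p}=P_{[-t,t]}(\mathbf{u})$, with residual $\sum_{i=1}^N\max\{|u^{(i)}|-t,0\}^2$. For Stage two, minimizing $(\eta^+-\omega^+)^2+(\eta^--\omega^-)^2$ under $\eta^++\eta^-=t$ gives, by a one-line Lagrange-multiplier computation, $\eta^+-\eta^-=\omega^+-\omega^-$ (which is the second displayed identity) and residual $\tfrac12(t-\omega^+-\omega^-)^2$. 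Writing $s:=\omega^++\omega^-$, the whole problem collapses to the one-dimensional convex program
\[
\min_{t\ge 0}\ \Phi(t),\qquad \Phi(t):=\tfrac12(t-s)^2+\sum_{i=1}^N\max\{|u^{(i)}|-t,0\}^2 .
\]

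For Stage three, I would note that $\Phi$ is convex and $C^1$ with $\Phi'(t)=(t-s)-2\sum_{i:\,|u^{(i)}|>t}(|u^{(i)}|-t)$, which is continuous, piecewise affine, and strictly increasing in $t$, so its root $t^\star$ (equivalently $t^\star=0$ in the degenerate case $\Phi'(0^+)>0$) is the minimizer. Ordering the $|u^{(i)}|$ as $\nu^{(1)}\le\cdots\le\nu^{(N)}$ (with $\nu^{(0)}=-\infty$) and locating $t^\star$ in the half-open interval $(\nu^{(\bar n-1)},\nu^{(\bar n)}]$, the active index set $\{i:|u^{(i)}|>t^\star\}$ is exactly $\{\bar n,\dots,N\}$, so $\Phi'(t^\star)=0$ becomes $(t^\star-s)-2\sum_{i=\bar n}^N(\nu^{(i)}-t^\star)=0$, whence $t^\star=\big(s+2\sum_{i=\bar n}^N\nu^{(i)}\big)\big/\big(1+2(N-\bar n+1)\big)$. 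Combining $\eta^++\eta^-=t^\star$ with $\eta^+-\eta^-=\omega^+-\omega^-$ gives $\eta^-=(t^\star-\omega^++\omega^-)/2$; substituting $t^\star$ and simplifying with $s=\omega^++\omega^-$ yields the stated formula for $\eta^-$, then $\eta^+=\eta^-+\omega^+-\omega^-$ and $\mathbf{p}=P_{[-t^\star,t^\star]}(\mathbf{u})=P_{[-\eta^+-\eta^-,\ \eta^++\eta^-]}(\mathbf{u})$.

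The main obstacle is the bookkeeping in Stage three: one must justify that the apparently self-referential condition ``$\nu^{(\bar n-1)}<\eta^++\eta^-\le\nu^{(\bar n)}$'' actually singles out a well-defined index $\bar n\in\{1,\dots,N\}$. This is the same reasoning as for projection onto the simplex or $\ell_1$-ball: strict monotonicity of $\Phi'$ forces a unique sign change, and the half-open convention removes the ambiguity caused by ties among the $\nu^{(i)}$ (at a breakpoint the corresponding term vanishes, so both candidate active sets give the same equation). I would also dispatch the two easy boundary cases separately — when $(\omega^+,\omega^-,\mathbf{u})$ already lies in $E_\infty$ the projection is the identity with $t^\star=s$, and the case $\Phi'(0^+)>0$ forces $t^\star=0$; the remaining computations are routine convexity and Lagrange arguments.
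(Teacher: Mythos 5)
Your proposal is correct and follows essentially the same route as the paper's proof: project the inner variable onto the box $[-\eta^+-\eta^-,\eta^++\eta^-]$ to get the residual $\sum_i\max\{|u^{(i)}|-\eta^+-\eta^-,0\}^2$, sort the $|u^{(i)}|$ to identify the active set via $\bar n$, and solve the resulting linear stationarity conditions (which also yield $\eta^+-\eta^-=\omega^+-\omega^-$). Your extra collapse of $(\eta^+,\eta^-)$ to the scalar $t=\eta^++\eta^-$ is only a repackaging of the paper's stationarity computation in $(\eta^+,\eta^-)$, and your $t^\star=\bigl(\omega^++\omega^-+2\sum_{i=\bar n}^N\nu^{(i)}\bigr)/\bigl(1+2(N-\bar n+1)\bigr)$ reproduces exactly the stated formula for $\eta^-$.
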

\begin{proof}
 {The proof is given in Appendix B.}
\end{proof}

The above result is obtained from arguments close to the ones derived in \cite{Chierchia2015} [Proposition 5] for an epigraphical constraint of the form 
$\big\{(\omega,  {\bf u}) \in \mathbb{R}\times\mathbb{R}^N\,\vert \, \max\{ \tau_1\vert u^{(1)}\vert, \ldots,\tau_N\vert u^{(N)}\vert\}\leq \omega\big\}$ where $(\tau_i)_{1\leq i \leq N}$ denotes positive weights. The next proposition is a preliminary result to derive $P_{E_1}$.

\begin{proposition} \label{propo:epil1preli} Let $E_1^+= \{(\omega^+, \omega^-, \mathbf{u}) \in \mathbb{R} \times \mathbb{R} \times \mathbb{R}^{N} \ | \  \Vert \mathbf{u} \Vert_1 = \omega^+ + \omega^-, \mathbf{u} \geq 0 \}$.
The projection of $(\omega^+, \omega^-, \mathbf{u}) \in \mathbb{R} \times \mathbb{R} \times \mathbb{R}^{N} $ on the epigraphic set $E_1^+$ reads
$$
(\eta^+, \eta^-, \mathbf{p})  =P_{E_1^+}(\omega^+, \omega^-, \mathbf{u})
$$
with 
\begin{equation}	
\begin{cases}
\eta^- = \frac{\sum_{i=1}^{\widetilde{n}} {\mu}^{(i)}- \omega^+ + (\widetilde{n}+1) \omega^-}{\widetilde{n}(1+2/\widetilde{n})} \\
\eta^+ = \eta^- + \omega^+ - \omega^- \\
 {\mathbf{p}} =  \mathbf{u}  - \max\Big\{0, \frac{\sum_{i=1}^{\widetilde{n}} {\mu}^{(i)} - ( {\eta}^++  {\eta}^-)}{\widetilde{n}}\Big\}\\
\end{cases}
\end{equation}
with
\begin{equation}
 \widetilde{n} = \max \{n\in\{1,\ldots,N\}\,\vert,\ \mathbf{\mu}^{(n)} - \frac{\sum_{i=1}^n \mathbf{\mu}^{(i)} - (\eta^+ +\eta^-)}{n} >0\}
\label{eq:ntilde}
\end{equation}
and where $\boldsymbol{\mu} = (\mathbf{\mu}^{(i)} )_{1\leq i \leq N}$ is an ordered version of $\mathbf{u} = ( u^{(i)})_{1\leq i \leq N}$ in a descending order.
\end{proposition}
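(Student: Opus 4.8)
The plan is to cast $P_{E_1^+}(\omega^+,\omega^-,\mathbf u)$ as the constrained least-squares problem
\[
\min_{(\eta^+,\eta^-,\mathbf p)\in\mathbb R\times\mathbb R\times\mathbb R^N}\ (\eta^+-\omega^+)^2+(\eta^--\omega^-)^2+\|\mathbf p-\mathbf u\|^2
\quad\text{s.t.}\quad \mathbf p\ge 0,\ \ 1^\top\mathbf p=\eta^++\eta^-,
\]
using $\|\mathbf p\|_1=1^\top\mathbf p$ for $\mathbf p\ge 0$. The feasible set is the intersection of a hyperplane of $\mathbb R^{N+2}$ with a product of half-spaces, hence nonempty, closed and convex, so the minimiser is unique and characterised by the associated KKT system; the whole argument is the closed-form resolution of that system.

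First I would eliminate $(\eta^+,\eta^-)$. For a fixed value $t$ of the sum $\eta^++\eta^-$, minimising $(\eta^+-\omega^+)^2+(\eta^--\omega^-)^2$ is the projection of $(\omega^+,\omega^-)$ onto the line $\{\eta^++\eta^-=t\}$, giving $\eta^\pm=\omega^\pm+\tfrac12(t-\omega^+-\omega^-)$ --- in particular $\eta^+-\eta^-=\omega^+-\omega^-$, the second identity claimed --- with optimal value $\tfrac12(t-s)^2$, where $s:=\omega^++\omega^-$. The problem reduces to the strongly convex orthant program $\min_{\mathbf p\ge 0}\tfrac12(1^\top\mathbf p-s)^2+\|\mathbf p-\mathbf u\|^2$, whose KKT conditions --- one scalar multiplier $\alpha$ for the coupling, multipliers $\xi_i\ge 0$ for $p^{(i)}\ge 0$ with $\xi_ip^{(i)}=0$ --- upon elimination of the $\xi_i$ give
\[
p^{(i)}=\max\{0,u^{(i)}-\alpha\},\qquad \eta^\pm=\omega^\pm+\alpha,\qquad \sum_{i=1}^N\max\{0,u^{(i)}-\alpha\}=s+2\alpha.
\]
(Equivalently one writes the three-block KKT system of the original problem directly; the reduction above only serves to expose the structure.)

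Next I would solve the scalar equation for $\alpha$: its left-hand side is continuous, non-increasing and piecewise affine in $\alpha$ while the right-hand side is strictly increasing, so the root is unique. Sorting $\mathbf u$ decreasingly as $\mu^{(1)}\ge\dots\ge\mu^{(N)}$ (convention $\mu^{(N+1)}=-\infty$) and letting $\widetilde n$ be the number of coordinates exceeding $\alpha$, the equation restricted to $\mu^{(\widetilde n+1)}\le\alpha<\mu^{(\widetilde n)}$ reads $\sum_{i=1}^{\widetilde n}\mu^{(i)}-\widetilde n\alpha=s+2\alpha$, hence $\alpha=\big(\sum_{i=1}^{\widetilde n}\mu^{(i)}-s\big)/(\widetilde n+2)$. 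Inserting this into $\eta^-=\omega^-+\alpha$ and simplifying --- using $\widetilde n(1+2/\widetilde n)=\widetilde n+2$ and $s=\omega^++\omega^-$ --- reproduces the stated expression for $\eta^-$; then $\eta^+=\eta^-+\omega^+-\omega^-$; and since $\eta^++\eta^-=s+2\alpha$ one checks $\big(\sum_{i=1}^{\widetilde n}\mu^{(i)}-(\eta^++\eta^-)\big)/\widetilde n=\alpha$, which is the threshold in $p^{(i)}=\max\{0,u^{(i)}-\alpha\}$ and the quantity appearing in the stated formula for $\mathbf p$.

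The hard part is the last bookkeeping step: showing that the integer $\widetilde n$ just used coincides with the one defined in \eqref{eq:ntilde}, namely the largest $n$ with $\mu^{(n)}-\tfrac1n\big(\sum_{i=1}^n\mu^{(i)}-(\eta^++\eta^-)\big)>0$, even though $\eta^++\eta^-$ itself depends on $\widetilde n$ (so the characterisation is only seemingly circular). This is the analogue, with target $s+2\alpha$ in place of a prescribed simplex radius, of the classical description of the projection onto a scaled probability simplex (Held--Wolfe--Crowder; Duchi et al.). I would argue: (i) at $n=\widetilde n$ the displayed quantity equals $\mu^{(\widetilde n)}-\alpha>0$ by the definition of $\widetilde n$; (ii) writing $c_n:=\tfrac1n\big(\sum_{i=1}^n\mu^{(i)}-T\big)$ with $T:=\eta^++\eta^-$ held fixed, the recursion $c_{n+1}=c_n+\tfrac1{n+1}(\mu^{(n+1)}-c_n)$ combined with $\mu^{(n+1)}\le\mu^{(n)}$ shows that once $\mu^{(n)}-c_n\le 0$ the inequality persists for all larger indices, so $\widetilde n$ is indeed the largest admissible $n$; and (iii) the bracket $\mu^{(\widetilde n+1)}\le\alpha$ forces the unique root to lie in the interval assumed in the previous step, closing the argument. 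The residual checks ($\mathbf p\ge 0$, $1^\top\mathbf p=\eta^++\eta^-$, and the algebraic simplifications) are routine.
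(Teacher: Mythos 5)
Your proposal is correct and follows essentially the same route as the paper: both write the KKT system for the constrained least-squares formulation of $P_{E_1^+}$, obtain $p^{(i)}=\max\{0,u^{(i)}-\alpha\}$ and $\eta^{\pm}=\omega^{\pm}+\alpha$, solve the coupling equation for $\alpha$ on the active set, and identify $\widetilde n$ by the Held--Wolfe--Crowder/Duchi-type argument. The only difference is presentational: you eliminate $(\eta^+,\eta^-)$ first and spell out the monotonicity argument for $\widetilde n$ that the paper simply cites as \cite{Duchi2008}[Lemma 2].
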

\begin{proof}
 {The proof is given in Appendix C.}
\end{proof}

Next we get the solution of $P_{E_1}$ from the ones of projection to $E_1^+$ according to \cite{Duchi2008}[Lemma 3] and have the following proposition: 
\begin{proposition} \label{prop:epil1}
The projection of $(\omega^+, \omega^-, \mathbf{u}) \in \mathbb{R} \times \mathbb{R} \times \mathbb{R}^{N} $ on the epigraphic set $E_1$ reads
$$
(\eta^+, \eta^-, \mathbf{p})  =P_{E_1}(\omega^+, \omega^-, \mathbf{u}) 
$$
with 
\begin{equation}
 \begin{cases}
  (v^+, v^-,\hat{\mathbf{p}}) = P_{E_1^+}(\omega^+, \omega^-, \abs(\mathbf{u}))\\
 \mathbf{p} = {\sign}(\mathbf{u}) \hat{\mathbf{p}}\\
 \end{cases}
\end{equation}
where $\abs(\cdot)$ and $\sign(\cdot)$ denote the componentwise absolute value and the componentwise sign.
\end{proposition}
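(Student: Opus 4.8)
The plan is to view Proposition~\ref{prop:epil1} as the epigraphical counterpart of the classical reduction of $\ell_1$-ball projection to simplex projection (\cite{Duchi2008}[Lemma~3]) and to establish it in two steps: a symmetrisation that strips the signs of $\mathbf{w}$, and then an argument that, once the vector block is nonnegative, projecting onto $E_1$ is the same as projecting onto the boundary slice $E_1^+$ already handled in Proposition~\ref{propo:epil1preli}.

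\emph{Step 1 (sign equivariance).} For any diagonal matrix $D=\mathrm{diag}(\varepsilon^{(1)},\dots,\varepsilon^{(N)})$ with $\varepsilon^{(i)}\in\{-1,+1\}$, the linear isometry $T_D\colon(\omega^+,\omega^-,\mathbf{u})\mapsto(\omega^+,\omega^-,D\mathbf{u})$ leaves $E_1$ invariant, since $\Vert D\mathbf{u}\Vert_1=\Vert\mathbf{u}\Vert_1$ and the $(\omega^+,\omega^-)$ block is untouched (the coupling through $\omega^++\omega^-$ therefore plays no role). Hence the metric projection commutes with $T_D$, i.e.\ $P_{E_1}\circ T_D=T_D\circ P_{E_1}$. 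Taking $D$ to carry $\sign(w^{(i)})$ on the coordinates with $w^{(i)}\ne0$ shows that, up to those coordinates, $P_{E_1}(u^+,u^-,\mathbf{w})$ is obtained from $P_{E_1}(u^+,u^-,\abs(\mathbf{w}))$ by reinstating $\sign(\mathbf{w})$ on the vector block, the $(\eta^+,\eta^-)$ block being unchanged; for the coordinates with $w^{(i)}=0$ the truncation argument of Step~2 forces $p^{(i)}=0$, consistently with $\sign(0)=0$.

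\emph{Step 2 (reduction to $E_1^+$).} Write $\mathbf{w}_+:=\abs(\mathbf{w})\ge0$ and consider the minimisation of $(\eta^+-u^+)^2+(\eta^--u^-)^2+\Vert\mathbf{p}-\mathbf{w}_+\Vert^2$ over $\Vert\mathbf{p}\Vert_1\le\eta^++\eta^-$. If $\Vert\mathbf{w}_+\Vert_1\le u^++u^-$ the input already lies in $E_1$ and the projection is the identity — the trivial case that should be appended to the statement. Otherwise the input is strictly outside the closed convex set $E_1$, so its projection lies on $\partial E_1=\{\Vert\mathbf{u}\Vert_1=\omega^++\omega^-\}$; and the optimal $\mathbf{p}$ is nonnegative, for if $p^{(i)}<0$ then replacing it by $0$ strictly decreases $|p^{(i)}-w_+^{(i)}|$ (as $w_+^{(i)}\ge0$) while not increasing $\Vert\mathbf{p}\Vert_1$, hence stays feasible with smaller objective. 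The projection thus lies in $\{\Vert\mathbf{u}\Vert_1=\omega^++\omega^-,\ \mathbf{u}\ge0\}=E_1^+\subseteq E_1$, and being the nearest point of $E_1$ it is a fortiori the nearest point of the subset $E_1^+$; so $P_{E_1}(u^+,u^-,\mathbf{w}_+)=P_{E_1^+}(u^+,u^-,\mathbf{w}_+)$, which Proposition~\ref{propo:epil1preli} evaluates in closed form. Combining with Step~1 gives $\mathbf{p}=\sign(\mathbf{w})\,\hat{\mathbf{p}}$ with $(\eta^+,\eta^-,\hat{\mathbf{p}})=P_{E_1^+}(u^+,u^-,\abs(\mathbf{w}))$.

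\emph{Main obstacle.} The substantive point is the second half of Step~2: showing that the $E_1$-projection of an exterior point necessarily has a nonnegative and $\ell_1$-saturated vector block, so that it actually falls in the lower-dimensional slice $E_1^+$ (the sign-equivariance of Step~1 is then routine). One must also keep in mind that the stated formula is valid only on the active region $\Vert\mathbf{w}\Vert_1>u^++u^-$; on the complement $P_{E_1}$ is the identity whereas $P_{E_1^+}$ is not, so that case is recorded separately. Zero components of $\mathbf{w}$ are absorbed uniformly by the truncation argument together with the convention $\sign(0)=0$.
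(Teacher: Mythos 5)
Your proof is correct and is essentially the argument the paper itself delegates to \cite{Duchi2008}[Lemma 3] without writing it out: sign-equivariance of $P_{E_1}$ under componentwise sign flips, followed by the observation that the projection of a point lying outside $E_1$ with nonnegative vector block is $\ell_1$-saturated and nonnegative, hence coincides with $P_{E_1^+}$ as computed in Proposition~\ref{propo:epil1preli}. Your caveat that the stated formula only applies when $\Vert \mathbf{w}\Vert_1 > u^+ + u^-$ (on the complement $P_{E_1}$ is the identity while the $E_1^+$ construction forces equality) is a legitimate refinement that the paper's statement leaves implicit.
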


Integrating the projection derived in Proposition~\ref{propo:inftyprojection} into Algorithm~\ref{algo:weakprimaldualspecif} or Proposition~\ref{propo:epil1preli} and Proposition~\ref{prop:epil1} into Algorithm~\ref{algo:primaldualspecif} with 
$$
\prox_{\sigma\iota_{E_r}^*}(\omega^+, \omega^-, \mathbf{u}) = (\omega^+, \omega^-, \mathbf{u}) - \sigma P_{E_r}\Big(\frac{\omega^+}{\sigma}, \frac{\omega^-}{\sigma}, \frac{\mathbf{u}}{\sigma}\Big) 
$$
ensures the convergence to a minimizer of \eqref{eq:equivform} respectively for weak hierarchy and strong hierarchy.

\subsection{ {Convergence}} \label{sec:convergence}
 
 The proposed Algorithms~\ref{algo:weakprimaldualspecif} and \ref{algo:primaldualspecif} can be viewed as a particular case of the  primal-dual algorithm~\cite[Algorithm 5.1]{Condat_L_2012} and thus the convergence guarantees can be derived from~\cite[Theorem 5.1]{Condat_L_2012}).

\begin{theorem} \label{theorem:primal-dual}
 {The sequences $(\mathbf{v}^{+[k+1]},\mathbf{v}^{-[k+1]},\mathbf{\Theta}^{[k+1]})_{k\in \mathbb{N}})_{i \in \mathbb{N}}$ generated by Algorithm \ref{algo:weakprimaldualspecif} converges to a solution of \eqref{eq:equivform} when $C = \RR^{N \times N}$ (weak hierarchy).}
\end{theorem}

\begin{theorem} \label{theorem:primal-dual}
 {The sequences $(\mathbf{v}^{+[k+1]},\mathbf{v}^{-[k+1]},\mathbf{\Theta}^{[k+1]})_{k\in \mathbb{N}})_{i \in \mathbb{N}}$ generated by Algorithm \ref{algo:primaldualspecif} converges to a solution of \eqref{eq:equivform} when $C = S$ (strong hierarchy).}
\end{theorem}

 {\begin{remark}
In practice, the choice of $\tau$ and $\sigma$ is made as follows: $\tau=\frac{1.9}{\beta}$ and $\sigma = (1/\tau - \beta/2)$ for Algorithm~\ref{algo:weakprimaldualspecif}  (resp. $\sigma = 1/2*(1/\tau - \beta/2)$  for Algorithm~\ref{algo:primaldualspecif}).
\end{remark}}

\section{Extension to multiclass SVM} \label{sec:multiclassSVM}

\subsection{Model} 

Formally, assuming that the training set is denoted $$\mathcal{T} = \{({y}_\ell, \phi({\bf x}_\ell))\in \{1,\ldots,K\} \times \mathbb{R}^{N}\,\vert\,\ell \in \{1,\ldots,L\}\}$$ with $K$ classes and $L$ training samples. ${\bf x}_\ell\in \mathbb{R}^{M}$ denotes the data (e.g. an image with $M$ pixels) with the label ${y}_\ell$. The transform $\phi\colon \mathbb{R}^{M}\to\mathbb{R}^{N}$ maps from the input space onto an arbitrary
feature space, for instance, scattering features \cite{Bruna_J_2013}. 

Our aim being to study quadratic interactions, for the k-th class, we consider a discrimination function taking the form:
\begin{equation}
f_k(\mathbf{x}_\ell) = \phi(\mathbf{x}_\ell)^{\top} \mathbf{\Theta}_k \phi(\mathbf{x}_\ell) + \mathbf{v}_k^{\top} \phi(\mathbf{x}_\ell)
\end{equation}
where, $\mathbf{v}_k = \big(v^{(i)}_k\big)_{1\leq i \leq N}$  and $ \mathbf{\Theta}_k = \big(\Theta^{(i,j)}_k\big)_{1\leq i \leq N, 1\leq j \leq N}$ model respectively the weights of main features and the matrix of interactions for the class $k\in \{1,\ldots,K\}$. 

\subsection{Design of the objective function} 
A large panel of data term can be encountered in the multiclass SVM literature going from hinge loss to logistic data-term \cite{Blondel2013, Chierchia2015_a}. In this work, we focus on the squared hinge loss data-term proposed in \cite{Blondel2013} leading to good classification performance and being differentiable with Lipschitz gradient $\beta>0$, whose constant will be specified in Section~\ref{ss:pdpep}. The proposed objective function is written as:

\begin{equation} 
 \underset{\substack{\mathbf{v}_k \in \mathbb{R}^N\\ \mathbf{\Theta}_k \in \mathbb{R}^{N\times N}}}{\textrm{minimize}} \!\sum_{\ell=1}^L \sum_{k \neq y_\ell}\max \Big\{ 0, 1 - \big( f_{y_\ell}(\mathbf{x}_\ell) - f_{k}(\mathbf{x}_\ell) \big)\Big\}^2 \!\!+ \sum_{k=1}^K \Omega(\mathbf{v}_k,  \mathbf{\Theta}_k ) \label{equa:generalproblem}
\end{equation}

\noindent
where the first term denotes the squared hinge loss data-term and the second term penalizes the behavior of the discrimination function.
Instead of $\ell_1$-norm, making $\mathbf{v}_k$ and $\mathbf{\Theta}_k$ independent, we can force the hierarchy constraints between $\mathbf{v}_k$ and $\mathbf{\Theta}_k$ for $k$-th class to regulate them. In the classification setting the penalization term is written:

\begin{equation} 
\Omega(\mathbf{v}_k,  \mathbf{\Theta}_k) = \lambda_1 \sum_{i=1}^N \max\{\vert v^{(i)}_k \vert , \Vert \Theta^{(i,\cdot)}_k \Vert_r \} + \lambda_2 \Vert  \mathbf{\Theta}_k  \Vert_1 + \iota_C( \mathbf{\Theta}_k ) 
 \label{equa:hiernetl10}
\end{equation}
where $r = \{1,+\infty\}$, $\Theta^{(i,\cdot)}_k  = \big(\Theta^{(i,j)}_k\big)_{1\leq j \leq N}$ and $\iota_C$ is defined as in Section~\ref{sec:model}.

\subsection{Epigraphical reformulation}

The reformulation of problem~(\ref{equa:generalproblem}) by means of epigraphical constraints \footnote{The epigraph of a function $f\colon \mathcal{H}\to ]-\infty,+\infty]$ is defined as: $\mathrm{epi} f = \{ (v,\zeta)\in \mathcal{H} \times \mathbb{R}\,\vert\, f(v)\leq \zeta\}$ and the epigraphical projection onto $\mathrm{epi} f$ is denoted $P_{\textrm{epi} f}$.} follows ideas derived in \cite{Bien2013} (i.e.~``hierNet'') in the context of regression and for a different penalization $\Omega$. 

\begin{proposition} \label{prop:episvm}The minimization problem~\eqref{equa:generalproblem}

with $\Omega(\mathbf{v}_k, \Theta_k)$ defined in \eqref{equa:hiernetl10} can be reformulated as
{\small
\begin{multline}
\label{eq:equivformclass}
 \underset{(\mathbf{v}^+_k, \mathbf{v}^-_k,\Theta_k)\in O\times O \times C}{\textrm{minimize}} \sum_{\ell=1}^L \sum_{k \neq y_\ell}\max \Big\{ 0, 1 - \big( f_{y_\ell}(\mathbf{x}_\ell) - f_{k}(\mathbf{x}_\ell) \big)\Big\}^2  \\ \hspace{0.75cm} + \sum_{k=1}^K \Big( \lambda_1 1^\top(\mathbf{v}^+_k +\mathbf{v}^-_k) + \lambda_2 \sum_{i=1}^N  \Vert  \Theta^{(i,\cdot)}_k \Vert_1 \\ +  \sum_{i=1}^N \iota_{E_r}\big(v^{+(i)}_k,v^{-(i)}_k, \Theta^{(i,\cdot)}_k\big) + \iota_C( \mathbf{\Theta}_k)  \Big)
\end{multline} }
where $O$ denotes the positive orthant and ${E_r}= \{(\omega^+,\omega^-,  {\bf u}) \in \mathbb{R}\times\mathbb{R}\times\mathbb{R}^N\,\vert \, \Vert {\bf u} \Vert_r \leq \omega^++\omega^-\}$ with  $r = \{1,+\infty\}$ and $f_k(\mathbf{x}_\ell) = \phi(\mathbf{x}_\ell)^{\top} \mathbf{\Theta}_k \phi(\mathbf{x}_\ell) + (\mathbf{v}_k^+ - \mathbf{v}_k^-)^{\top} \phi(\mathbf{x}_\ell)$.
\end{proposition}
\noindent The variable $\mathbf{v}_k$ is written as $\mathbf{v}_k=\mathbf{v}^{+}_k-\mathbf{v}^{-}_k$ by variable splitting. The proof relies on similar arguments than for Proposition~\ref{prop:epi}.

\subsection{Primal-dual proximal algorithm based on epigraphical projection} 
\label{ss:pdpep}
The proposed objective function~(\ref{eq:equivform}) is a specific case of \eqref{eq:primal} when $\mathbf{w} = (\mathbf{v}^+_k,\mathbf{v}^-_k, {\bf{\Theta}}_k) \in \mathbb{R}^N \times\mathbb{R}^N \times \mathbb{R}^{N\times N}$, $Q=2$ and
\begin{equation*} \label{equa:generaldecompo}
\small
\begin{cases}
f(\mathbf{w}) = \sum_{\ell=1}^L \sum_{k \neq y_\ell}\max \Big\{ 0, 1 - \big( f_{y_\ell}(\mathbf{x}_\ell) - f_{k}(\mathbf{x}_\ell) \big)\Big\}^2 \\
\quad \quad  \quad \quad  + \sum_{k=1}^K \lambda_1 1^\top(\mathbf{v}^+_k + \mathbf{v}^-_k)\\
g(\mathbf{w}) = \sum_{k=1}^K \iota_{O}(\mathbf{v}^+_k)  +\iota_{O}(\mathbf{v}^-_k) + \iota_{S}(\Theta_k) \\
h_1(\mathbf{w}) = \sum_{k=1}^K \lambda_2 \Vert \mathbf{\Theta}_k \Vert_1 \\
h_2(\mathbf{w}) = \sum_{k=1}^K \sum_{i=1}^N \iota_{E_r}\big(v^{+(i)}_k,v^{-(i)}_k, \Theta^{(i,\cdot)}_k\big) .
\end{cases}
\end{equation*}
where $f$ is differentiable with a Lipschitz constant $\beta=4(K-1)(\sum_{\ell} \Vert \phi(\mathbf{x}_\ell) \Vert^2 + \sum_{\ell} \Vert \phi(\mathbf{x}_\ell)\phi(\mathbf{x}_\ell )^{\top} \Vert^2)$.
The primal-dual proximal iterations are displayed in Algorithm~\ref{algo:primaldualspecif}. It involves four proximity operator computation: the projection onto $O$ and $S$, the proximal operator of $\ell_1$-norm and the epigraphical projection onto $E_r$, whose closed form expression are derived in Section~\ref{sec:frdual}.

\begin{algorithm*}
\small
\caption{Primal-dual proximal algorithm based on epigraphical projection for classification when $C = S$} \label{algo:primaldualspecif}
Parameter settings: Set $\tau>0$, $\sigma>0$, such that $2(1/\tau - \sigma)\geq \beta$. \\
Initialization: $ (\mathbf{v}^{+[0]}_k, \mathbf{v}^{-[0]}_k, \mathbf{\Theta}^{[0]}_k, \mathbf{s}^{+[0]}_k, \mathbf{s}^{-[0]}_k, \mathbf{\Lambda}_{k,1}^{[0]}, \mathbf{\Lambda}_{k,2}^{[0]}) \in \mathbb{R}^N \times\mathbb{R}^N \times\mathbb{R}^{N \times N} \times \mathbb{R}^N \times \mathbb{R}^N \times \mathbb{R}^{N \times N} \times \mathbb{R}^{N \times N} \quad \forall k=\{1,\ldots, K\}$. \\
For $t=0, 1, \ldots$ 
\begin{equation}
 \left\lfloor \begin{array}{l}  

f(\mathbf{v}^+, \mathbf{v}^-, \mathbf{\Theta}) = \sum_{\ell=1}^L \sum_{k \neq y_\ell}\max \Big\{ 0, 1 - \big( \phi(\mathbf{x}_\ell)^{\top} \mathbf{\Theta}_{y_\ell} \phi(\mathbf{x}_\ell) + (\mathbf{v}_{y_\ell}^+ - \mathbf{v}_{y_\ell}^-)^{\top} \phi(\mathbf{x}_\ell) -  \\
\quad\quad\quad\quad\quad\quad\quad\quad\phi(\mathbf{x}_\ell)^{\top} \mathbf{\Theta}_{k} \phi(\mathbf{x}_\ell) - (\mathbf{v}_{k}^+ - \mathbf{v}_{k}^-)^{\top} \phi(\mathbf{x}_\ell) \big)\Big\}^2+ \sum_{k=1}^K \lambda_1 1^\top(\mathbf{v}^+_k + \mathbf{v}^-_k); \\

\mathbf{v}^{+[t+1]}_k = P_{O} \big(\mathbf{v}^{+[t]}_k - \tau \nabla_{\mathbf{v}_k^+} f(\mathbf{v}^{+[t]}_k, \mathbf{v}^{-[t]}_k, \mathbf{\Theta}^{[t]}_k)  - \tau \mathbf{s}^{+[t]}_k  \big) \quad \forall k=1,\ldots,K \\
\mathbf{v}^{-[t+1]}_k = P_{O} \big(\mathbf{v}^{-[t]}_k - \tau \nabla_{\mathbf{v}_k^-} f(\mathbf{v}^{+[t]}_k, \mathbf{v}^{-[t]}_k, \mathbf{\Theta}^{[t]}_k)  - \tau \mathbf{s}^{-[t]}_k\big) \quad \forall k=1,\ldots,K \\
\mathbf{\Theta}^{[t+1]}_k = P_{S} \big(\mathbf{\Theta}^{[t]}_k - \tau \nabla_{\Theta_k} f(\mathbf{v}^{+[t]}_k, \mathbf{v}^{-[t]}_k, \mathbf{\Theta}^{[t]}_k)  - \tau (\mathbf{\Lambda}_{k,1}^{[t]} + \mathbf{\Lambda}_{k,2}^{[t]} )\big) \quad \forall k=1,\ldots,K \\

\mathbf{\Lambda}_{k,1}^{+[t+1]}  = \textrm{prox}_{\sigma \lambda_2 \Vert \cdot \Vert_1^{*}} \big( \mathbf{\Lambda}_{k,1}^{+[t]}  + \sigma  (2\mathbf{\Theta}^{[t+1]}_k - \mathbf{\Theta}^{[t]}_k) \big) \quad \forall k=1,\ldots,K \\

\textrm{For $k=1,\ldots,K$, \quad $i=1,\ldots,N$ } \\
\lfloor\;\;\; (s^{+(i)[t+1]}_k,s^{-(i)[t+1]}_k,\Lambda_{k,2}^{(i,\cdot)[t+1]})  \\
 \qquad\;\; = \textrm{prox}_{\sigma \iota_{E_r}^{*}} \big(  s^{+(i)[t]}_k  + \sigma  (2{v}^{+[t+1]}_k - {v}^{+[t]})_k,  s^{-(i)[t]}_k  + \sigma  (2{v}^{-(i)[t+1]}_k - {v}^{-(i)[t]}_k), \\
\quad\quad\quad\quad\quad\quad \quad\quad\quad\quad\quad\quad\quad\quad\quad\quad\quad\quad\quad\quad\quad \Lambda_{k,2}^{+(i,\cdot)[t]}  + \sigma  (2\Theta^{(i,\cdot)[t+1]}_k - \Theta^{(i,\cdot)[t]}_k)\big)
\end{array}\right. \nonumber
\end{equation}
\end{algorithm*}

\section{Experiments} \label{sec:experiments}
In this section, we firstly provide a comparison between $\ell_1$ and $\ell_\infty$ proposed approaches with the two closest state-of-the-art procedures, that are ``hierNet'' \cite{Bien2013} and Jenatton's framework~\cite{Jenatton2011b}, on a simulation dataset into regression framework. Comparisons are of two types, in terms of convergence behavior and in term of performing estimation.  

Second, we apply the regression algorithms to HIV disease analysis, which is meaningful to the prevention of the disease. Third, classification experiments are conducted in the context of Parkinson disease classification and face classification. The proposed algorithms and the comparison approaches are implemented in Matlab on a computer with AMD AthlonX4 750 processor.

\subsection{Simulated data}

\subsubsection{Dataset} \label{sec:similutaiondata}
The dataset is created according to \cite{Haris2014}. It is initially composed of $N$ main features and of $N(N-1)/2$ interactions (due to symmetry). We denote ($\overline{\mathbf{v}},\overline{\boldsymbol{\Theta}})\in \mathbb{R}^N \times \mathbb{R}^{N\times N}$ the ground truth generated according to strong hierarchy. $\overline{\mathbf{v}}$ denotes a sparse vector where the non-zero values are associated with randomly selected indexes. The value for the non-zero $\overline{v}^{(i)}$ is randomly selected from  the set $\{-5, -4, \ldots, -1, 1, \ldots, 5\}$. Due to strong hierarchical constraint, $\overline{\boldsymbol{\Theta}}$ is only non-zero for those whose main effects are not zero. The values are randomly chosen from the set $\{-10, -8, \ldots, -2, 2, \ldots, 8,10\}$. 

The algorithmic performance is evaluated on two simulated datasets. The first one is composed of $N=30$ features, the first 10 main features are non-zeros and the interaction ratio is $\rho=3.45\%$ (Dataset30-345). The second dataset is of size $N=100$, the first 30 features are non-zeros and the interaction ratio is  $\rho=0.30\%$ (Dataset100-030).  {The interaction ratio is calculated as the non-zero interaction number divided by the possible interaction number in $\overline{\boldsymbol{\Theta}}$ (i.e.~$N(N-1)/2$).}

The datasets are composed of a training, a validation and a testing set with 100 samples each. $\phi(\mathbf{x}_\ell)$ is randomly generated according to normal distribution $\mathcal{N}(0, \mathbb{I}_N)$. The observed value for each sample is set by $\mathbf{y}_\ell = \phi(\mathbf{x})_\ell^\top \overline{\mathbf{v}} + \phi(\mathbf{x}_\ell)^\top \overline{\boldsymbol{\Theta}} \phi(\mathbf{x}_\ell) + \varepsilon_\ell$, where $\varepsilon_\ell$ is independent Gaussian noise to make the signal-to-noise ratio approximately equal to $5$dB. 

\subsubsection{Comparison with \cite{Jenatton2011b} -- $r=\infty$ and weak hierarchy} In order to provide fair comparison with the work in \cite{Jenatton2011b}, we first investigate the performance of proposed primal-dual proximal algorithm (Algorithm~\ref{algo:weakprimaldualspecif}) when $r=\infty$ and in the configuration of weak hierarchy (no symmetry constraint is considered, i.e. $C=\mathbb{R}^{N\times N}$): ``Weak-PD-$\ell_\infty$''. The algorithmic procedure designed in \cite{Jenatton2011b}  is based on ``FISTA'' and it is named ``Weak-FISTA-$\ell_\infty$'',  {where we use the proximity operator of tree-structured variables in ``SPAMS'' toolbox \cite{Jenatton2011b}, it is worthy mentioning that the core function is written in C++, while the whole implementation of our proposed epigraphical primal-dual algorithm is implemented in MATLAB.} Both are compared in two respects: the convergence of the objective function Eq.~\eqref{eq:equivform} and convergence of the iterates (i.e. $\Vert \mathbf{w}^{[k]} - \mathbf{w}^{[+\infty]} \Vert$). 

The comparisons are displayed in Fig.~\ref{fig:weakinftycompar30} (for Dataset30-345) and in Fig.~\ref{fig:weakinftycompar100} (for Dataset100-030)  {for the optimal $\lambda$ (cf. Section~\ref{sec:similperf})}. The convergence behaviour are presented both w.r.t. iterations (first row) and time (second row). It is observed that: 
\begin{enumerate}
\item[i)] ``Weak-PD-$\ell_\infty$'' and ``Weak-FISTA-$\ell_\infty$'' converge to the same value as shown in the first column of both Fig.~\ref{fig:weakinftycompar30} and \ref{fig:weakinftycompar100}; 
\item[ii)] From the objective function convergence point-of-view, ``Weak-FISTA-$\ell_\infty$'' converges faster than `Weak-PD-$\ell_\infty$'' w.r.t. iteration numbers and also time;
\item[iii)] From the convergence of the iterates point of view (second column of Fig.~\ref{fig:weakinftycompar30} and~\ref{fig:weakinftycompar100}),  {for Dataset30-345, ``Weak-PD-$\ell_\infty$'' converges much faster than ``Weak-FISTA-$\ell_\infty$'' to the optimal solution, especially from the comparison in terms of time (bottom figures). For Dataset100-030, although it seems that the convergence in iteration for ``Weak-PD-$\ell_\infty$'' slower, but it costs less time, this may because that the default values for $\tau$ and $\sigma$ are not optimal for Dataset100-030.} It shows that the proposed algorithm enables to be closest to the optimal solution with less time;
\item[iv)]  {Contrary to``Weak-FISTA-$\ell_\infty$'', our proposed solution can provide a `Strong-PD-$\ell_\infty$'' counterpart of this `Weak-PD-$\ell_\infty$'' without inner iterations.}
\end{enumerate}
 {Further results, especially w.r.t the choice of $\lambda$ and regression performance on the training/validation/test sets are provided in Section~\ref{sec:similperf}}.

\begin{figure}[h]
\centering
	\includegraphics[height=3.5cm]{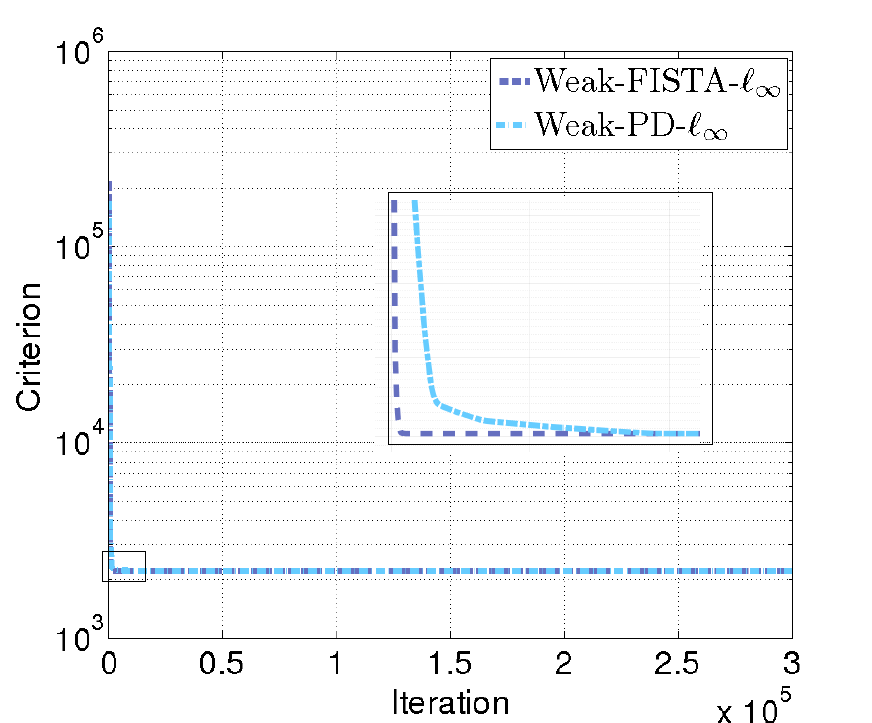} 
	\includegraphics[height=3.5cm]{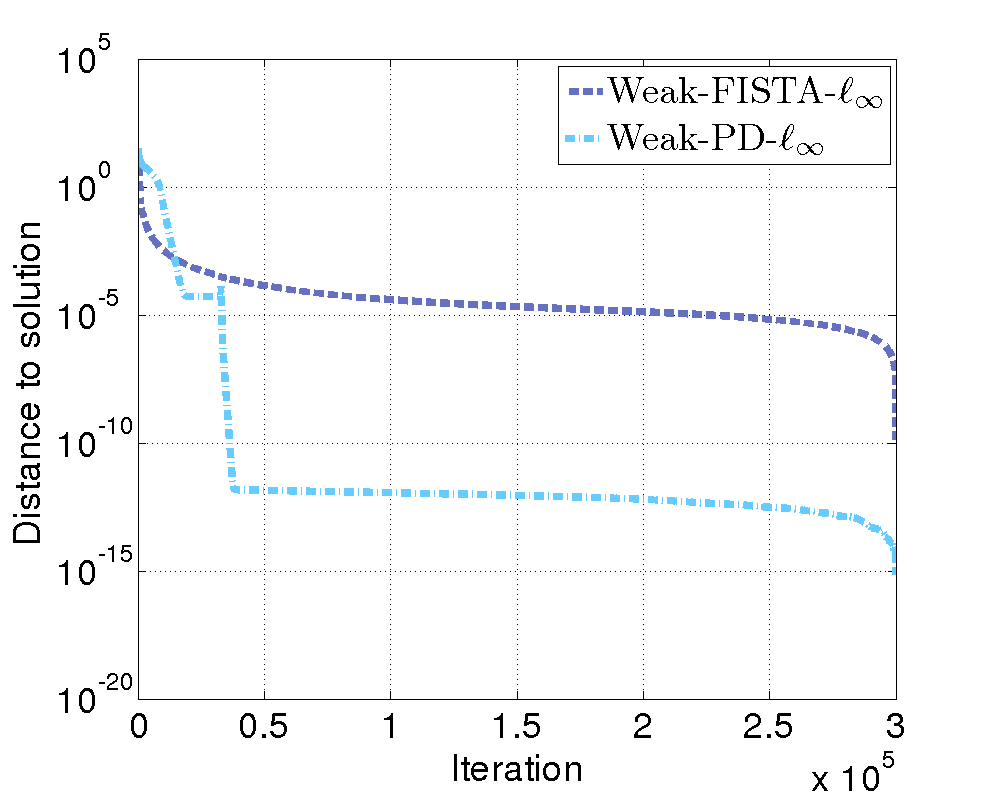}	
	\\
	\includegraphics[height=3.5cm]{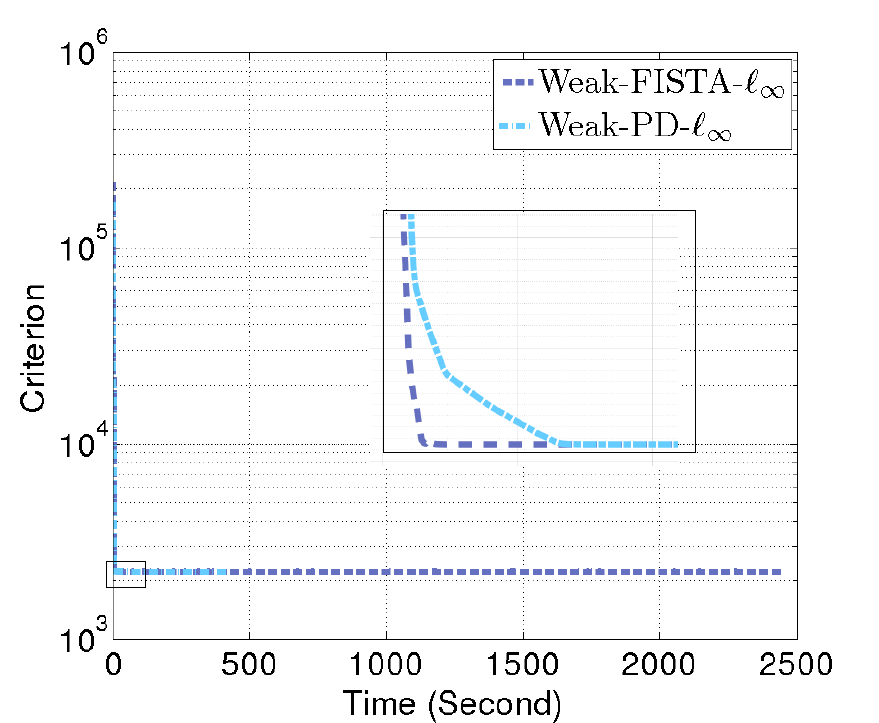}
	\includegraphics[height=3.5cm]{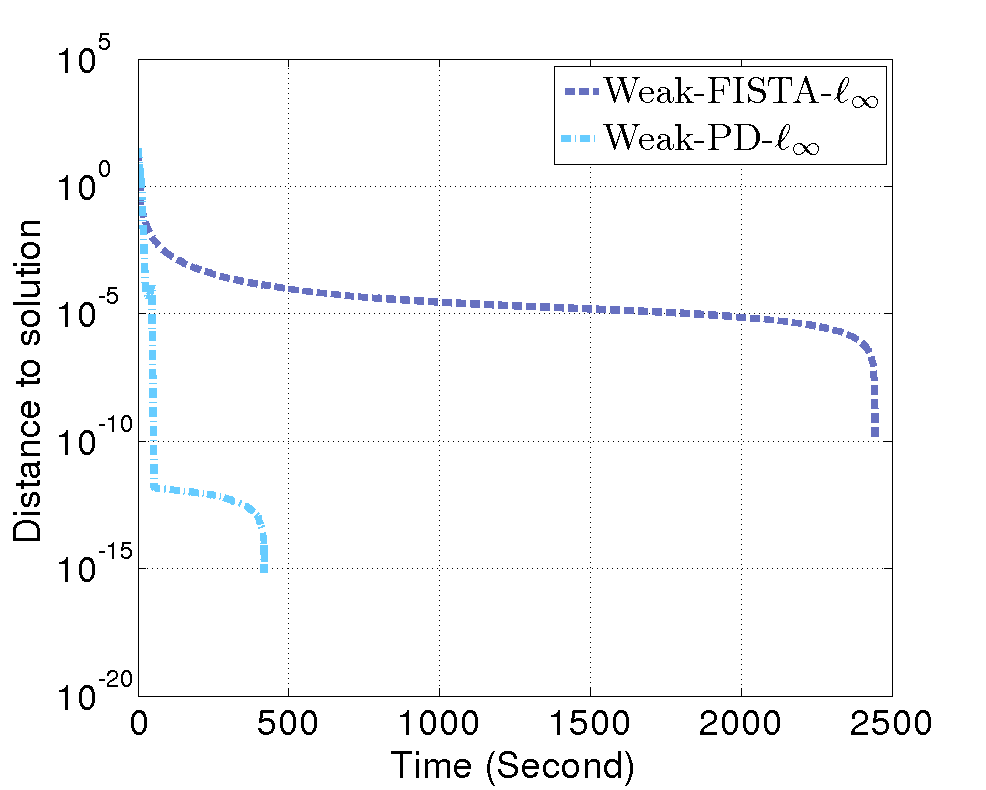}
	\caption{Comparison between the proposed ``Weak-PD-$\ell_\infty$''  and ``Weak-FISTA-$\ell_\infty$'' on Dataset30-345 for $\lambda=14$. (top-left) Objective function in \eqref{eq:regform} w.r.t. iterations, (bottom-left) Objective function in w.r.t. time, (top-right) Distance $\Vert\mathbf{w}^{[k]}-\mathbf{w}^{[\infty]} \Vert$ w.r.t. iterations, (bottom-right) Distance $\Vert\mathbf{w}^{[k]}-\mathbf{w}^{[\infty]} \Vert$ w.r.t. time.} \label{fig:weakinftycompar30}
\end{figure}

\begin{figure}[h]
\centering
	\includegraphics[height=3.5cm]{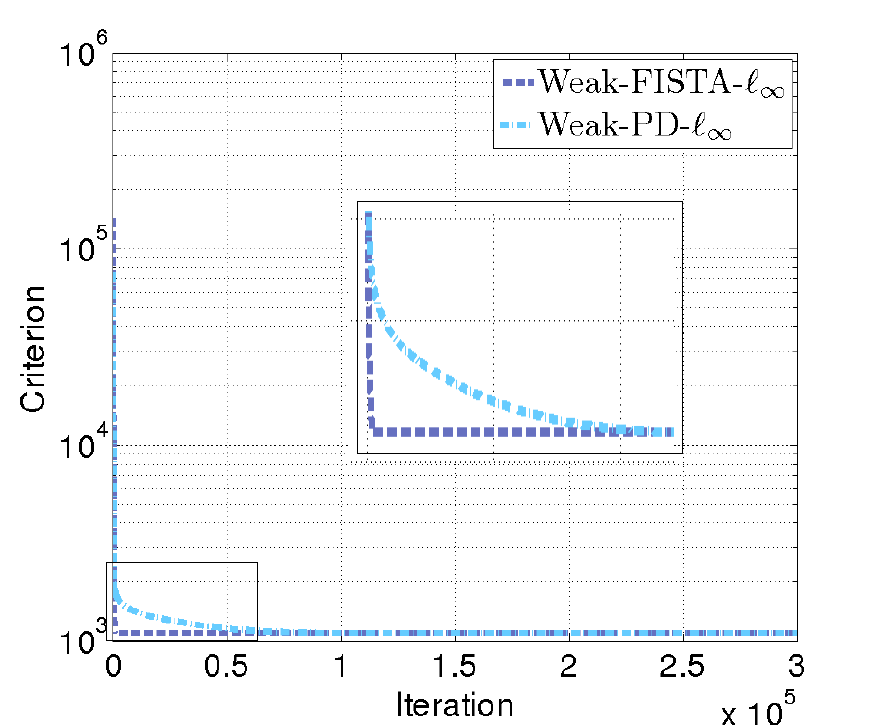} 
	\includegraphics[height=3.5cm]{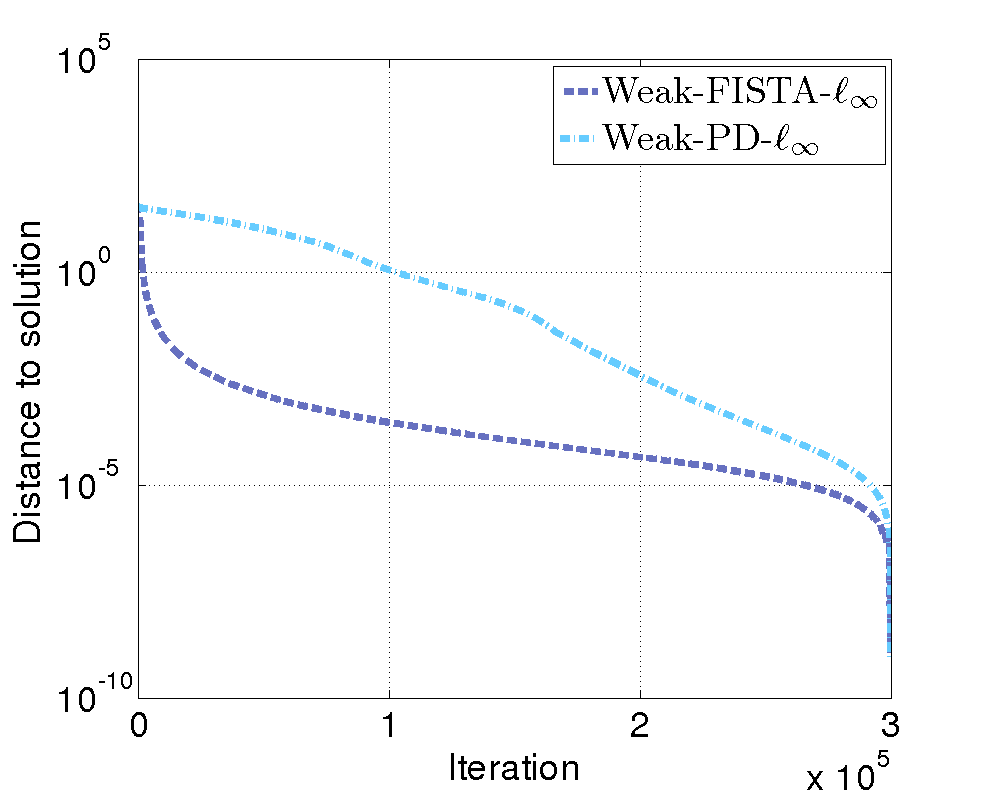} 
	\\
	\includegraphics[height=3.5cm]{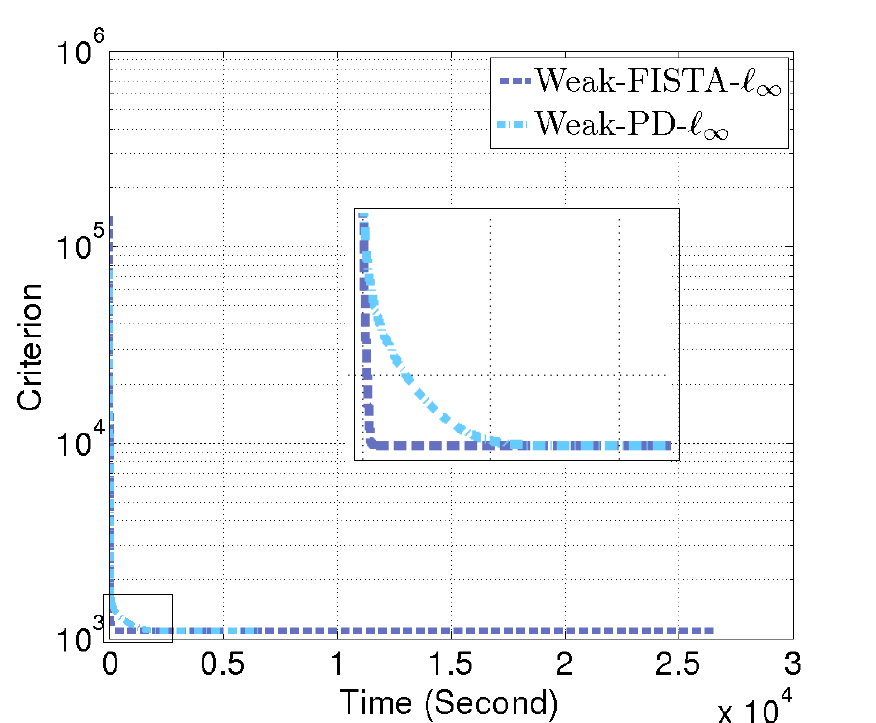}
	\includegraphics[height=3.5cm]{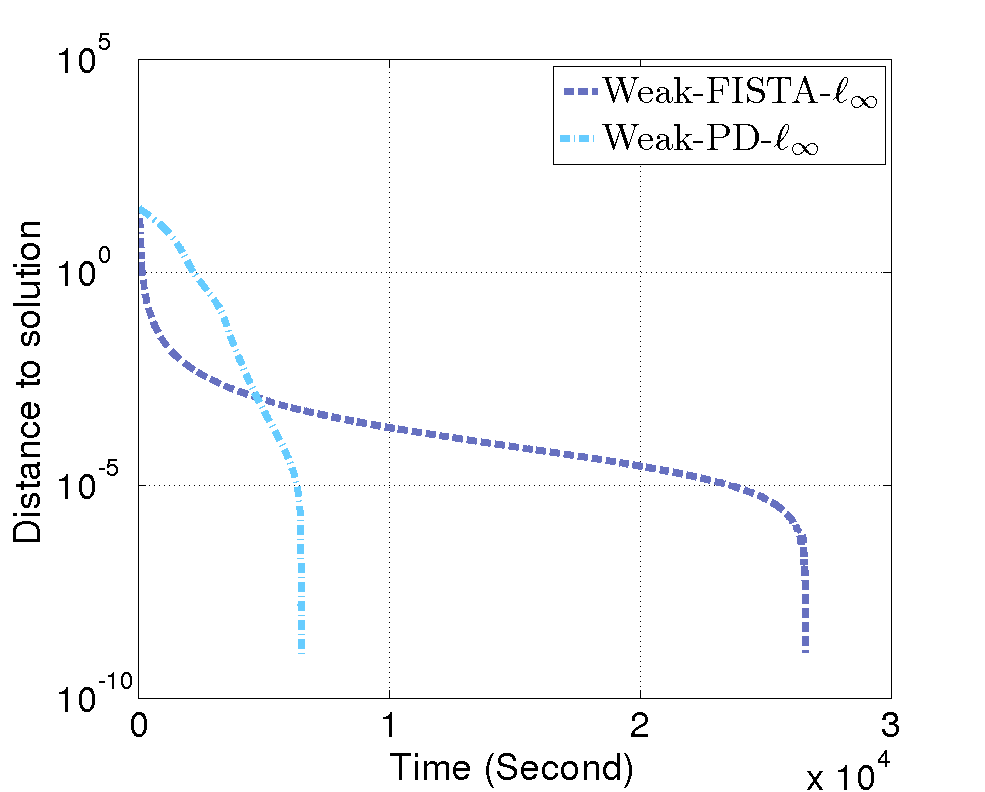} 	
	\caption{Comparison between the proposed ``Weak-PD-$\ell_\infty$''  and ``Weak-FISTA-$\ell_\infty$ on Dataset100-030 for $\lambda=8$. (top-left) Objective function \eqref{eq:regform}  w.r.t. iterations, (bottom-left) Objective function \eqref{eq:regform}  w.r.t. time, (top-right) Distance $\Vert\mathbf{w}^{[k]}-\mathbf{w}^{[\infty]} \Vert$ w.r.t. iterations, (bottom-right) Distance $\Vert\mathbf{w}^{[k]}-\mathbf{w}^{[\infty]} \Vert$ w.r.t. time.}
\label{fig:weakinftycompar100}
\end{figure}

\subsubsection{Comparison with \cite{Bien2013} -- $r=1$ and strong hierarchy}  Similar experiments are conducted  for $r=1$ and strong hierarchy  (i.e., $C=S$). The proposed algorithm (Algorithm~\ref{algo:primaldualspecif} with $r=1$) is called ``Strong-PD-$\ell_1$" and it is compared with ``hierNet'' whose iterations are derived from an ADMM scheme, named``Strong-ADMM-$\ell_1$",  {it is noted that the hyperparameter in the inner iteration for ADMM is set to be the default value (i.e.~1) in the following experiments.}

The comparisons between these two schemes are displayed in Fig.~\ref{fig:strongl1compar30} (for Dataset30-345) and Fig.~\ref{fig:strongl1compar100-01} (for Dataset100-030)  {for the optimal $\lambda$ (cf. Section~\ref{sec:similperf})}. Both are compared in three respects: the convergence of the objective function Eq.~\eqref{eq:regform}, the convergence of the iterates (i.e. $\Vert \mathbf{w}^{[k]} - \mathbf{w}^{[+\infty]} \Vert$), and the distance to the set $S$. These quantities are displayed w.r.t. iteration number and time. It is observed that: 
\begin{enumerate}
\item[i)] ``Strong-PD-$\ell_1$" and ``Strong-ADMM-$\ell_1$" converge to the same solution, but the convergence of the objective to the optimum solution is very sensitive  to the trade-off hyper-parameter for ``Strong-ADMM-$\ell_1$";
\item[ii)]  ``Strong-PD-$\ell_1$" is always faster either in iteration number or in time and either in terms of functional or in terms of iterations. The explanation mainly comes from the inner iterations required with ``Strong-ADMM-$\ell_1$"  when dealing with $C=S$; 
\item[iii)] Third column of Fig.~\ref{fig:strongl1compar30} and ~\ref{fig:strongl1compar100-01} highlights that the constraint violations (i.e. distance to $S$) with the proposed method is always smaller than with ADMM;
\end{enumerate}
 {Further results, especially w.r.t the choice of $\lambda$ and regression performance on the training/validation/test sets are provided in Section~\ref{sec:similperf}.}

\begin{figure}[H]
\centering
	\includegraphics[height=3cm]{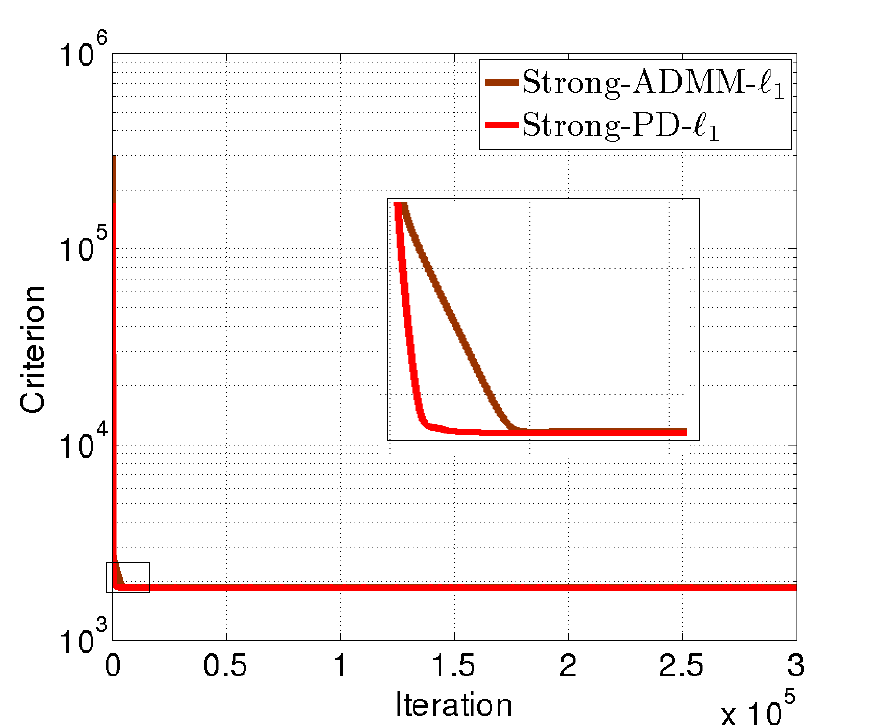} 
	\includegraphics[height=3cm]{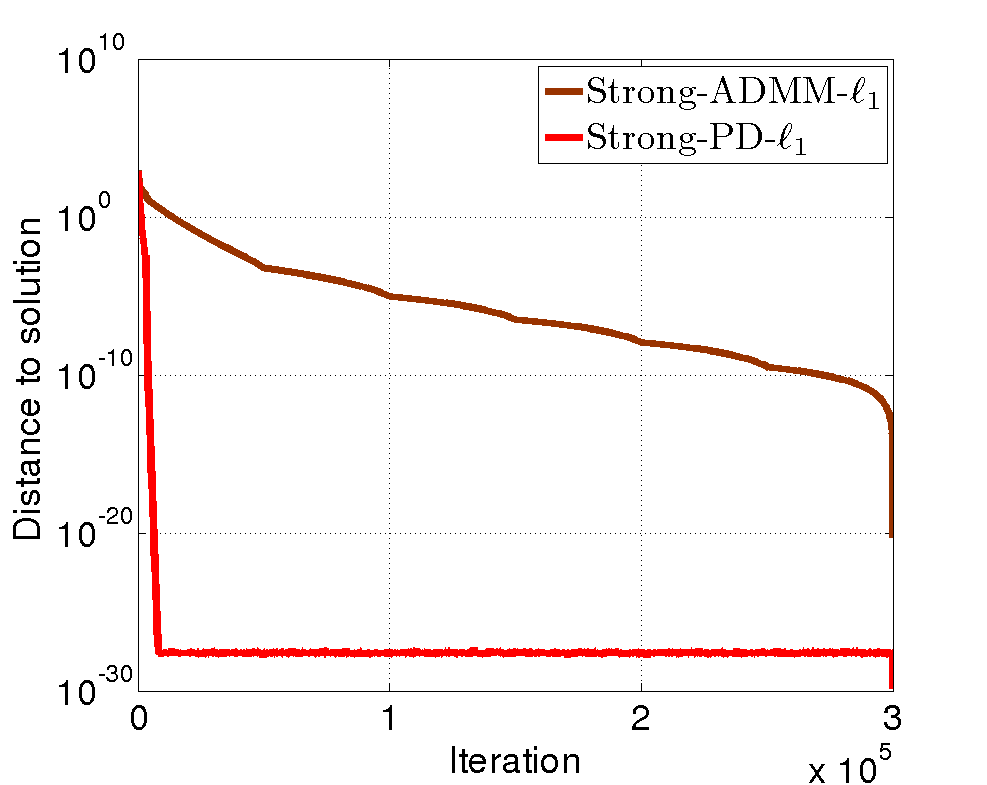} 
	\includegraphics[height=3cm]{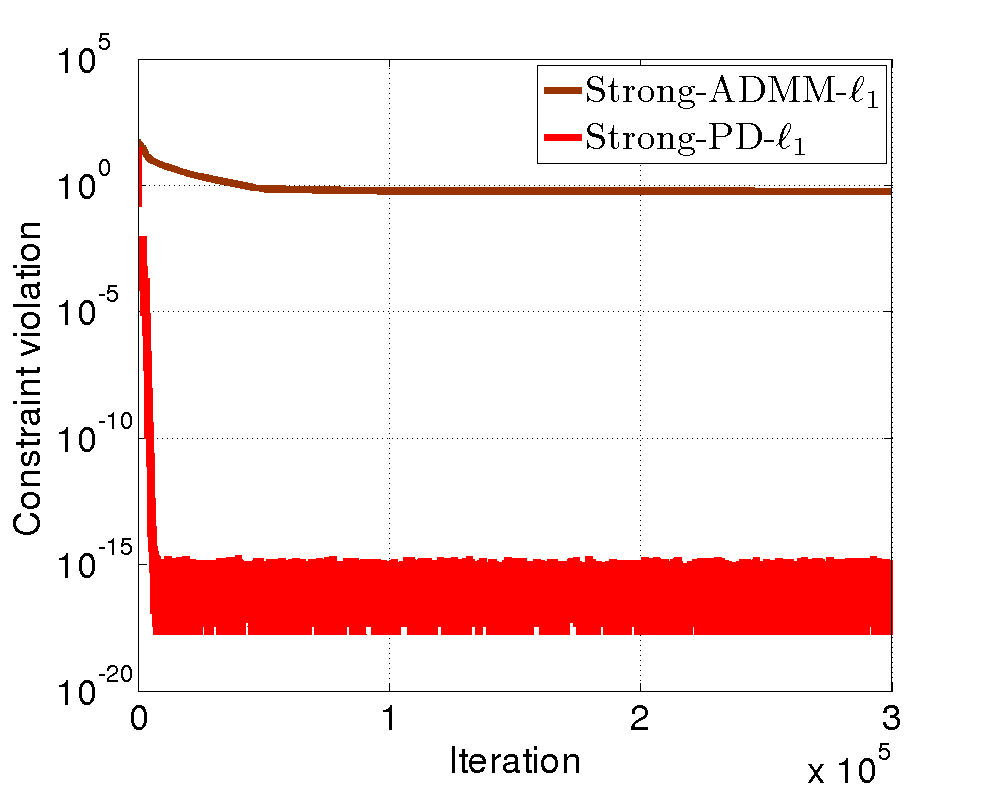} 
	\\	
	\includegraphics[height=3cm]{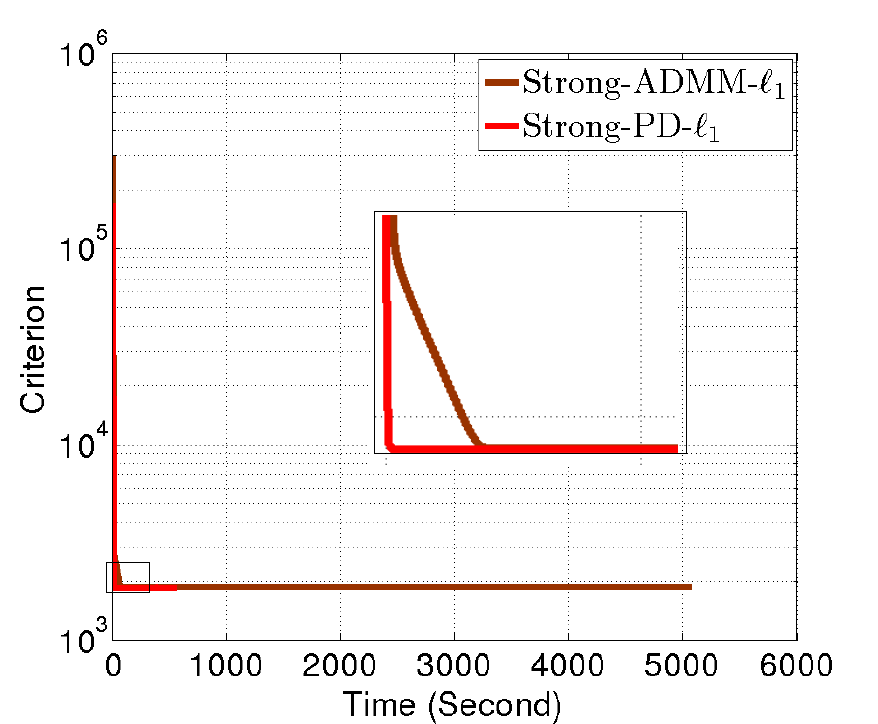}
	\includegraphics[height=3cm]{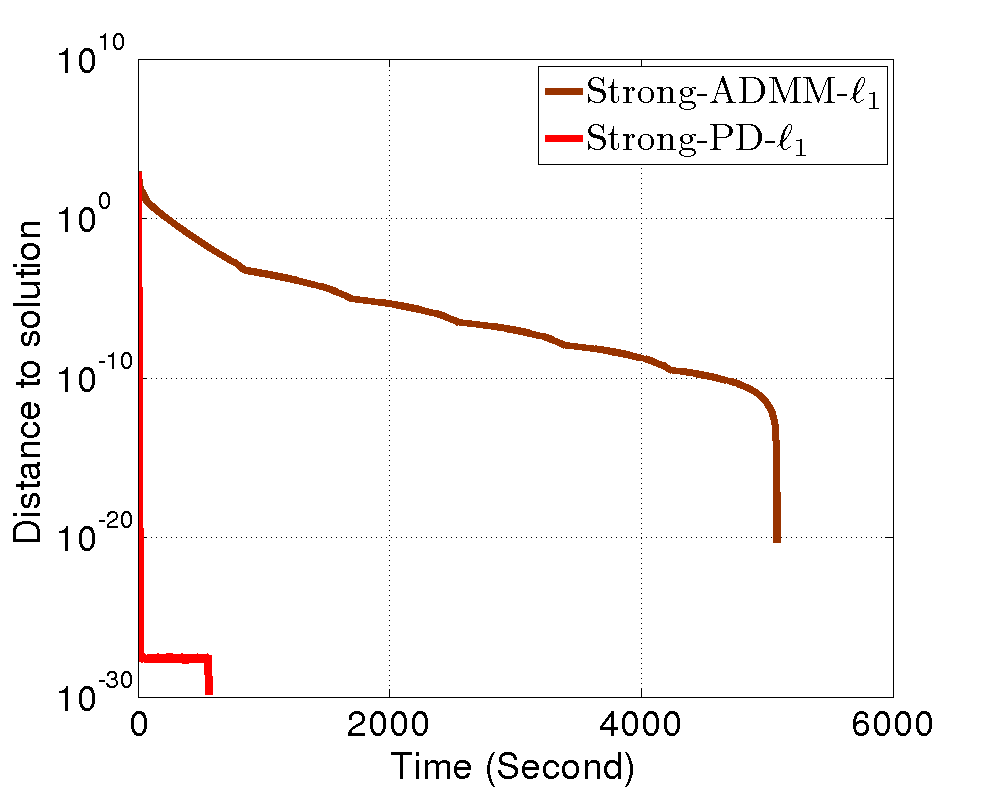}	
	\includegraphics[height=3cm]{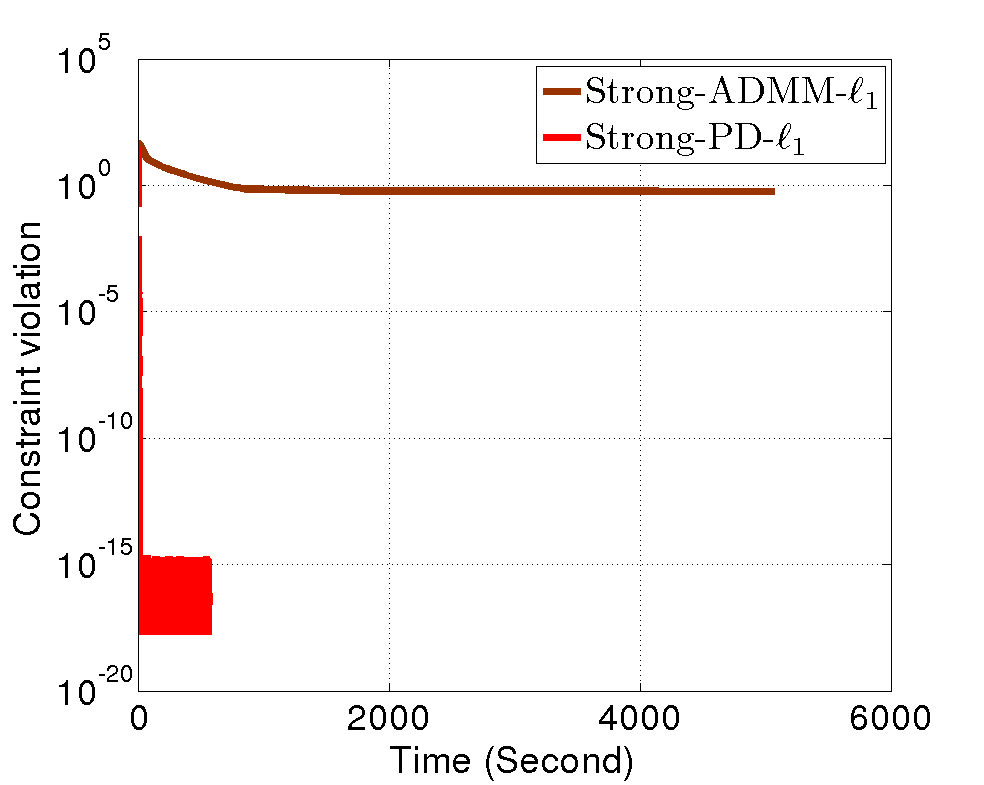}	
	\caption{Comparison between the proposed ``Strong-PD-$\ell_1$''  and ``Strong-ADMM-$\ell_1$'' on Dataset30-345 for $\lambda=8$. (top-left) Objective function \eqref{eq:regform}  w.r.t. iterations, (bottom-left) Objective function \eqref{eq:regform}  w.r.t. time, (top-middle) Distance $\Vert\mathbf{w}^{[k]}-\mathbf{w}^{[\infty]} \Vert$ w.r.t. iterations, (bottom-middle) Distance $\Vert\mathbf{w}^{[k]}-\mathbf{w}^{[\infty]} \Vert$ w.r.t. time, (top-right) Distance to the strong hierarchy constraint $S$ w.r.t. iterations, (bottom-right) Distance to the strong hierarchy constraint $S$ w.r.t. time. \label{fig:strongl1compar30}}
\end{figure}

\begin{figure}[H]
\centering
	\includegraphics[height=3cm]{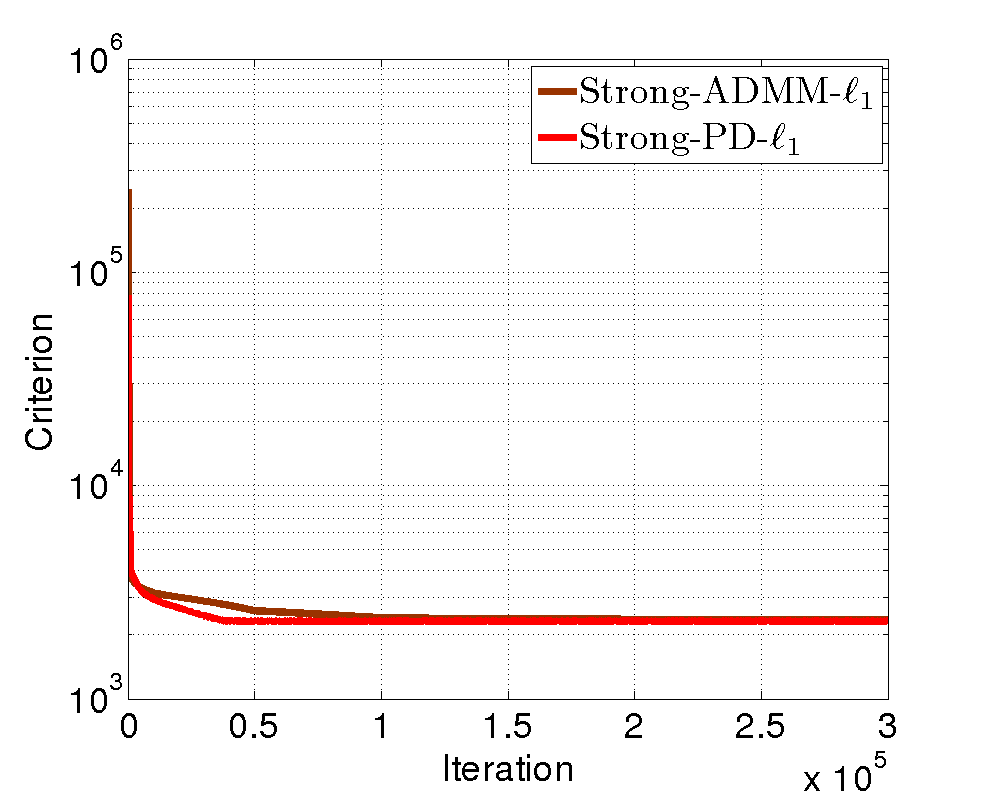} 
	\includegraphics[height=3cm]{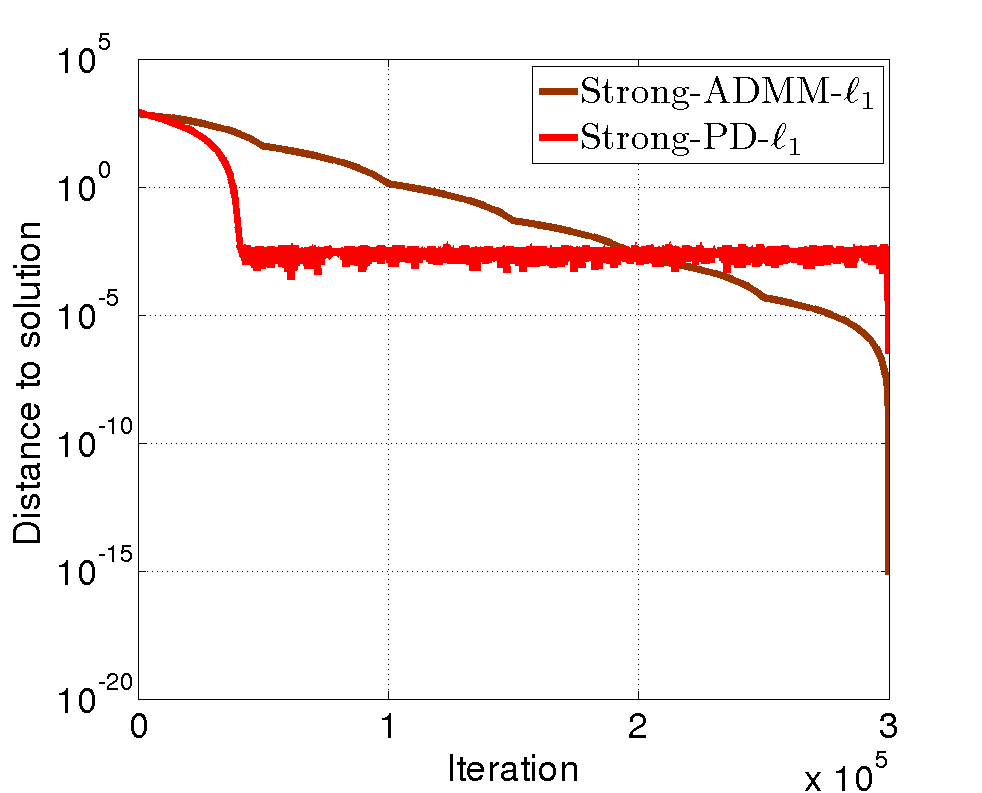} 
	\includegraphics[height=3cm]{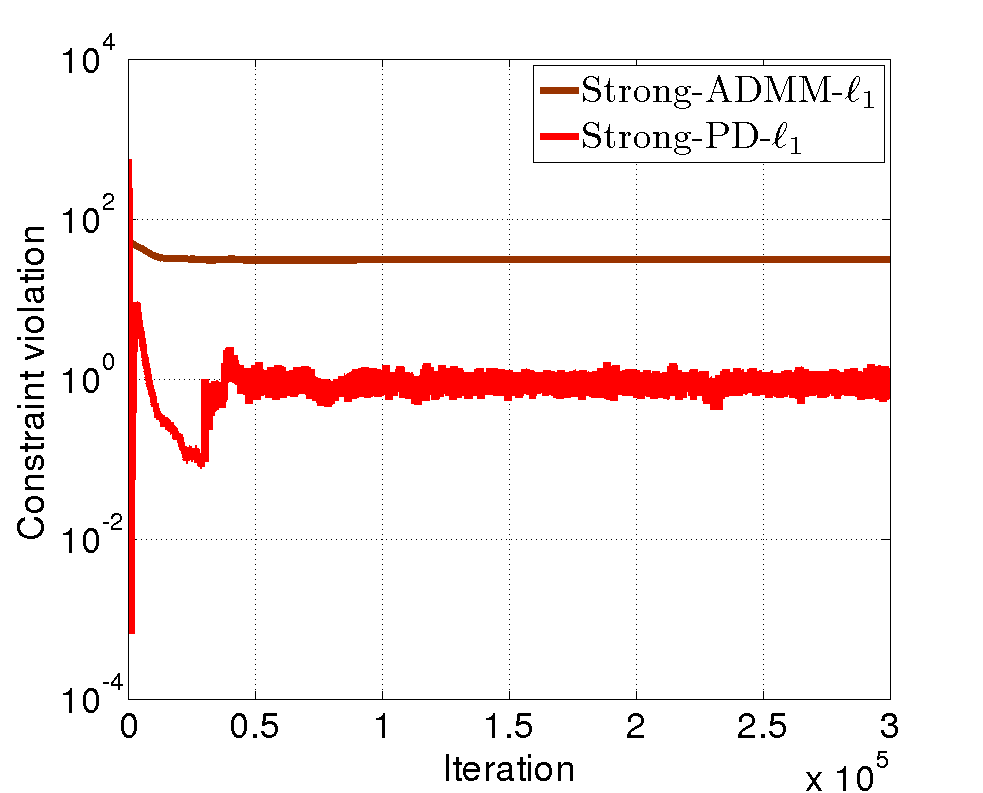} 	
	\\	
	\includegraphics[height=3cm]{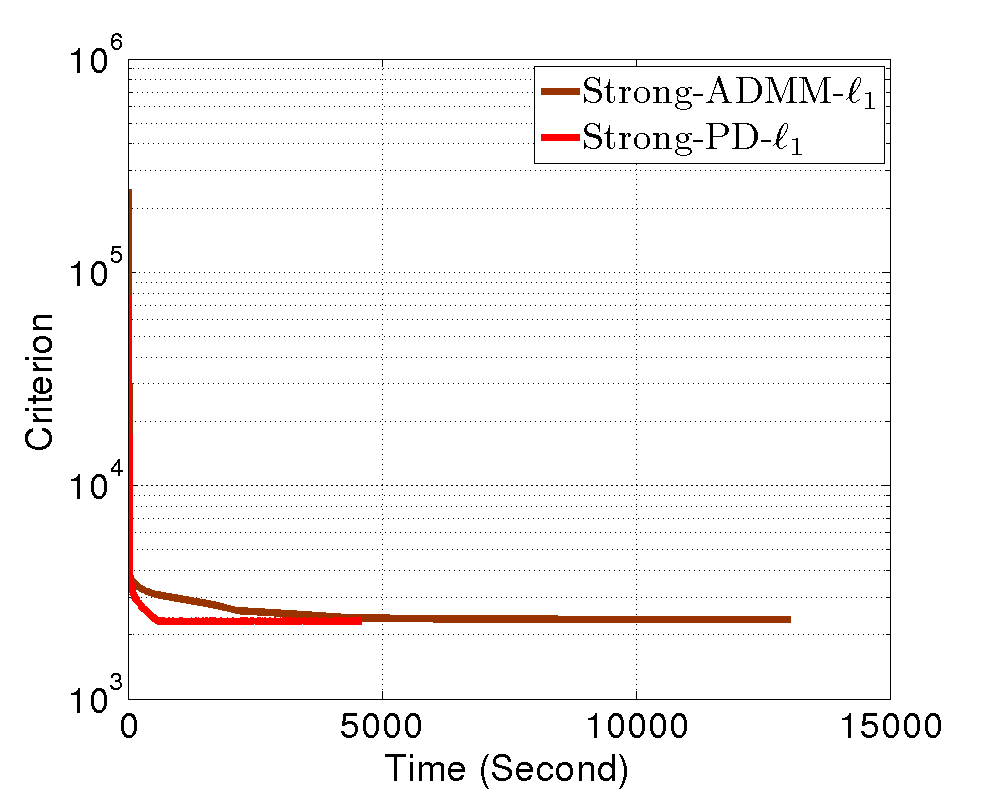}
	\includegraphics[height=3cm]{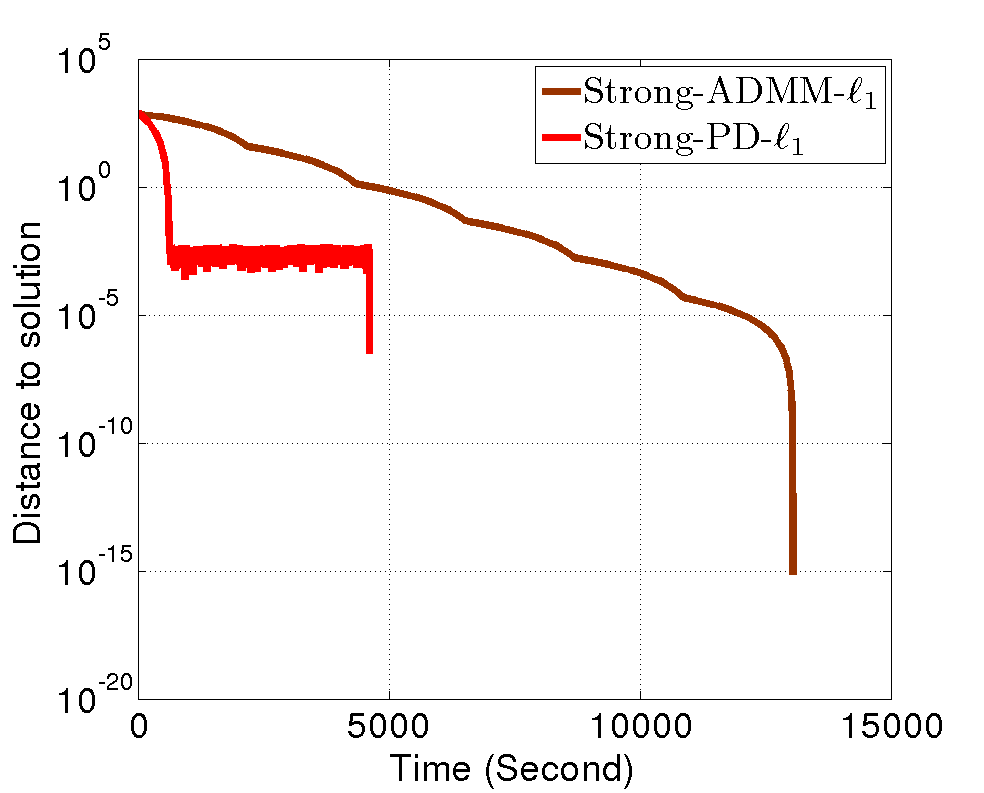}	
	\includegraphics[height=3cm]{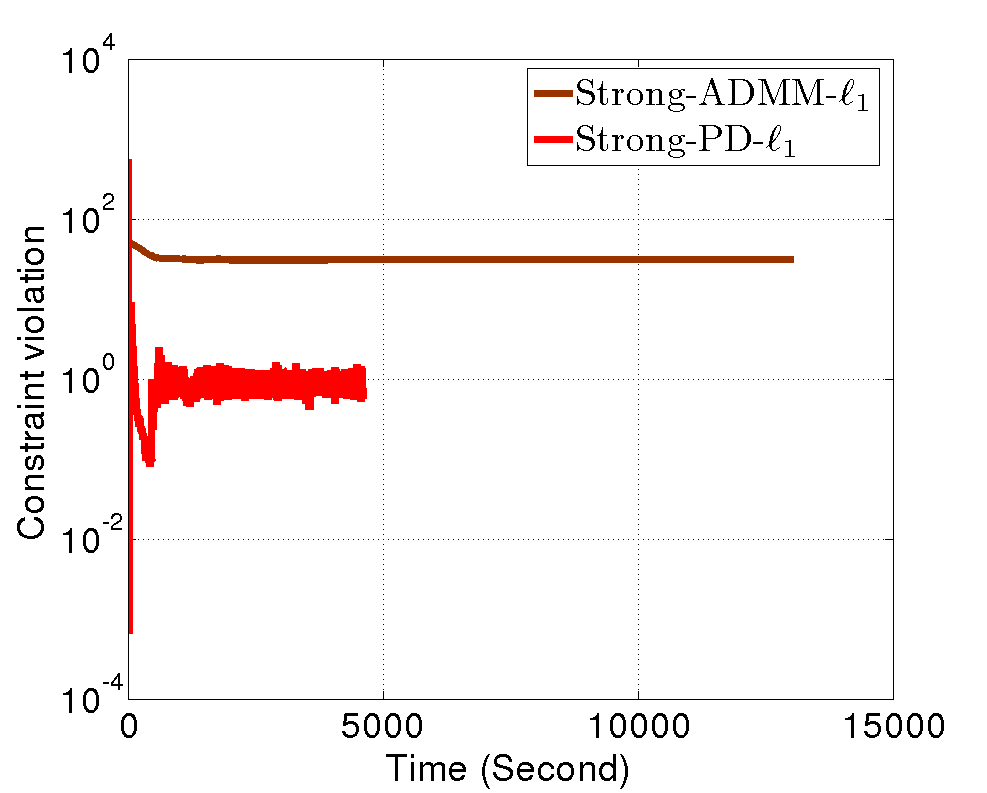}	
	\caption{Comparison between the proposed ``Strong-PD-$\ell_1$''  and ``Strong-ADMM-$\ell_1$'' on Dataset100-030 for $\lambda=10$. (top-left) Objective function \eqref{eq:regform}  w.r.t. iterations, (bottom-left) Objective function \eqref{eq:regform}  w.r.t. time, (top-middle) Distance $\Vert\mathbf{w}^{[k]}-\mathbf{w}^{[\infty]} \Vert$ w.r.t iterations, (bottom-middle) Distance $\Vert\mathbf{w}^{[k]}-\mathbf{w}^{[\infty]} \Vert$ w.r.t. time, (top-right) Distance to the strong hierarchy constraint $S$ w.r.t. iterations, (bottom-right) Distance to the strong hierarchy constraint $S$ w.r.t. time. \label{fig:strongl1compar100-01}}
\end{figure}

\subsubsection{Choice of $\lambda$} \label{sec:similperf}

 {In order to select the optimal $\lambda$, we perform 5 simulations and compute the average performance. Hold-out validation method is adopted for the selection of the trade-off parameter $\lambda$, where the models with different $\lambda$ are trained on the training set, and then the best $\lambda$ is selected based on the performance on the validation set, finally we give the performance of the model on the test set with the selected $\lambda$. We also compare the proposed algorithms to other two common methods:}

 {
\begin{itemize}
 \item SVR-$\ell_2$: it attempts to minimize the criteria with an empirical loss and a $\ell_2$-norm on the weights, aiming to learn small weights, which is expressed as:
 \begin{equation}
   \underset{\mathbf{w}\in \mathbb{R}^{N+N\times N}}{\textrm{minimize}} \lambda\sum_{\ell=1}^L \Big( y_\ell - [ \phi({\bf x}_\ell), \  \phi({\bf x}_\ell)^\top \phi({\bf x}_\ell) ]^\top \mathbf{w} \Big)^2 + \frac{1}{2}\Vert \mathbf{w} \Vert^2 \label{equa:svrl2}
 \end{equation}
 \noindent We apply LIBSVM library~\cite{Chang2011} to solve this problem in the dual form with linear kernels on the training samples. In our experiments, $\lambda$ is validated from $10$ to $1000$ with a space $10$, and the results with the best selected $\lambda$ are shown in Tab.~\ref{tab:simufinalres}. 
 \item SVR-$\ell_1$: it integrates the prior knowledge of sparsity to the criteria and aims to learn a set of sparse weights, which can be written as:
 \begin{equation}
   \underset{\mathbf{w}\in \mathbb{R}^{N+N\times N}}{\textrm{minimize}} \sum_{\ell=1}^L \frac{1}{2} \Big( y_\ell - [ \phi({\bf x}_\ell), \  \phi({\bf x}_\ell)^\top \phi({\bf x}_\ell) ]^\top \mathbf{w} \Big)^2 + \lambda \Vert \mathbf{w} \Vert_1 \label{equa:svrl1}  
 \end{equation}
 \noindent We apply forward-backward proximal algorithm~\cite{Combettes2009} to solve it. In our experiments, $\lambda$ is validated in $[10^{-6}, 10^{-5}, \ldots, 1, 2, 4, \ldots, 2^5]$, and the results with the best selected $\lambda$ are shown in Tab.~\ref{tab:simufinalres}.
\end{itemize}
}

 {The average performance in mean square errors (MSE) for different interaction models with different $\lambda$ on validation set are shown in Fig.~\ref{fig:lambdacomparison}, and their performance with best selected $\lambda$ on the three sets are given in Tab.~\ref{tab:simufinalres}. From the performance point of view in the Fig.~\ref{fig:lambdacomparison} and Tab.~\ref{tab:simufinalres}, it can be seen that: i) The performance SVR-$\ell_2$ are the worst, which is not surprise because the penalization term aims to learn small values, rather than sparse ones. SVR-$\ell_1$ works better and sparse weights are learned, which meets the sparse property of interactions, but it is still inferior to strong hierarchy with $r=1$; ii) ``Weak-PD-$\ell_\infty$'' and ``Weak-FISTA-$\ell_\infty$'' almost have the same performance for different $\lambda$ on both simulation datasets; iii) For Dataset30-345, ``Strong-ADMM-$\ell_1$'' have similar performance with ``Strong-PD-$\ell_1$'', but due to sophisticated hyper-parameter selection in ADMM, fluctuations over different $\lambda$ are observed, especially for large $\lambda$ values; for Dataset100-030, there are some gaps between different $\lambda$, and the performance of ``Strong-ADMM-$\ell_1$'' deteriorates seriously when $\lambda$ becomes large. We conjecture that ADMM would become sensitive to the hyper-parameter selection when the number of features increases. }

\begin{figure}[H]
\centering
	\includegraphics[height=4cm, width=5cm]{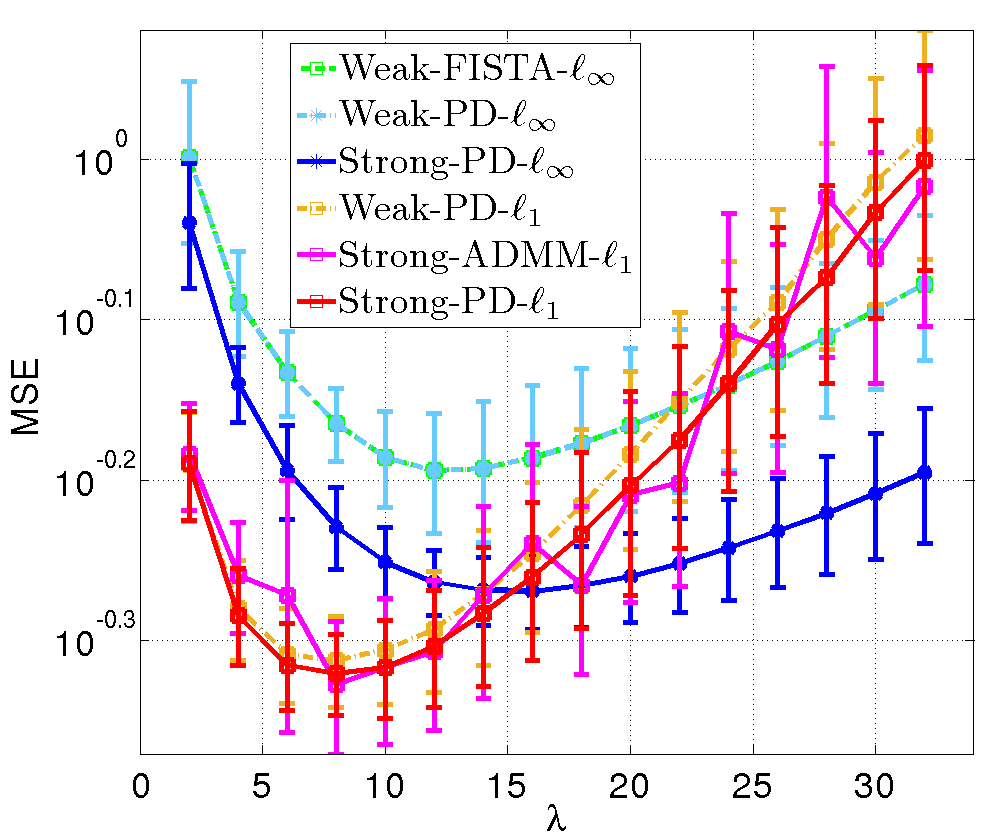} 
	\includegraphics[height=4cm, width=5cm]{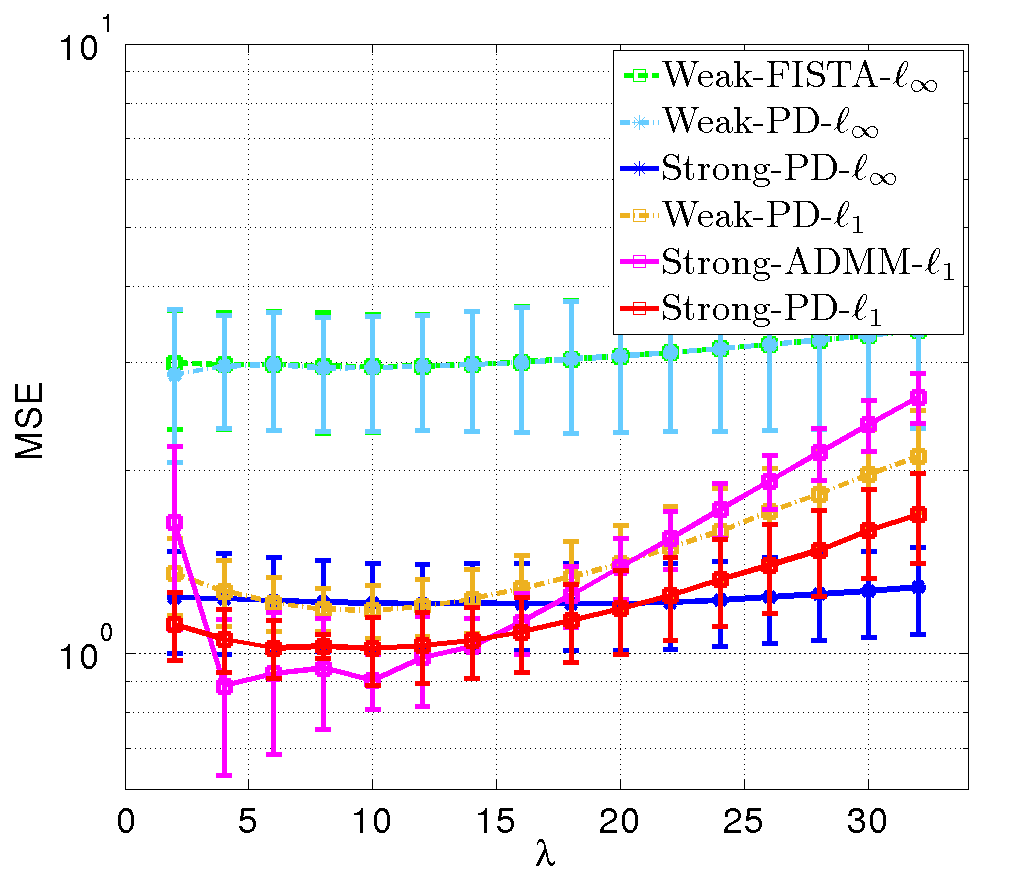} 
	\caption{The performance in MSE on the validation set obtained with the six algorithms (``Weak-FISTA-$\ell_\infty$'', ``Weak-PD-$\ell_\infty$'', ``Strong-PD-$\ell_\infty$, ``Weak-PD-$\ell_1$'',``Strong-ADMM-$\ell_1$ and ``Strong-PD-$\ell_1$) allowing to compare properly the different configurations of the regularization term \eqref{equa:gencons}. (left) Dataset30-345 (right) Dataset100-030).\label{fig:lambdacomparison}}
\end{figure}

\begin{table}[H]
\centering
\resizebox{0.8\textwidth}{!}{
 \begin{tabular}{c|c|c|ccc}
  \hline
 Data &  Approach & $\lambda$ & TR & VAL & TE  \\
\hline
\multirow{8}{*}{Dataset30-347} &  {SVR-$\ell_2$} & 120 & 81.698$\pm$43.091 & 1046.957$\pm$135.988 & 1064.482$\pm$264.993 \\
  &  {SVR-$\ell_1$} & 0.01 & 0.117$\pm$0.013 & 0.520$\pm$0.038 & 0.500$\pm$0.036 \\
  & Weak-FISTA-$\ell_\infty$ & 14 & 0.166$\pm$0.017 & 0.642$\pm$0.065 & 0.602$\pm$0.057 \\
  & Weak-PD-$\ell_\infty$ & 14 & 0.166$\pm$0.017 & 0.642$\pm$0.065 & 0.602$\pm$0.057  \\
  & Strong-PD-$\ell_\infty$ & 16 & 0.190$\pm$0.017 & 0.538$\pm$0.029 & 0.527$\pm$0.048  \\
  & weak-PD-$\ell_1$& 8 & 0.198$\pm$0.017 & 0.487$\pm$0.032 & 0.481$\pm$0.038 \\
  & Strong-ADMM-$\ell_1$ & 8 & 0.184$\pm$0.023 & 0.471$\pm$0.045 & 0.471$\pm$0.061  \\ 
  & Strong-PD-$\ell_1$ & 8 & 0.196$\pm$0.017 & 0.478$\pm$0.028 & 0.473$\pm$0.041  \\
\hline
 \multirow{8}{*}{Dataset100-030} &  {SVR-$\ell_2$} & 90 & 96.217$\pm$61.627 & 1843.295$\pm$62.330 & 1870.404$\pm$174.944  \\
  &  {SVR-$\ell_1$} & 0.1 & 1.496$\pm$0.191 & 2.918$\pm$1.596 & 2.831$\pm$1.255  \\
  & Weak-FISTA-$\ell_\infty$ & 10 & 0.032$\pm$0.001  & 2.954$\pm$0.649 & 3.049$\pm$0.749  \\ 
  & Weak-PD-$\ell_\infty$ & 2 & 0.001$\pm$0.0002 & 2.867$\pm$0.808 & 2.999$\pm$0.943  \\ 
  & Strong-PD-$\ell_\infty$ & 18 & 0.101$\pm$0.004 & 1.034$\pm$0.245 & 1.025$\pm$0.285 \\
  & weak-PD-$\ell_1$ & 10 & 0.109$\pm$0.006 & 1.050$\pm$0.217 & 1.062$\pm$0.251  \\ 
  & Strong-ADMM-$\ell_1$ & 4 & 0.025$\pm$0.001 & 0.884$\pm$0.252 & 0.814$\pm$0.212 \\                       
  & Strong-PD-$\ell_1$ & 10 & 0.113$\pm$0.006  &0.918$\pm$0.179  &0.929$\pm$0.199  \\               
 \hline
 \end{tabular}
 }
 \caption{The comparison results (MSE) on the train, validation and test set of different algorithms under best selected $\lambda$ when both datasets are used.\label{tab:simufinalres}}
\end{table}

\subsubsection{Discussion regarding the choice of $r$} From the above algorithmic comparisons, we have observed that the proposed method, either for $\ell_1$ or $\ell_\infty ,$ delivers an accurate solution (as other algorithmic strategies) but faster and with the possibility to include the strong-hierarchy constraint without inner iterations. In the following analysis, we will thus focus on comparisons between weak/strong $\ell_1$ or $\ell_\infty$ using the proposed algorithm (``Weak-PD-$\ell_\infty$'' , ``Weak-PD-$\ell_1$'' , ``Strong-PD-$\ell_\infty$'' , ``Strong-PD-$\ell_1$''). The average performance with the associated variances are presented in Fig.~\ref{fig:lambdacomparison} for different values of the regularization parameter $\lambda$. Moreover, the performance obtained by hold-out validation are presented in Tab.~\ref{tab:simufinalres}. It can be observed that:
\begin{enumerate}
\item[i)] The good behavior of the strong hierarchy constraint clearly appears for both datasets. Indeed, we recall that the data have  been created with strong hierarchical structure and we can clearly observe that for both datasets the performances with the strong hierarchical constraint are always better either for $r=1$ or $r=+\infty$;
\item[ii)] In our set of experiments, the regularization with $r=1$ always leads to better performance than with $r=+\infty$;
\item[iii)] For ``Weak-PD-$\ell_\infty$'', MSE values associated with the Dataset100-030 are larger, probably due to overfitting. This assumption can be validated by the results obtained on the training dataset.
\end{enumerate}

\subsection{Application to HIV Data} \label{sec:hivsection}
 {Feature interaction learning is very meaningful to discover the intrinsic correlations between the variables, especially in the real-life applications, for example, genetic diseases and drug analysis. Since the effects usually depends on several genes or factors, the interaction of these associating factors is worth studying.} In this section, we apply the proposed algorithms to drug analysis: HIV dataset on the susceptibility of the HIV-1 virus to six nucleoside reverse transcriptase inhibitors (NRTIs). This dataset\footnote{It can be downloaded from https://hivdb.stanford.edu/pages/published\_analysis/genophenoPNAS2006/} was collected by \cite{Rhee14112006} and it was used to model HIV-1 susceptibility to the drugs because HIV-1 virus can become resistant through genome mutations for different subjects. In the dataset, there are 639 subjects and 240 gene locations. For each observation, the mutation status at each gene location is recorded and also give six drugs (log) susceptibility responses. In this work, we focus on the prediction model for 3TC drug. 

 {The dataset is randomly split into two half sets for training and test, respectively as adopted by \cite{Bien2013,Haris2014}. The trade-off parameter $\lambda$ is selected similarly by hold-out validation method. The performance are measured by the average MSE with the associating variance over 5 random splits on the test set for ``Weak-PD-$\ell_\infty$'', ``Strong-PD-$\ell_\infty$'', ``Weak-PD-$\ell_1$'', ``Strong-PD-$\ell_1$'', ``Weak-FISTA-$\ell_\infty$'' and ``Strong-ADMM-$\ell_1$''.}

\subsubsection{Reduced dataset}  Following \cite{Haris2014}, we first work on a reduced dataset over the bins of ten adjacent loci, rather than all 240 genes locations, resulting from the fact that nearby genomes have similar effects to the drug susceptibility, and also leading to less sparse data. So for the first set of experiments we have $N=24$ features. The value for each bin is set to 1 if one of  {genes} in that bin undergoes mutation. 

\begin{figure}[tp]
\centering
  \includegraphics[height=4cm]{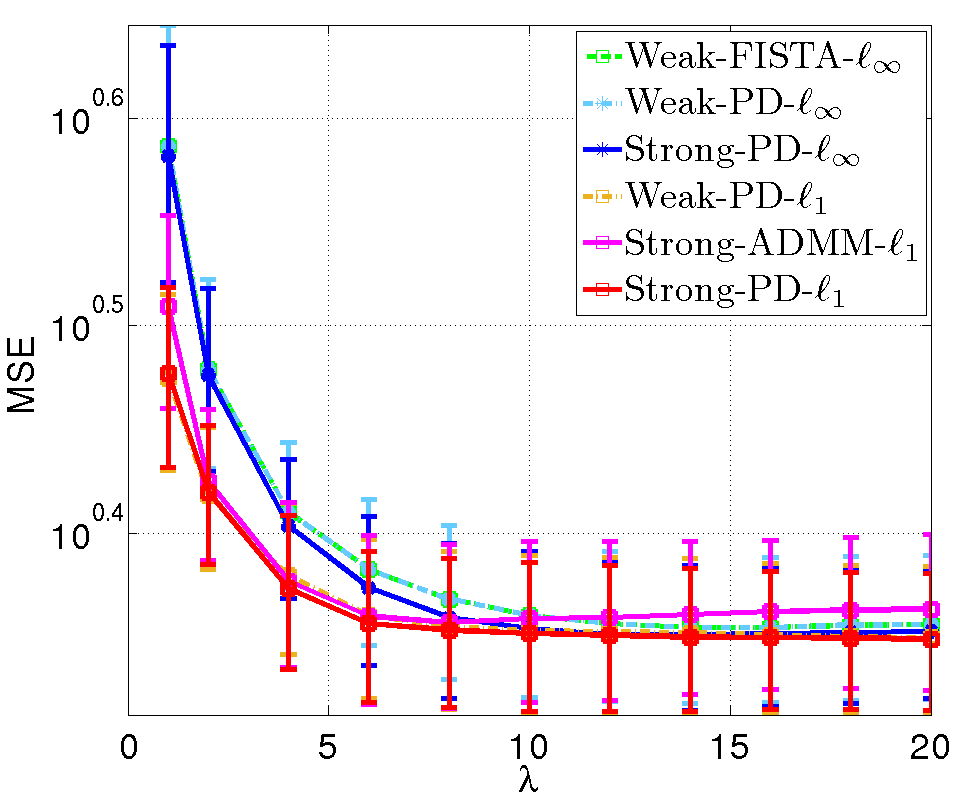} 
  \includegraphics[height=4cm]{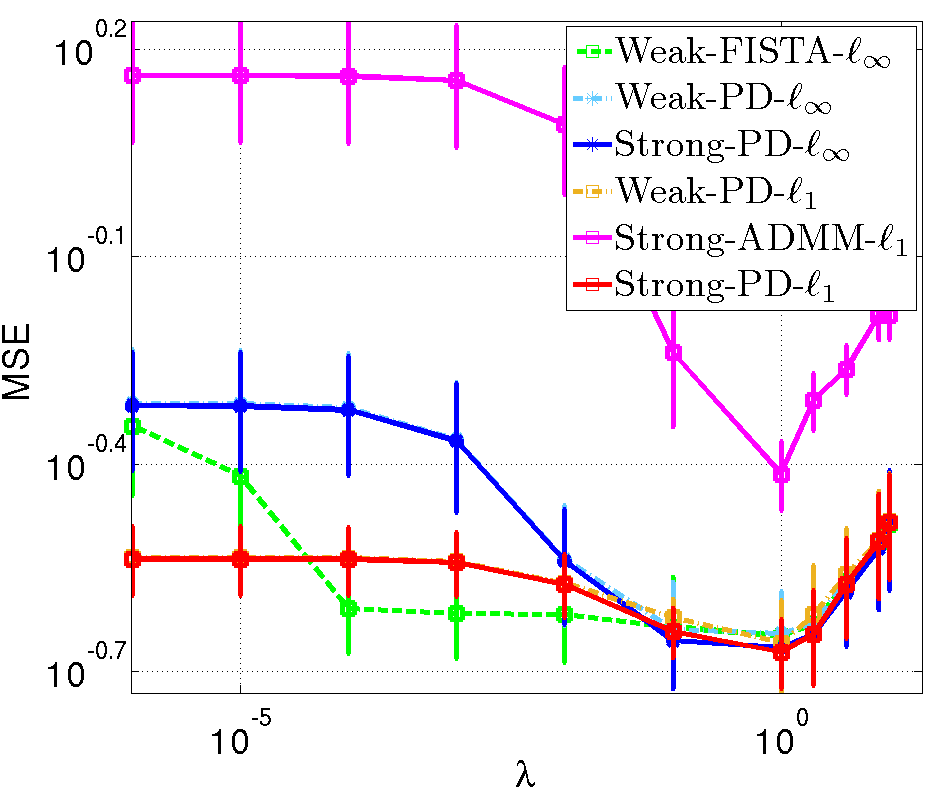} 
  \caption{Performance comparisons between ``Weak-FISTA-$\ell_\infty$'', ``Weak-PD-$\ell_\infty$'', ``Strong-PD-$\ell_\infty$, ``Weak-PD-$\ell_1$'',``Strong-ADMM-$\ell_1$ and ``Strong-PD-$\ell_1$ for different $\lambda$ values on the test set of HIV dataset when $N=24$ (left) and $N=240$ (right). \label{fig:hivresults}}
\end{figure}

\begin{figure*}
\centering
\begin{tabular}{ccc}
	\includegraphics[height=3cm, width=3.5cm]{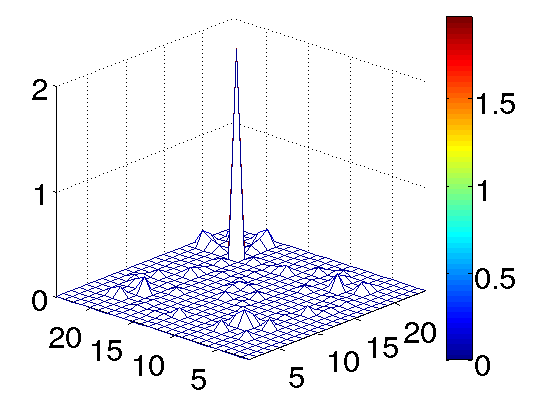} &
	\includegraphics[height=3cm, width=3.5cm]{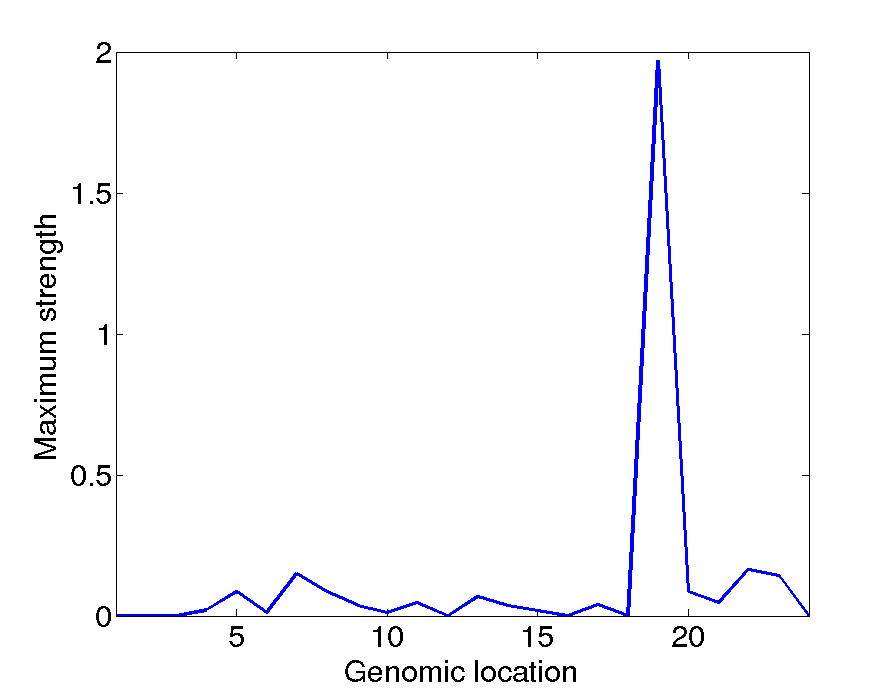} &
	\includegraphics[height=3cm, width=3.5cm]{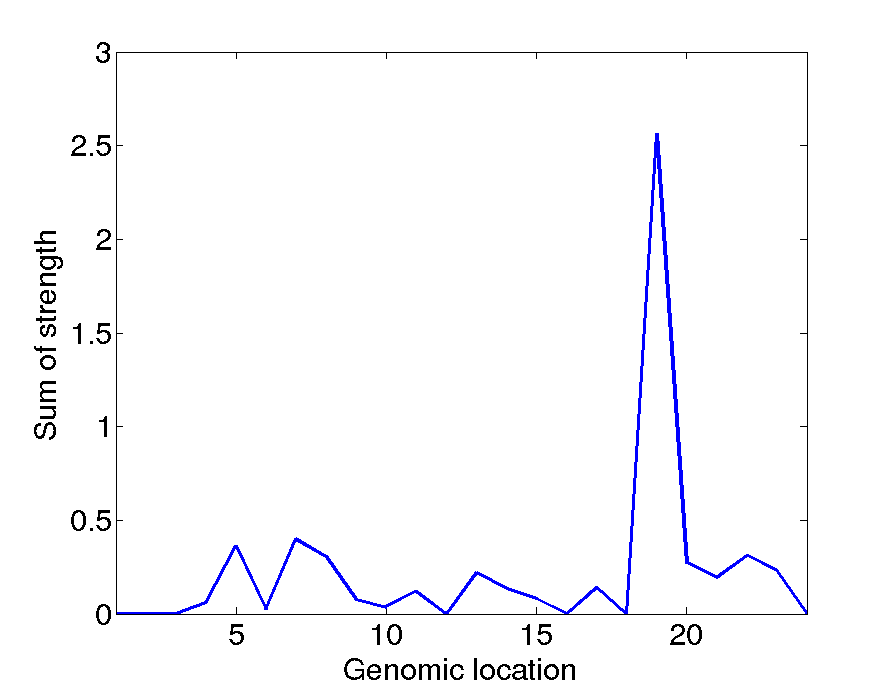} 
	\\	
	\includegraphics[height=3cm, width=3.5cm]{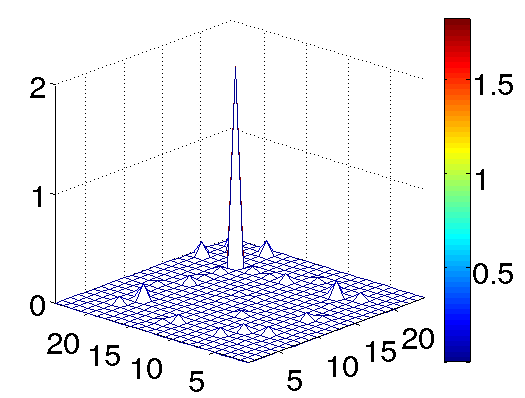}&
	\includegraphics[height=3cm, width=3.5cm]{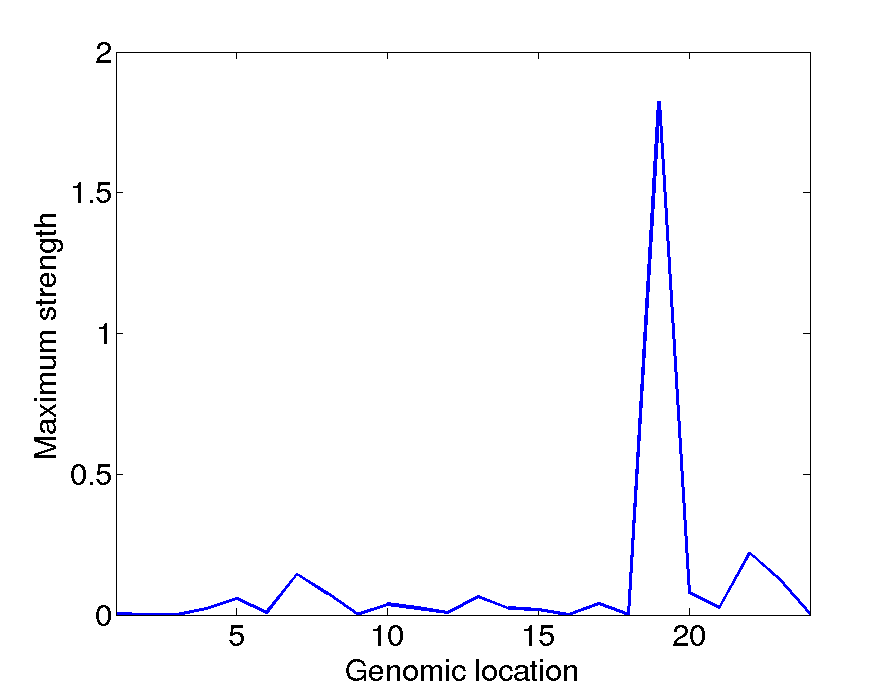}	&
	\includegraphics[height=3cm, width=3.5cm]{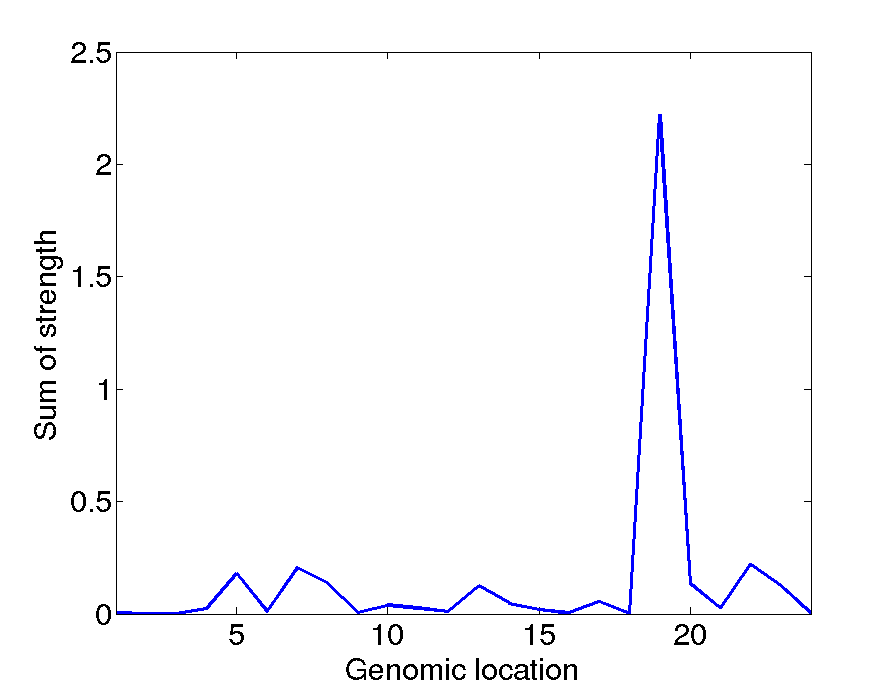}
\end{tabular}		
	\caption{The figures in the first row shows the interaction (left column), $\ell_\infty$-norm over each row of interactions (middle column), $\ell_1$-norm over each row of interaction (right column) when $N=24$, $r=\infty$ and $\lambda=12$. The figures in the second row shows the corresponding results when when $N=24$, $r=1$ and $\lambda=10$. \label{fig:hivresult-analysis}}
\end{figure*}

\begin{figure*}
\centering
	\includegraphics[height=3cm, width=3.5cm]{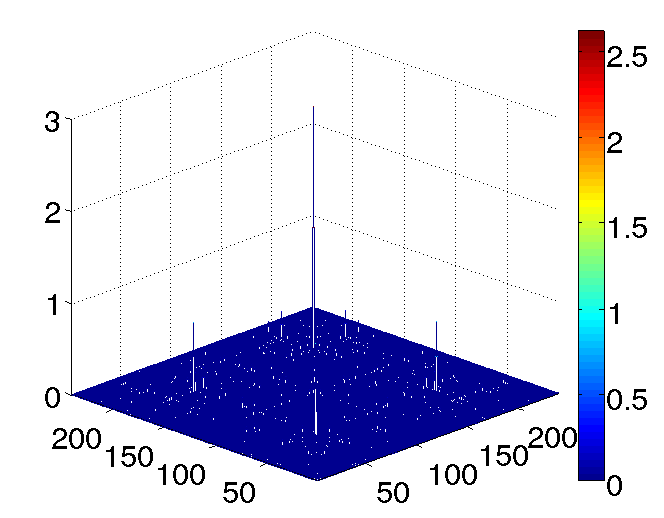} 
	\includegraphics[height=3cm, width=3.5cm]{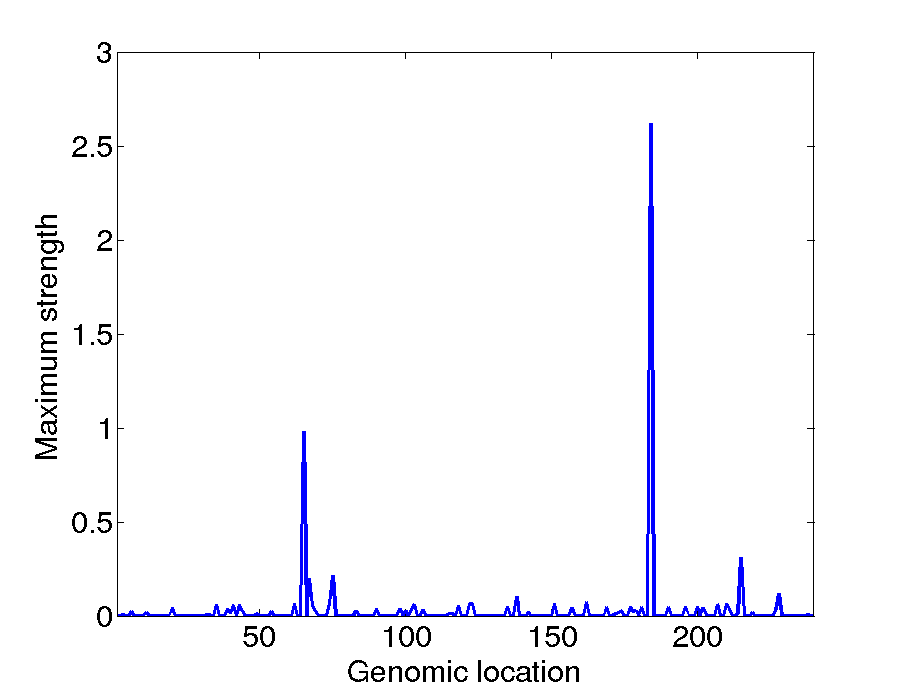} 
	\includegraphics[height=3cm, width=3.5cm]{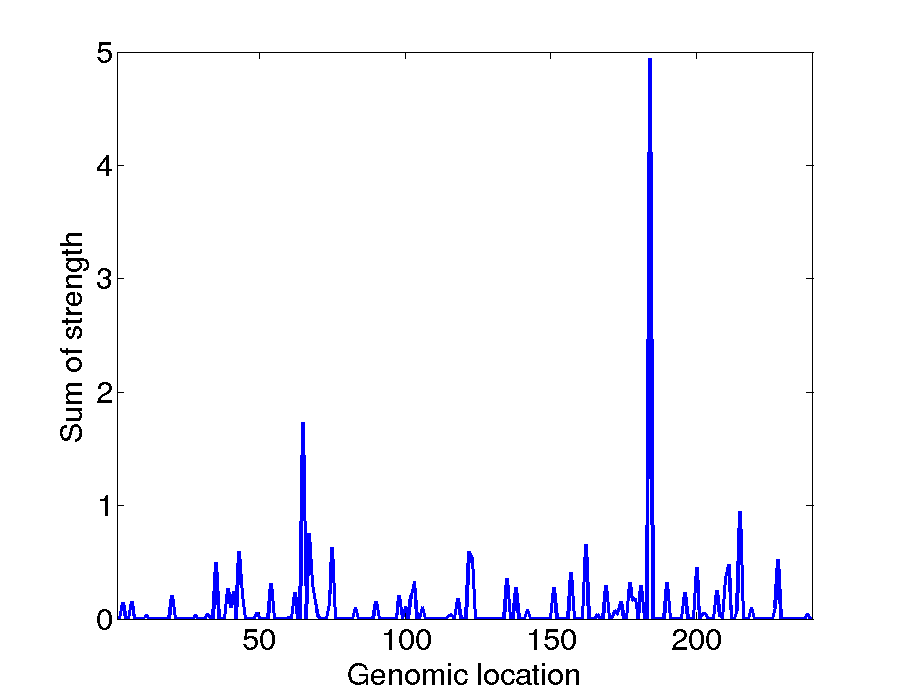} 
	\\	
	\includegraphics[height=3cm, width=3.5cm]{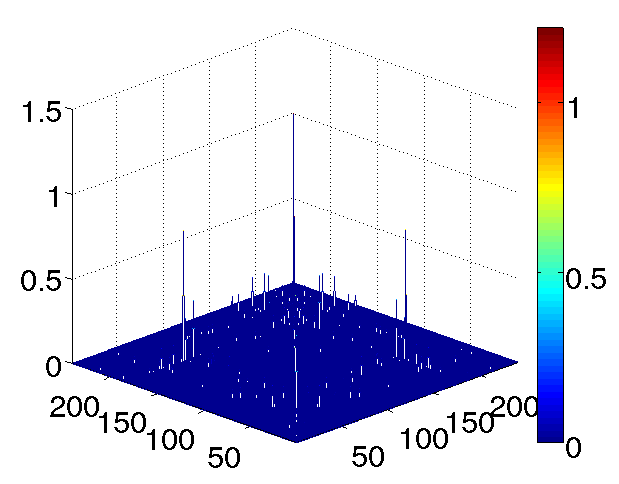}
	\includegraphics[height=3cm, width=3.5cm]{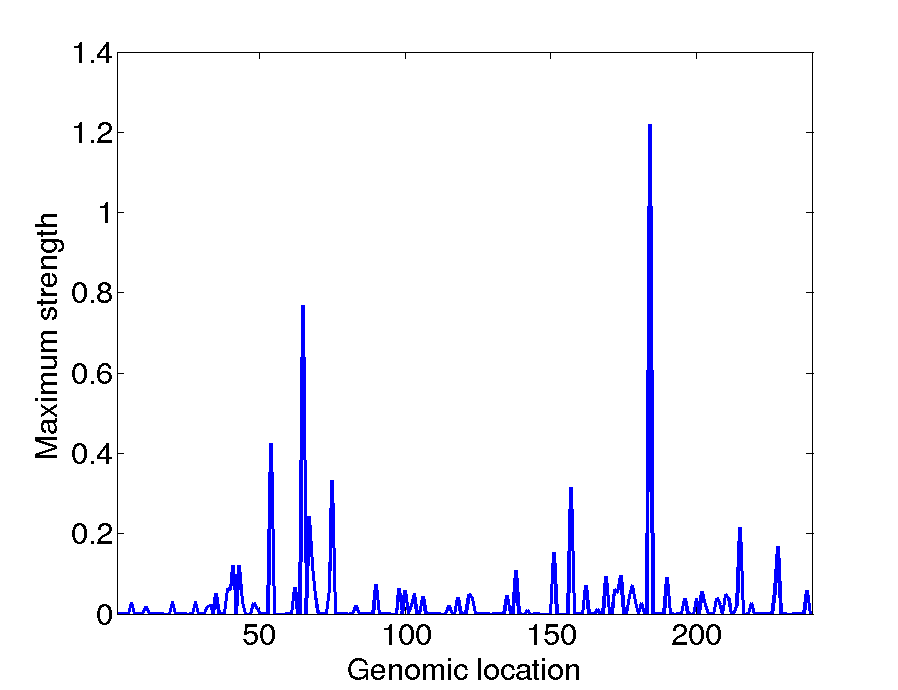}	
	\includegraphics[height=3cm, width=3.5cm]{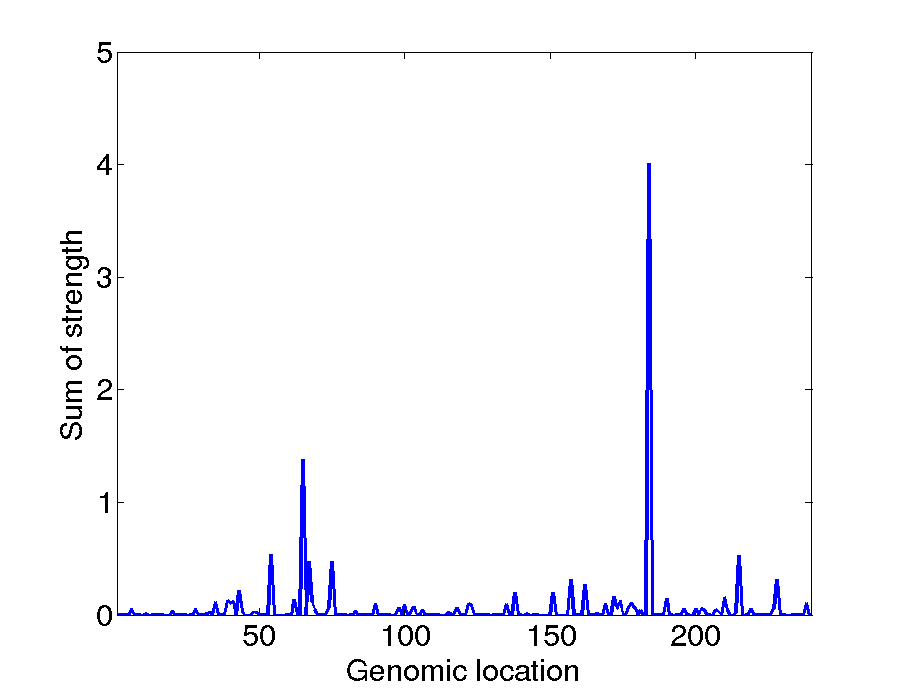}		
	\caption{The figures in the first row shows the interaction (left column), $\ell_\infty$-norm over each row of interactions (middle column), $\ell_1$-norm over each row of interactions (right column) when $N=240$, $r=\infty$ and $\lambda=1$. The figures in the second row shows the corresponding results when when $N=240$, $r=1$ and $\lambda=1$. \label{fig:hivresult-analysis-large}}
\end{figure*}

 {Fig.~\ref{fig:hivresults} (left) shows the performance of six algorithms on the reduced dataset.} It can be observed that:  {i) ``Weak-FISTA-$\ell_\infty$ has the same performance with ``Weak-PD-$\ell_\infty$ over different $\lambda$; ii) a slight better performance are obtained for ``Strong-PD-$\ell_1$'' compared to ``Strong-ADMM-$\ell_1$''; } iii) ``Strong-PD-$\ell_\infty$'' and ``Strong-PD-$\ell_1$'' are slightly better than their weaker counterparts, and the best $\lambda$ for ``Strong-PD-$\ell_\infty$'' and ``Strong-PD-$\ell_1$'' are 12 and 10 respectively. The interactions  {over one split} are visualized in the Fig.~\ref{fig:hivresult-analysis} (left). In order to better visualize the effect of the penalizations, we also plot the $\ell_\infty$-norm and $\ell_1$-norm over each row of $\Theta$ for strong hierarchy, as shown in the middle and right of Fig.~\ref{fig:hivresult-analysis}. Both have the strong effect for 19$^{th}$ feature, which is consistent with the results observed in \cite{Haris2014}. We also observe an interesting fact from the visualization of interactions, it can be seen that ``Strong-PD-$\ell_1$'' is able to learn a sparser interaction matrix than ``Strong-PD-$\ell_\infty$'', however, the maximum strength (i.e. $\ell_\infty$-norm) and the sum of the strength ($\ell_1$-norm) in Fig.~\ref{fig:hivresult-analysis} have similar distributions for both cases. 
The possible reason is that the extra interactions learned by ``Strong-PD-$\ell_\infty$'' are subtle, it can be also confirmed that both performances are very similar in the Fig.~\ref{fig:hivresults} (left).

\subsubsection{Full dataset} We further work on the full data without binning operation leading to $N=240$ features and each feature gives the indicator of mutation. The data splitting is the same as the one previously described. The average performance of MSE on the test set for six algorithms are shown in Fig.~\ref{fig:hivresults} (right). It is clear to demonstrate that:  {i) on large dataset, there are some performance gaps between FISTA, ADMM and their primal-dual counterparts for weak and strong hierarchy over different $\lambda$, especially ADMM deteriorates seriously and the performance becomes worse; ii) strong hierarchy still produces better performance than the weak one.} The learned interactions and their feature effects from ``Strong-PD-$\ell_\infty$'' and ``Strong-PD-$\ell_1$'' with best $\lambda=1$  {over one split} are shown in Fig.~\ref{fig:hivresult-analysis-large}. It is found that both algorithms can detect that 184$^{th}$ genes location has the most strong effect. From the maximum strength and sum of strength for each gene in the Fig.~\ref{fig:hivresult-analysis-large}, we can see the different properties for both algorithms. 
For ``Strong-PD-$\ell_\infty$'', maximum strength for each gene has a more sparse distribution than the sum of strength, which is consistent with its objective that the maximum of absolute values of $\Theta^{(i,:)}$ is not larger that $v^{(i)}$, whereas a more sparse distribution over the sum of strength is 
obtained for ``Strong-PD-$\ell_1$'', whose objective is to ensure that the sum of absolute values of $\Theta^{(i,:)}$ is not larger that $v^{(i)}$. From Fig.~\ref{fig:hivresults}(right), it also can be observed that ``Strong-PD-$\ell_1$'' is slightly better than ``Strong-PD-$\ell_\infty$'' because it imposes a stronger penalization for $\mathbf{\Theta}$ than ``Strong-PD-$\ell_\infty$''.

\subsection{Multiclass learning} \label{sec:experimentsclass}

In \cite{Jiumlsp2018}, we performed preliminary validation of the proposed method compared to state-of-the-art in the context of face classification. In this work, we tackle  Parkinson disease classification from the speech records of the persons. Thus, $\mathbf{x}_\ell$ models the speech data (1D) and $\phi(\mathbf{x}_\ell)$ the time-frequency based features. For all the experiments, we compare the performance in term of accuracy between:
\begin{itemize}
\item sparse multiclass SVM without interaction, i.e., $f_k(\mathbf{x}_\ell) =  \mathbf{v}_k^{\top} \phi(\mathbf{x}_\ell)$ and $\Omega(\mathbf{v}_k) =  \sum_k\Vert \mathbf{v}_k\Vert_1$,
\item sparse multiclass SVM with full interactions: $f_k(\mathbf{x}_\ell) = \phi(\mathbf{x}_\ell)^{\top} \mathbf{\Theta}_k \phi(\mathbf{x}_\ell) + \mathbf{v}_k^{\top} \phi(\mathbf{x}_\ell)$ and $\Omega(\mathbf{v}_k,  \mathbf{\Theta}_k )= \Vert \mathbf{v}_k \Vert_1 + \sum_i\Vert \mathbf{\Theta}_k^{(i,\cdot)} \Vert_1$,
\item proposed approach that is sparse multiclass SVM with strong hierarchy. 
\end{itemize}
The full interaction case differs from the proposed one in the sense that there is no coupling between $\mathbf{v}_k$ and $\Theta_k$ in Eq.~(\ref{eq:equivform}).\\

 {

\noindent \textbf{Dataset} -- We first show the Parkinson disease classification by sparse multiclass SVM with strong hierarchy. The experiments are conducted on the Parkinson Speech dataset~\cite{Sakar2013}, which is used to study the underlying relationship between the Parkinson disease and the types of voices of the patients. This dataset consists of training set and test set. For the training set, the data are collected from 20 Parkinson patients (6 female, 14 male) and 20 healthy individuals (10 female, 10 male), for each subject, 26 sound samples including sustained vowels, words, and short sentences are recorded, containing in total 1040 training samples. 26 type of time-frequency based features (such as jitter, shimmer, median pitch and number of pulses etc.) are extracted for each sound samples. For the test set, the data are only collected from another 28 Parkinson patients. They are asked to say particular sustained vowels (`a' and `o') three times and the same time-frequency based features are extracted, therefore it contains 168 samples. In our experiments, we only use the training set, because the test set does not contain negative samples. We randomly divide the training set to three subsets of equal size for model training, validation and test. The features are preprocessed by Gaussian normalization and $10^6$ number of iterations are adopted to guarantee convergence in the learning stage.
}

 {
\noindent \textbf{Performance} -- We first show the results of multiclass SVM without interaction in Tab.~\ref{tab:parkinson-l1-result}. The accuracy on the training, validation and test set with different $\lambda$ and also the non-zero rate of $\mathbf{v}$ are given. From the results, it is seen that a sparse solution is obtained with $\lambda=10^{-2}$, as well as it delivers the best test accuracy (63.98\%) on the test set.  
}

\begin{lrbox}{\results}
 \begin{tabular}{|c|c|c|c|c|c|c|} 
  \hline
    & \multicolumn{6}{c|}{$\lg (\lambda)$} \\
   \cline{2-7}
    & -1 & -2 & -3 & -4 & -5 & -6  \\
   \hline
    Train acc. & 62.72 & \textbf{65.90} & 68.50 & 69.36 & 69.08 & 69.08\\
    Val. acc. & 58.79 & \textbf{60.23} & 59.94 & 58.79 & 58.50 & 58.50\\   
    Test acc. & 62.54 & \textbf{63.98} & 62.82 & 61.96 & 61.96 & 61.96\\
    NZ rate in $\mathbf{v}$ & 23.08 & \textbf{65.38} & 92.31 & 92.31 & 92.31 & 100 \\  
   \hline
  \end{tabular}
\end{lrbox}

\begin{table}[h]
  \centering
  \scalebox{0.8}{\usebox{\results}} 
  \caption{The classification accuracy of multiclass SVM with $\ell_1$-norm (in \%) on Parkinson Speech dataset. \label{tab:parkinson-l1-result}}
 \end{table}
 \begin{figure}[h]
  \centering
  \includegraphics[scale=0.28]{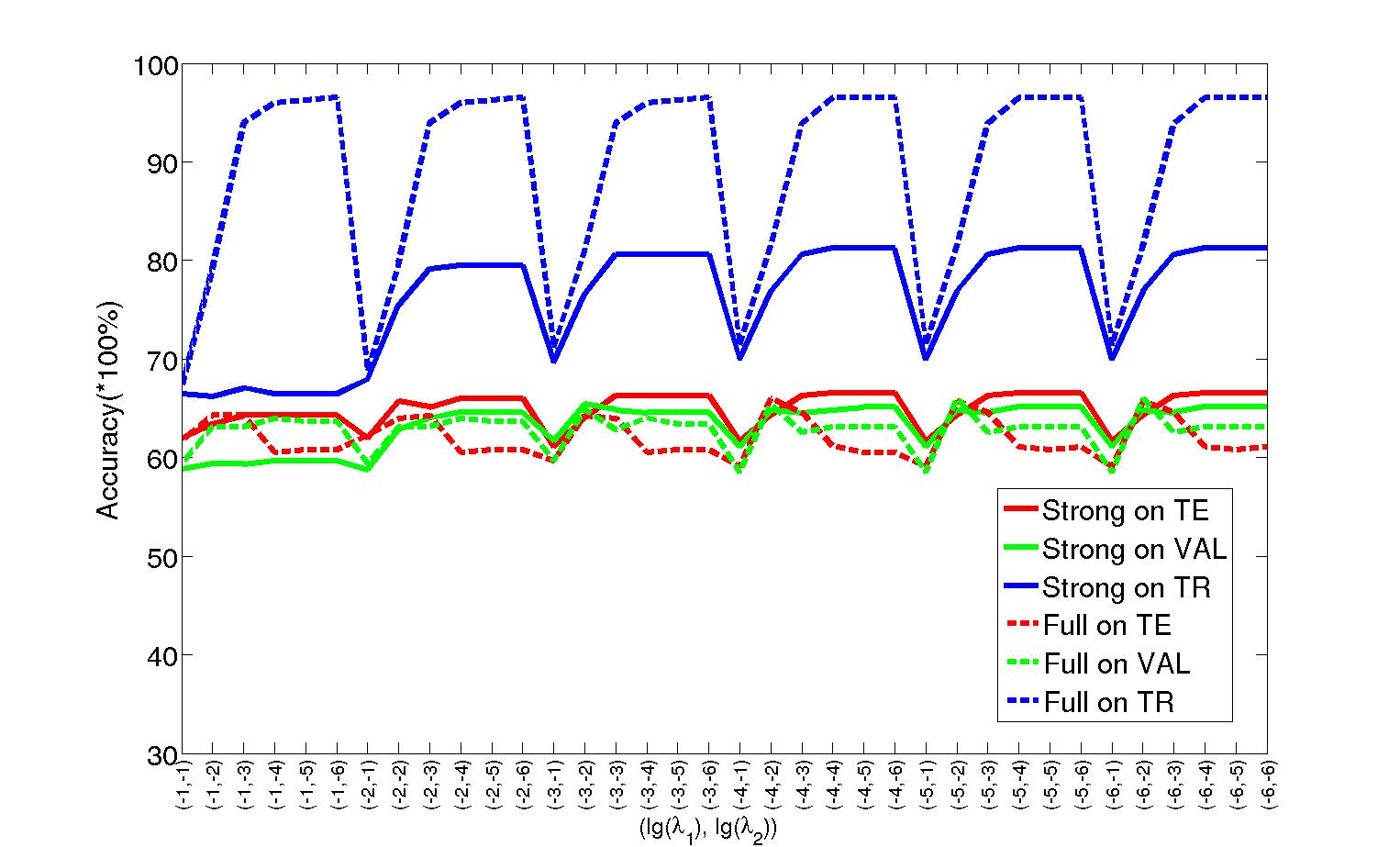}
  \caption{This figure shows classification accuracy comparison on the training, validation and test for multiclass SVMs with full interactions and strong hierarchy under different combinations of $\lambda_1$ and $\lambda_2$ on Parkinson Speech dataset. \label{fig:parkinson-intercation-results}} 
\end{figure}

 {
Next we show the results of multiclass SVMs with full interaction and strong hierarchy, taking into account the interactions of features. Fig.~\ref{fig:parkinson-intercation-results} shows the performance comparison curves under different combinations of $\lambda_1$ and $\lambda_2$ for multiclass SVMs with full interactions and strong hierarchy. It is observed that: 1) the accuracy with full interaction on the training set are much higher than multiclass SVMs with $\ell_1$-norm and strong hierarchy, but the accuracy on the validation and test set are not reached to as much as on the training set, the possible reason is that by combining interactions of features without further constraints, overfitting problem occurs when the learned weights (i.e.~$\mathbf{v}$ and $\mathbf{\Theta}$) are increased; 2) when strong hierarchy is regularized to $\mathbf{v}$ and $\mathbf{\Theta}$, the overfitting is alleviated, and the training accuracy are decreased, while the accuracy on the validation and test set are improved compared to the one with full interactions; 3) From Tab.~\ref{tab:parkinson-l1-result} and Fig.~\ref{fig:parkinson-intercation-results}, the best accuracy on the test set for multiclass SVMs with $\ell_1$-norm is 63.98\% ($\lambda=10^{-2}$), while the best accuracy for full interaction and strong hierarchy are 65.99\% ($\lambda_1=10^{-4}$, $\lambda_2=10^{-2}$) and 66.57\% ($\lambda_1=10^{-4}$, $\lambda_2=10^{-4}$) respectively, which validates that strong hierarchy can help to improve the classification performance.
} 

\vspace{-0.3cm}
\section{Conclusion} \label{sec:conclusion}
In this work, we propose a primal-dual proximal algorithm with epigraphical projection for regression and multiclass SVMs with strong hierarchy and apply it to several regression and classification task.  Four algorithms are derived (``Weak-PD-$\ell_\infty$'' , ``Weak-PD-$\ell_1$'' , ``Strong-PD-$\ell_\infty$'' , ``Strong-PD-$\ell_1$'') allowing to deal with and to  properly compare different configurations of the regularization term \eqref{equa:gencons}.  Compared to other state-of-the-art algorithms, for instance, FISTA or ADMM, the proposed algorithm is computationally more efficient (finding a solution belonging to $S$), and convergence in terms of iterates is faster. Compared to standard sparse learning strategies, we can integrate the main effects and interaction effects, and then obtain the underlying graphs between the features, as a result, the regularization with strong hierarchy make them more discriminative.

\section*{Appendix A: Proof of Proposition~\ref{prop:epi}} \label{appen:proposition1}

The proof follows arguments derived in \cite{Bien2013} for the specific case $r=1$.  Let $\mathbf{v}=\mathbf{v}^{+}-\mathbf{v}^{-}$, \eqref{eq:equivform} can be equivalently written as
\begin{align}
 \underset{\mathbf{v},\mathbf{v}^+, \mathbf{v}^-, \mathbf{\Theta}  }{\textrm{minimize}} & \; \frac{1}{2} \sum_{\ell=1}^L  \big( y_\ell  -   \phi^\top({\bf x}_\ell) \mathbf{v}- \phi^\top({\bf x}_\ell) \mathbf{\Theta} \phi({\bf x}_\ell) \big)^2 + \frac{\lambda}{2} \Vert  \mathbf{\Theta}  \Vert_1 +    \lambda 1^\top(2 \mathbf{v}^+ - \mathbf{v})  \nonumber\\
&\!\! \!\!\!\!\!\! \!\!\!\!\!\! \!\!\!\!\!\! \!\!\!\!\!\! \!\!\!\!\mbox{subj. to} \quad 
\begin{cases}
 \Vert \Theta^{(i,\cdot)}\Vert_{r} \leq  2 v^{+(i)} - v^{(i)}  \quad (\forall i\in \{1,\ldots,N\})\\
\mathbf{v}=\mathbf{v}^{+}-\mathbf{v}^{-}\\
(\mathbf{v}^+, \mathbf{v}^-,\Theta)\in O\times O \times C.
\end{cases}
\end{align}
The constraints involving $(\mathbf{v},\mathbf{v}^+, \mathbf{v}^-)$ can be reformulated as
\begin{equation} \nonumber
(\forall i\in \{1,\ldots,N\}) \quad  \begin{cases} v^{+(i)} \geq \frac{1}{2} (\Vert \Theta^{(i,\cdot)}\Vert_{r} + v^{(i)}) ,\\ 
v^{+(i)} \geq v^{(i)}  \\ 
v^{+(i)} \geq 0.\end{cases}
\end{equation}
yielding $v^{+(i)} \geq \max \{ \max\{0, v^{(i)}\}, \ \frac{1}{2} (\Vert \Theta^{(i),\cdot}\Vert_{r} + v^{(i)}) \}$. Using $\vert v^{(i)} \vert = 2\max\{0, v^{(i)}\} - v^{(i)}$, we get the expected result:
\begin{align}
 \underset{\mathbf{v},\mathbf{\Theta}  }{\textrm{minimize}}  & \; \frac{1}{2} \sum_{\ell=1}^L  (y_\ell  -   \phi^\top({\bf x}_\ell) \mathbf{v} - \phi^\top({\bf x}_\ell) \mathbf{\Theta} \phi({\bf x}_\ell))^2 \nonumber\\ & +  \frac{\lambda}{2} \Vert \mathbf{\Theta}  \Vert_1 + \lambda \sum_{i=1}^N \max \{  \vert v^{(i)} \vert,  \Vert \Theta^{(i,\cdot)} \Vert_{r} \} + \iota_C(\mathbf{\Theta} ). \label{equa:hier}
\end{align}

\section*{Appendix B: Proof of Proposition~\ref{propo:inftyprojection}} \label{appen:proposition2}

According to the definition of the epigraphical projection, we solve the following minimization problem:
\begin{multline}
P_{E_\infty}(\omega^+, \omega^-, \mathbf{u}) \\
\;\;\;\;=\underset{\eta^+, \eta^-\geq0}{\arg\min} \;(\eta^+ - \omega^+)^2 + (\eta^- - \omega^-)^2 
 + \!\!\!\min_{\substack{\vert p^{(1)} \vert < \eta^+ + \eta^-\\\cdots\\\vert p^{(N)} \vert < \eta^+ + \eta^- }} \Vert \mathbf{p} - \mathbf{u} \Vert^2\!\!\!   \label{equa:inftyepi1}
\end{multline}

\noindent The inner minimization yields a simple projection for $\mathbf{u}$ to $[-\eta^+ - \eta^-, \eta^+ + \eta^-]$, i.e. $\mathbf{p} = P_{[-\eta^+ - \eta^-,  \eta^+ +\eta^-]}(\mathbf{u})$, thus Eq.~(\ref{equa:inftyepi1}) becomes:
\begin{align}
P_{E_\infty}(\omega^+, \omega^-, \mathbf{u}) &=  \underset{\eta^+, \eta^-\geq0} {\arg\min} \Bigg\{(\eta^+ - \omega^+)^2 + (\eta^- - \omega^-)^2  \nonumber\\
& \quad+  \sum_{i=1}^N (\max \{\vert u^{(i)} \vert - \eta^+ - \eta^-, 0 \})^2\Bigg\}
\end{align}
and thus
\begin{equation}
P_{E_\infty}(\omega^+, \omega^-, \mathbf{u}) =  \textrm{prox}_{\phi+\iota_{O}} (\omega^+, \omega^-)  
\end{equation}
\noindent where $\phi(\omega^+, \omega^-) =  \sum_{i=1}^N (\max \{ \vert u^{(i)} \vert - \eta^+ - \eta^-, 0 \})^2$. We now focus on the computation of $\textrm{prox}_{\phi}(\omega^+, \omega^-)$. \\
\indent First, we sort $(\vert u^{(i)}\vert)_{1\leq i \leq N}$ in  ascending order to be $(\mathbf{\nu}^{(1)},\ldots,\mathbf{\nu}^{(N)})$, such that $\vert \mathbf{\nu}^{(1)} \vert \leq \ldots \leq \vert \mathbf{\nu}^{(N)} \vert$, and also $\mathbf{\nu}^{(0)}=-\infty$; and second $ \bar{n} \in [1, N]$ can be found such that $\mathbf{\nu}^{(\bar{n}-1)} \leq \eta^+ + \eta^-  \leq \mathbf{\nu}^{(\bar{n})}$, then $\phi(\omega^+, \omega^-) = \sum_{i=\bar{n}}^M (\eta^+ + \eta^- - \mathbf{\nu}^{(\bar{n})})^2$,  so the proximity operator of $\phi$ has:
\begin{equation}
\begin{cases}
\mathbf{\nu}^{(\bar{n}-1)} \leq \eta^+ + \eta^- \leq \mathbf{\nu}^{(\bar{n})}, \\
\omega^+ - \eta^+ = (N-\bar{n}+1)(\eta^+ + \eta^-) - \sum_{i=\bar{n}}^N \mathbf{\nu}^{(i)}, \\
\omega^- - \eta^- = \omega^+ - \eta^+,\\
\end{cases}
\end{equation}
and that leads to
\begin{equation}
\begin{cases}
 \eta^- = \frac{\omega^- - (N - \bar{n}+1)(\omega^+ - \omega^-) + \sum_{i=\bar{n}}^N  \mathbf{\nu}^{(i)}}{1+2(N-\bar{n}+1)},\\
 \eta^+ = \eta^- + \omega^+ - \omega^-. 
\end{cases}
\end{equation}

\section*{Appendix C: Proof of Proposition~\ref{propo:epil1preli}} \label{appen:proposition3}

The Lagrangian associated to the function involved in $P_{E_1^+}$ is:
\begin{multline}
 \mathcal{L}(\eta^+, \eta^-, \mathbf{p},\alpha,  \boldsymbol{\xi}) = \frac{1}{2}\Vert {\mathbf{p}} - \mathbf{u} \Vert^2 + \frac{1}{2}(\eta^+ - \omega^+)^2  \\
+ \frac{1}{2}(\eta^- - \omega^-)^2+ \alpha\big(\sum_{i=1}^N {{p}}^{(i)} - \eta^+ - \eta^-\big) -   \boldsymbol{\xi}^\top {\mathbf{p}}.
\end{multline}
The KKT conditions are:
\begin{equation} \label{equa:larl1norm}
 \begin{cases}
(\forall i \in \{1, \ldots, N \})\quad {p}^{(i)} - {u}^{(i)} + \alpha - \mathbf{\xi}_i= 0  \\
\eta^+ - \omega^+ - \alpha = 0 \\
\eta^- - \omega^- - \alpha = 0\\
(\forall i \in \{1, \ldots, N \})\quad \xi^{(i)} {p}^{(i)}  = 0 \\
\sum_{i=1}^N {{p}}^{(i)} - \eta^+ - \eta^- = 0. \\
  \end{cases}
\end{equation}
From the fourth condition in Eq.~\eqref{equa:larl1norm}, we know that if ${p}^{(i)} >0$ then $\xi^{(i)} =0$ and $p^{(i)} = {u}^{(i)} - \alpha$. Thus, if we denote $\boldsymbol{\pi} = (\mathbf{\pi}^{(i)} )_{1\leq i \leq N}$ is an ordered version of $( p^{(i)})_{1\leq n \leq N}$ in a decreasing order, we can write
\begin{equation}
 \sum_{i=1}^N p^{(i)} =   \sum_{i=1}^N \pi^{(i)} =   \sum_{i=1}^{\widetilde{n}} \pi^{(i)} =  \sum_{i=1}^{\widetilde{n}} (\mathbf{\mu}^{(i)} - \alpha)  =\eta^+ + \eta^-.
\end{equation}
From similar arguments as in \cite{Duchi2008}[Lemma 2] , $\widetilde{n}$ is computed as \eqref{eq:ntilde} and thus 
\begin{equation}
  \alpha = \frac{1}{\widetilde{n}} \Big(\sum_{i=1}^{\widetilde{n}} \mathbf{\mu}^{(i)} -(\eta^+ + \eta^-)\Big).
\label{eq:alpha}
\end{equation}

From the second and third equality of the KKT conditions, we have $\eta^+ = \eta^- + \omega^+ - \omega^-$. Finally, combining  \eqref{eq:alpha} with $\eta^- = \omega^- + \alpha$, yields
\begin{equation}
 \eta^- = \frac{1}{\widetilde{n}(1+2/\widetilde{n})} \Big( \sum_{i=1}^{\widetilde{n}} \mathbf{\mu}^{(i)} - \omega^+ + (\widetilde{n}+1) \omega^-\Big).
\end{equation}

\begin{acknowledgements}
The authors would like to thank Marie-France Benassy, Mickael Bonnard, Laurent Grosset, Georges Oppenheim, Maxime Durot, Leire Oro-Urea from TOTAL for helpful discussion, Prof. Mark Asch from Universit\'{e} de Picardie Jules Verne for the comments. 
\end{acknowledgements}


\begin{thebibliography}{45}
\providecommand{\natexlab}[1]{#1}
\providecommand{\url}[1]{{#1}}
\providecommand{\urlprefix}{URL }
\expandafter\ifx\csname urlstyle\endcsname\relax
  \providecommand{\doi}[1]{DOI~\discretionary{}{}{}#1}\else
  \providecommand{\doi}{DOI~\discretionary{}{}{}\begingroup
  \urlstyle{rm}\Url}\fi
\providecommand{\eprint}[2][]{\url{#2}}

\bibitem[{Agarwal et~al(2014)Agarwal, Beygelzimer, Hsu, Langford, and
  Telgarsky}]{Agarwal_A_2014}
Agarwal A, Beygelzimer A, Hsu D, Langford J, Telgarsky M (2014) Scalable
  nonlinear learning with adaptive polynomial expansions. Advances in Neural
  Information Processing Systems 3:2051--2059

\bibitem[{Anderson(1975)}]{Anderson1975}
Anderson JA (1975) Quadratic logistic discrimination. Biometrika 62:149---154

\bibitem[{Bach et~al(2012{\natexlab{a}})Bach, Jenatton, Mairal, and
  Obozinski}]{Bach_F_2012_j-ftml_optimization_sip}
Bach F, Jenatton R, Mairal J, Obozinski G (2012{\natexlab{a}}) Optimization
  with sparsity-inducing penalties. Foundations and Trends in Machine Learning
  4(1):1--106

\bibitem[{Bach et~al(2012{\natexlab{b}})Bach, Jenatton, Mairal, and
  Obozinski}]{Bach2012a}
Bach F, Jenatton R, Mairal J, Obozinski G (2012{\natexlab{b}}) Structured
  sparsity through convex optimization. Statistical Science 27(4):450--468

\bibitem[{Bauschke and Combettes(2011)}]{BauschkeCombettes2011}
Bauschke H, Combettes P (2011) Convex analysis and monotone operator theory in
  hilbert spaces. Springer, New York

\bibitem[{Beck and Teboulle(2009)}]{Beck_A_2009_j-siam-is_fast_istalip}
Beck A, Teboulle M (2009) A fast iterative shrinkage-thresholding algorithm for
  linear inverse problems. SIAM Journal on Imaging Sciences 2(1):183--202

\bibitem[{Bien et~al(2013)Bien, Taylor, and Tibshirani}]{Bien2013}
Bien J, Taylor J, Tibshirani R (2013) {A lasso for hierarchical interactions}.
  Annals of Statistics 41(3):1111--1141

\bibitem[{Blondel et~al(2013)Blondel, Seki, and Uehara}]{Blondel2013}
Blondel M, Seki K, Uehara K (2013) {Block coordinate descent algorithms for
  large-scale sparse multiclass classification}. Machine Learning 93(1):31--52

\bibitem[{Bruna and Mallat(2013)}]{Bruna_J_2013}
Bruna J, Mallat S (2013) Invariant scattering convolution networks. IEEE
  Transactions on Pattern Analysis and Machine Intelligence 35(8):1872--1886

\bibitem[{Chakraborty and Pal(2015)}]{Chakraborty2015}
Chakraborty R, Pal NR (2015) {Feature selection using a neural framework with
  controlled redundancy}. IEEE Transactions on Neural Networks and Learning
  Systems 26(1):35--50

\bibitem[{Chambolle and Dossal(2015)}]{Chambolle_A_2015_jota_con_ifa}
Chambolle A, Dossal C (2015) On the convergence of the iterates of the ``{F}ast
  {I}terative {S}hrinkage/{T}hresholding {A}lgorithm''. Journal of Optimization
  Theory and Applications 166(3):968---982

\bibitem[{Chambolle and Pock(2011)}]{Chambolle2011a}
Chambolle A, Pock T (2011) {A first-order primal-dual algorithm for convex
  problems with applications to imaging}. {Journal of Mathematical Imaging and
  Vision} 40(1):120--145

\bibitem[{Chang and Lin(2011)}]{Chang2011}
Chang CC, Lin CJ (2011) Libsvm: A library for support vector machines. ACM
  Transactions on Intelligent Systems and Technology 2:1--27

\bibitem[{Chaux et~al(2009)Chaux, Pesquet, and
  Pustelnik}]{Chaux_C_2009_siims_Nested_iafccirp}
Chaux C, Pesquet JC, Pustelnik N (2009) Nested iterative algorithms for convex
  constrained image recovery problem. SIAM Journal on Imaging Sciences
  2(2):730--762

\bibitem[{Chierchia et~al(2015{\natexlab{a}})Chierchia, Pustelnik, Pesquet, and
  Pesquet-Popescu}]{Chierchia2015}
Chierchia G, Pustelnik N, Pesquet JC, Pesquet-Popescu B (2015{\natexlab{a}})
  Epigraphical projection and proximal tools for solving constrained convex
  optimization problems. Signal, Image and Video Processing 9(8):1737--1749

\bibitem[{Chierchia et~al(2015{\natexlab{b}})Chierchia, Pustelnik, Pesquet, and
  Pesquet{-}Popescu}]{Chierchia2015_a}
Chierchia G, Pustelnik N, Pesquet JC, Pesquet{-}Popescu B (2015{\natexlab{b}})
  A proximal approach for sparse multiclass {SVM}. CoRR abs/1501.03669,
  \urlprefix\url{http://arxiv.org/abs/1501.03669}, \eprint{1501.03669}

\bibitem[{Combettes and Pesquet(2011)}]{Combettes2009}
Combettes P, Pesquet JC (2011) {Proximal splitting methods in signal
  processing}. Fixed-Point Algorithms for Inverse Problems in Science and
  Engineering 49:185--212

\bibitem[{Condat(2013)}]{Condat_L_2012}
Condat L (2013) A primal-dual splitting method for convex optimization
  involving {L}ipschitzian, proximable and linear composite terms. Journal of
  Optimization Theory and Applications 158(2):460--479

\bibitem[{Duchi et~al(2008)Duchi, S.-S, Singer, and Chandra}]{Duchi2008}
Duchi JC, S-S S Shai, Singer Y, Chandra T (2008) Efficient projections onto the
  l1-ball for learning in high dimensions. In: ICML, ACM International
  Conference Proceeding Series, vol 307, pp 272--279

\bibitem[{Flandrin(1999)}]{Flandrin_P_1999}
Flandrin P (1999) Time-frequency/time-scale analysis. Academic Press

\bibitem[{Gui et~al(2016)Gui, Sun, Ji, Member, Tao, and Tan}]{Gui2016}
Gui J, Sun Z, Ji S, Member S, Tao D, Tan T (2016) {Feature selection based on
  structured sparsity : a comprehensive study}. IEEE Transactions on Neural
  Networks and Learning Systems 28(7):1--18

\bibitem[{Hao et~al(2018)Hao, Feng, and Zhang}]{HaoFengZhang2018}
Hao N, Feng Y, Zhang HH (2018) Model selection for high-dimensional quadratic
  regression via regularization. Journal of the American Statistical
  Association 113(522):615--625

\bibitem[{Haris et~al(2014)Haris, Witten, and Simon}]{Haris2014}
Haris A, Witten D, Simon N (2014) {Convex modeling of interactions with strong
  heredity}. Journal of Computational and Graphical Statistics pp 1--31

\bibitem[{Jenatton et~al(2011)Jenatton, Mairal, Obozinski, and
  Bach}]{Jenatton2011b}
Jenatton R, Mairal J, Obozinski G, Bach F (2011) Proximal methods for
  hierarchical sparse coding. Journal of Machine Learning Research
  12:2297--2334

\bibitem[{Jiu et~al(2018)Jiu, Pustelnik, and Qi}]{Jiumlsp2018}
Jiu M, Pustelnik N, Qi L (2018) Multiclass {SVM} with hierarchical interaction:
  application to face classification. 26th {IEEE} International Workshop on
  Machine Learning for Signal Processing pp 1--6

\bibitem[{Komodakis and Pesquet(2015)}]{Komodakis_N_2015}
Komodakis N, Pesquet JC (2015) Playing with duality: an overview of recent
  primal-dual approaches for solving large-scale optimization problems. IEEE
  Signal Processing Magazine 32(6):31--54

\bibitem[{Laporte et~al(2014)Laporte, Flamary, Canu, Dejean, and
  Mothe}]{Laporte2014}
Laporte L, Flamary R, Canu S, Dejean S, Mothe J (2014) {Nonconvex
  regularizations for feature selection in ranking with sparse SVM}. IEEE
  Transactions on Neural Networks and Learning Systems 25(6):1118--1130,
  \doi{10.1109/TNNLS.2013.2286696}

\bibitem[{LeCun and Bengio(1995)}]{LeCun_Y_1995}
LeCun Y, Bengio Y (1995) Convolutional networks for images, speech, and time
  series. The handbook of brain theory and neural networks 3361(10)

\bibitem[{Lim and Hastie(2015)}]{Lim2015}
Lim M, Hastie T (2015) {Learning interactions through hierarchical group-lasso
  regularization}. Journal of Computational and Graphical Statistics
  24:627--654

\bibitem[{Pascal et~al({2018})Pascal, Pustelnik, Abry, Serres, and
  Vidal}]{PASCAL:2018:A}
Pascal B, Pustelnik N, Abry P, Serres M, Vidal V ({2018}) {Joint Estimation Of
  Local Variance And Local Regularity For Texture Segmentation. Application To
  Multiphase Flow Characterization}. In: { 25th IEEE International Conference
  On Image Processing (ICIP)}, pp {2092--2096}

\bibitem[{Pirayre et~al(2015)Pirayre, Couprie, Bidard, Duval, and
  Pesquet}]{Pirayre_A_2015_j-bmb-bioinformatics_bra_cbr}
Pirayre A, Couprie C, Bidard F, Duval L, Pesquet JC (2015) {BRANE C}ut:
  Biologically-related apriori network enhancement with graph cuts for gene
  regulatory network inference. BMC Bioinformatics 16(1)

\bibitem[{Rhee et~al(2006)Rhee, Taylor, Wadhera, B-H, Brutlag, and
  Shafer}]{Rhee14112006}
Rhee SY, Taylor J, Wadhera G, B-H A, Brutlag DL, Shafer RW (2006) Genotypic
  predictors of human immunodeficiency virus type 1 drug resistance.
  Proceedings of the National Academy of Sciences 103(46):17,355--17,360

\bibitem[{Ronald et~al(1978)Ronald, James, John, and Robert}]{Randles1978}
Ronald HR, James DB, John SR, Robert VH (1978) Generalized linear and quadratic
  discriminant functions using robust estimates. Journal of the American
  Statistical Association 73:564---568

\bibitem[{Sakar et~al(2013)Sakar, Isenkul, Sakar, Sertbas, Gurgen, Delil,
  Apaydin, and Kursun}]{Sakar2013}
Sakar E, Isenkul B, Sakar M, Sertbas C, Gurgen A, Delil F, Apaydin S, Kursun H
  (2013) Collection and analysis of a parkinson speech dataset with multiple
  types of sound recordings. IEEE Journal of Biomedical and Health Informatics
  17(4):828--834

\bibitem[{Schmidt et~al(2011)Schmidt, Roux, and Bach}]{Schmidt2011Convergence}
Schmidt M, Roux NL, Bach F (2011) Convergence rates of inexact
  proximal-gradient methods for convex optimization. Advances in Neural
  Information Processing Systems 24:1458--1466

\bibitem[{Setzer(2009)}]{Setzer_S_2009_ssvm_spl_bad}
Setzer S (2009) Split Bregman algorithm, Douglas-Rachford splitting and frame
  shrinkage, vol 5567, chap Scale Space and Variational Methods in Computer
  Vision. SSVM 2009. Lecture Notes in Computer Science, pp 464--476

\bibitem[{She and Jiang(2016)}]{She_Y_2016_j-am-stat-ass_gro_res}
She Y, Jiang H (2016) Group regularized estimation under structural hierarchy.
  Journal of the American Statistical Association
  \doi{10.1080/01621459.2016.1260470}

\bibitem[{Spilka et~al({2017})Spilka, Frecon, Leonarduzzi, Pustelnik, Abry, and
  Doret}]{SPILKA:2017:A}
Spilka J, Frecon J, Leonarduzzi R, Pustelnik N, Abry P, Doret M ({2017})
  {Sparse Support Vector Machine for Intrapartum Fetal Heart Rate
  Classification}. {IEEE Journal Of Biomedical And Health Informatics}
  {21}({3}):{664--671}, \doi{{10.1109/JBHI.2016.2546312}}

\bibitem[{Tibshirani(1994)}]{Tibshirani1996}
Tibshirani R (1994) {Regression shrinkage and selection via the lasso}. Journal
  of the Royal Statistical Society, Series B 58:267--288

\bibitem[{V\~{u}(2013)}]{Vu2013}
V\~{u} BC (2013) A splitting algorithm for dual monotone inclusions involving
  cocoercive operators. Advances in Computational Mathematics 38(3):667--681

\bibitem[{Weston et~al(2003)Weston, Elisseeff, Scholkopf, and
  Tipping}]{Weston2003}
Weston J, Elisseeff A, Scholkopf B, Tipping M (2003) {Use of the zero-norm with
  linear models and kernel methods}. Journal of Machine Learning Research
  3:1439--1461

\bibitem[{Witten and Tibshirani(2009)}]{Witten2009}
Witten DM, Tibshirani R (2009) Covariance-regularized regression and
  classification for high dimensional problems. Journal of the Royal
  Statistical Society Series B: Statistical Methodology 71(3):615--636

\bibitem[{Xu et~al(2017)Xu, Tang, He, and Man}]{Xu2017}
Xu J, Tang B, He H, Man H (2017) {Semisupervised feature selection based on
  relevance and redundancy criteria}. IEEE Transactions on Neural Networks and
  Learning Systems 28(9):1974--1984, \doi{10.1109/TNNLS.2016.2562670}

\bibitem[{Zhao et~al(2009)Zhao, Rocha, and Yu}]{Zhao_P_2009_j-ann-stat_com_apf}
Zhao P, Rocha G, Yu B (2009) The composite absolute penalties family for
  groupes and hierarchical variable selection. The Annals of Statistics
  37(6A):3468---3497

\bibitem[{Zou and Yuan(2008)}]{Zou2008}
Zou H, Yuan M (2008) {The F$\infty$-norm support vector machine}. Statistica
  Sinica 18:379--398

\end{thebibliography}
\end{document}